\documentclass{article}

\usepackage{iclr2025_conference}

\usepackage[utf8]{inputenc} % allow utf-8 input
\usepackage[T1]{fontenc}    % use 8-bit T1 fonts
% TOC for document parts, must be included before hyperref!
% (https://latex.org/forum/viewtopic.php?t=2014)
\usepackage{titletoc}
\usepackage[backref=page]{hyperref}      % hyperlinks
\usepackage{url}            % simple URL typesetting
\usepackage{booktabs}       % professional-quality tables
\usepackage{amsfonts}       % blackboard math symbols
\usepackage{nicefrac}       % compact symbols for 1/2, etc.
\usepackage{microtype}      % microtypography
\usepackage{xcolor}         % colors
\usepackage{macros}
\usepackage{subcaption}
\usepackage{graphicx}
\usepackage{multirow}
\usepackage{array}
\usepackage{algorithm}
\usepackage[noend]{algpseudocode}
\usepackage{natbib} 
\bibliographystyle{apalike}
\usepackage{anyfontsize}
\usepackage{wrapfig}
\usepackage{comment}
\usepackage[makeroom]{cancel}
\usepackage{bbm}
\usepackage[capitalize,noabbrev]{cleveref}
\usepackage[noabbrev]{cleveref}
\usepackage{tikz}
\usetikzlibrary{bayesnet}
\usetikzlibrary{backgrounds}
\usetikzlibrary{calc}
\usepackage{caption}
\usepackage{subcaption}
\usepackage[group-separator={,}]{siunitx} % formats numbers with comma using \num{}
\usepackage{placeins}

% page break inside align
\allowdisplaybreaks

\usepackage{xcolor}
\definecolor{arrowlightgrey}{rgb}{0.70,0.70,0.70}
\definecolor{lightgrey}{rgb}{0.43,0.43,0.43}
\definecolor{orange}{rgb}{1,0.49,0.05}
\definecolor{green}{rgb}{0.17,0.63,0.17}

\definecolor{matplotlibblack}{RGB}{42,42,42}

\hypersetup{
    colorlinks,
    anchorcolor={blue!50!black},
    linkcolor={blue!50!black},
    citecolor={blue!50!black},
    urlcolor={blue!50!black},
}

\newcommand{\vsection}[1]{\vspace{-1pt}\section{#1}\vspace{-2pt}}
\newcommand{\vsubsection}[1]{\vspace{-1pt}\subsection{#1}\vspace{-2pt}}
\newcommand{\vparagraph}[1]{\vspace{-3pt}\paragraph{#1}}

\newcommand{\vspacefiguretop}{\vspace{-10pt}}
\newcommand{\vspacefigurecaptionbottom}{\vspace{-3pt}}

\newcommand{\rebuttal}[1]{\textcolor{black}{#1\xspace}}

\title{Standardizing Structural Causal Models}

% The \author macro works with any number of authors. There are two commands
% used to separate the names and addresses of multiple authors: \And and \AND.
%
% Using \And between authors leaves it to LaTeX to determine where to break the
% lines. Using \AND forces a line break at that point. So, if LaTeX puts 3 of 4
% authors names on the first line, and the last on the second line, try using
% \AND instead of \And before the third author name.

\author{%
  Weronika Ormaniec%
  \thanks{Equal contribution.}\\
  % \thanks{Equal contribution. ${}^\dag$ETH Z{\"u}rich, Z{\"u}rich,  Switzerland. ${}^\ddag$MPI for Intelligent Systems, T{\"u}bingen, Germany.}\\
  ETH Z{\"u}rich, Switzerland\\
  % Z{\"u}rich, Switzerland\\
  % \texttt{wormaniec@student.ethz.ch}
  \texttt{wormaniec@ethz.ch}
  \And
  Scott Sussex${}^*$\\
  ETH Z{\"u}rich, Switzerland\\
  % Z{\"u}rich, Switzerland\\
  \texttt{ssussex@ethz.ch}
  \And 
  Lars Lorch${}^*$\\
  ETH Z{\"u}rich, Switzerland\\
  % Z{\"u}rich, Switzerland\\
  \texttt{llorch@ethz.ch} 
  \AND
  Bernhard Sch{\"o}lkopf\\
  MPI for Intelligent Systems, T{\"u}bingen, Germany\\
  % T{\"u}bingen, Germany\\
  \texttt{bs@tuebingen.mpg.de}
  \And
  Andreas Krause\\
  ETH Z{\"u}rich, Switzerland\\
  % Z{\"u}rich, Switzerland\\
  \texttt{krausea@ethz.ch}
}

% \author{%
%   Weronika Ormaniec\thanks{Equal contribution.}\\
%   ETH Z{\"u}rich\\
%   Z{\"u}rich, Switzerland\\
%   \texttt{wormaniec@student.ethz.ch}
%   \texttt{wormaniec@ethz.ch}
%   \And
%   Scott Sussex${}^*$\\
%   ETH Z{\"u}rich\\
%   Z{\"u}rich, Switzerland\\
%   \texttt{ssussex@ethz.ch}
%   \And 
%   Lars Lorch${}^*$\\
%   ETH Z{\"u}rich\\
%   Z{\"u}rich, Switzerland\\
%   \texttt{llorch@ethz.ch} 
%   \AND
%   Bernhard Sch{\"o}lkopf\\
%   MPI for Intelligent Systems\\
%   T{\"u}bingen, Germany\\
%   \texttt{bs@tuebingen.mpg.de}
%   \And
%   Andreas Krause\\
%   ETH Z{\"u}rich\\
%   Z{\"u}rich, Switzerland\\
%   \texttt{krausea@ethz.ch}
% }

\iclrfinalcopy
\begin{document}

\maketitle

% Loose-ends and TO-DOs: \href{https://docs.google.com/document/d/1LYSqVaC8f4eefs5CdKDDqJMNhCdKbUcQsMThjauYlq4/edit?usp=sharing}{Google doc}
\vspace{-8pt}
\begin{abstract}
\vspace{-5pt}
Synthetic datasets generated by structural causal models (SCMs) are commonly used for benchmarking causal structure learning algorithms. However, the variances and pairwise correlations in SCM data tend to increase along the causal ordering. Several popular algorithms exploit these \artefacts, possibly leading to conclusions that do not generalize to real-world settings. Existing metrics like $\varop$-sortability and \rtwo-sortability quantify these patterns, but they do not provide tools to remedy them. To address this, we propose {\em internally-standardized structural causal models (\ourss)}, a modification of SCMs that introduces a standardization operation at each variable during the generative process. By construction, \ourss are not $\varop$-sortable\rebuttal{. We also find empirical evidence that they are mostly not} \rtwo-sortable for commonly-used graph families. Moreover, contrary to the post-hoc standardization of data generated by standard SCMs, we prove that linear \ourss are less identifiable from prior knowledge on the weights and do not collapse to deterministic relationships in large systems, which may make \ourss a useful model in causal inference beyond the benchmarking problem studied here.  
Our code is publicly available at: \href{https://github.com/werkaaa/iscm}{https://github.com/werkaaa/iscm}.
\end{abstract}

\vspace{-12pt}
\vsection{Introduction}\label{sec:introduction}
\vspace{-2.5pt}
Predicting the effects of interventions and policy decisions requires reasoning about causality.
Consequently, scientific fields ranging from biology and earth sciences to economics and statistics are interested in modeling causal structure
\citep{pearl2009causality,maathuis2010predicting,imbens2015causal,runge2019inferring}.
A wide array of causal discovery algorithms has been proposed with the goal of inferring causal structure from data (e.g., \citealp{Squires_2022,vowels2022d}).
However, benchmarking these algorithms is challenging, since real-world datasets with an agreed-upon, ground-truth causal structure are rare 
(e.g., \citealp{sachs2005causal}; see \citealp{mooij2020joint}).
The community predominantly relies on synthetic data for evaluating structure learning algorithms, where observations are generated according to a predetermined causal structure and system mechanisms. 
The inferred causal structures can then be directly compared to the ground truth. To generate synthetic data, it is common practice to sample from structural causal models with additive noise (SCMs) \citep{reisach2021beware}. Unless stated otherwise, this work considers SCMs in which the variance scale of the additive noise is the same for all variables, a typical simplification made in benchmarking. 

Under common benchmarking practices, synthetic datasets generated by SCMs contain patterns that are directly exploitable to make structure discovery easier.
We will refer to such patterns as {\em \artefacts}.
In SCMs, the pairwise correlations between variables tend to increase along the causal ordering, since variance builds up downstream and, as a result, the proportion of the variance driven by the additive noise vanishes (\cref{fig:figure1-standardized-SCM}).
\citet{reisach2024scale} characterize this phenomenon through an increase of the coefficients of determination (\rtwo) of the variables regressed on all others.
Crucially, this \artefact occurs both in the raw data and when shifting and scaling (standardizing) the variables to have zero mean and unit variance.
One of the implications is that downstream causal dependencies in SCMs become effectively deterministic, especially in large-scale systems.
As \citet{reisach2024scale} demonstrate, simple causal discovery baselines can perform competitively on benchmarks of this kind by directly exploiting this phenomenon. 
This makes SCMs \rebuttal{alone} in their general definition possibly \rebuttal{insufficient} for benchmarking.
Ultimately, evaluating on synthetic data with these patterns could lead to conclusions that do not generalize \rebuttal{as expected} to real-world scenarios. 

In this work, we propose a simple modification of SCMs that stabilizes the data-generating process and thereby removes exploitable covariance \artefacts.
Our models, denoted {\em internally-standardized SCMs (\ourss)}, introduce a standardization operation at each variable during the generative process (\cref{fig:figure1-iSCM}). 
In \cref{sec:theory}, we provide a theoretical motivation for this idea by studying linear \ourss. 
We prove that, contrary to SCMs, the causal dependencies of \ourss under mild assumptions never collapse to deterministic mechanisms as the graph size becomes large.
Moreover, we formalize the correlation \artefact commonly observed in benchmarks by proving that linear SCM structures in a Markov equivalence class (MEC) are partially identifiable for certain graph classes,
given weak prior knowledge on the weight distribution of the ground-truth SCM. 
Most importantly, we show that this is not the case for the corresponding \ourss. 
In \cref{sec:experiments}, we empirically demonstrate that the baselines proposed in \citet{reisach2021beware,reisach2024scale} are unable to exploit covariance \artefacts in \ourss, 
while practical classes of causal discovery algorithms are still able to learn causal structures in both linear and nonlinear systems.
Our findings reveal that SCM \artefacts affect structure learning both positively and negatively, 
% suggesting that generating (standardized) data from standard SCMs may be particularly ill-suited for benchmarking common approaches in use today.
\rebuttal{making \ourss a practical tool, alongside SCMs, for disentangling the drivers of causal discovery performance of different algorithms in practice.}

\begin{figure}[t]
\vspacefiguretop
\centering
\vspace{-5pt}
\vspace{-3pt}
\begin{subfigure}[c]{0.44\textwidth}
    \centering
    \hspace*{6pt}
    \begin{tikzpicture}[x=0.55cm,y=0.55cm]
    
    \node[latent] (x1) {$x_1$};
    \node[obs, below=of x1] (x1s) {$x_1^s$};
    \node[latent, right=of x1] (x2) {$x_2$};
    \node[obs, below=of x2] (x2s) {$x_2^s$};
    \node[const, right=of x2]  (dummy) {$~~\dots~~$};
    \node[const, right=of x2s]  (x3s) {$~~\dots~~$};
    \node[latent, right=of dummy] (x9) {$x_9$};
    \node[obs, below=of x9] (x9s) {$x_9^s$};
    \node[latent, right=of x9] (x10) {$x_{10}$};
    \node[obs, below=of x10] (x10s) {$x_{10}^s$};
    
    \edge [dashed, dash pattern=on 2pt off 2pt] {x1} {x1s} ; 
    \edge {x1} {x2} ; 
    \edge [dashed, dash pattern=on 2pt off 2pt] {x2} {x2s} ; 
    \edge {x2} {dummy} ; 
    \edge {dummy} {x9} ; 
    \edge [dashed, dash pattern=on 2pt off 2pt] {x9} {x9s} ; 
    \edge {x9} {x10} ; 
    \edge [dashed, dash pattern=on 2pt off 2pt] {x10} {x10s} ; 

    \path (x1s) -- (x2s) node[midway,below,yshift=-0.8cm] (corr12) {$0.75$};
    \path (x2s) -- (x3s) node[midway,below,yshift=-0.8cm] (corr23) {$0.86$};
    \path (x3s) -- (x9s) node[midway,below,yshift=-0.8cm] (corr39) {$0.98$};
    \path (x9s) -- (x10s) node[midway,below,yshift=-0.8cm] (corr910) {$0.98$};

    \node[left=of corr12, yshift=-0.00cm, xshift=0.8cm] {\hspace*{-20pt}${\lvert \rho\rvert}$:};

    \end{tikzpicture}
    \vspace{-6pt}
    \caption{Standardized SCM}\label{fig:figure1-standardized-SCM}
\end{subfigure}
\hfill
\begin{subfigure}[c]{0.44\textwidth}
    \centering
    \hspace*{5pt}
    \begin{tikzpicture}[x=0.55cm,y=0.55cm]
    
    \node[latent] (x1) {$x_1$};
    \node[obs, below=of x1] (x1s) {$\as{x}_1$};
    \node[latent, right=of x1] (x2) {$x_2$};
    \node[obs, below=of x2] (x2s) {$\as{x}_2$};
    \node[const, right=of x2]  (dummy) {$~~\dots~~$};
    \node[const, below=of dummy]  (dummycenter) {\hphantom{$~~\dots~~$}};
    \node[const, left=of dummycenter, xshift=0.4cm]  (dummyleft) {$~$};
    \node[const, right=of dummycenter, xshift=-0.4cm]  (dummyright) {$~$};
    \node[const, right=of x2s]  (x3s) {$~~\dots~~$};
    \node[latent, right=of dummy] (x9) {$x_9$};
    \node[obs, below=of x9] (x9s) {$\as{x}_9$};
    \node[latent, right=of x9] (x10) {$x_{10}$};
    \node[obs, below=of x10] (x10s) {$\as{x}_{10}$};
    
    \edge [dashed, dash pattern=on 2pt off 2pt] {x1} {x1s} ; 
    \edge {x1s} {x2} ; 
    \edge [dashed, dash pattern=on 2pt off 2pt] {x2} {x2s} ; 
    \edge {x2s} {dummyleft} ; 
    \edge {dummyright} {x9} ; 
    \edge [dashed, dash pattern=on 2pt off 2pt] {x9} {x9s} ; 
    \edge {x9s} {x10} ; 
    \edge [dashed, dash pattern=on 2pt off 2pt] {x10} {x10s} ; 

    \path (x1s) -- (x2s) node[midway,below,yshift=-0.8cm] (corr12) {$0.75$};
    \path (x2s) -- (x3s) node[midway,below,yshift=-0.8cm] (corr23) {$0.75$};
    \path (x3s) -- (x9s) node[midway,below,yshift=-0.8cm] (corr39) {$0.75$};
    \path (x9s) -- (x10s) node[midway,below,yshift=-0.8cm] (corr910) {$0.75$};

    \node[left=of corr12, yshift=-0.00cm, xshift=0.8cm] {\hspace*{-20pt}${\lvert \rho\rvert}$:};

    \end{tikzpicture}
    \vspace{-6pt}
    \caption{iSCM}\label{fig:figure1-iSCM}
\end{subfigure}
\vspace{1pt}
\caption{ %\looseness-1
\textbf{Standardizing SCMs two ways.} Generative process for a chain graph of (a) standard SCMs, with data $\bf{x}$ standardized post-hoc, and (b) SCMs with standardization performed during the generative process (iSCMs).
Dashed arrows indicate z-standardization.
Solid arrows indicate linear functions with weights from $\smash{\unif_{\pm}[0.5, 2.0]}$ and additive noise from $\smash{\mathcal{N}(0, 1)}$.
We report absolute correlations $\smash{\lvert\rho\rvert}$ of two consecutive observed variables, (a) $\s{x_j}$ and $\s{x_{j+1}}$, or (b) $\smash{\as{x}_{j}}$ and $\smash{\as{x}_{j+1}}$, averaged over \num{100000} models.
In standard SCMs (a), correlations tend to increase along the causal ordering.
\vspacefigurecaptionbottom
\vspace{-3pt}
}
\label{fig:figure1}
\end{figure}
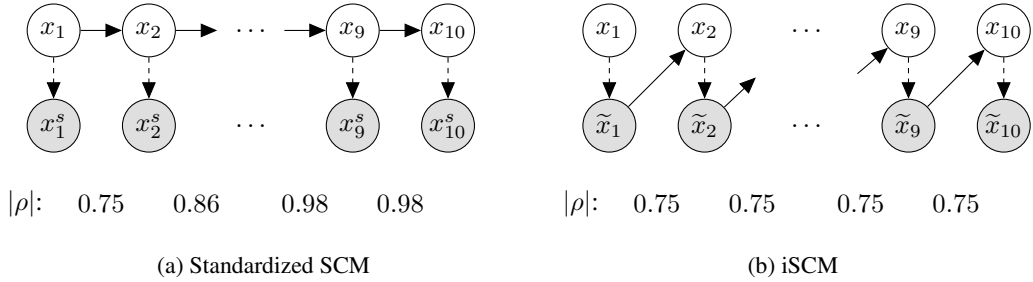

\vsection{Background and Related Work}\label{sec:background}
We begin by introducing structural causal models and the problem of causal structure learning, before discussing how synthetic data is often generated for evaluating structure learning algorithms. 
We then review existing works that study identifiability and patterns frequently present in synthetic data.

\vparagraph{Structural causal models}
A structural causal model (SCM) \citep{peters2017elements}
of $\numvars$ variables $\mathbf{x} = \{x_1, \dots, x_\numvars\}$ consists of a collection of structural assignments, each given by 
\begin{equation}
\label{eq:scm}
\tag{SCM}
    x_i := f_i(\pa{i}, \noise{i}) \, ,
\end{equation}
where $\pa{i} \subseteq \mathbf{x} \setminus \{x_i\}$ are called the {\em parents} of $x_i$.
Here, $f_i$ are arbitrary functions, and $\noise{i}$ are independent random variables that model exogenous noise (or unexplained variation).
Together, they entail a joint probability distribution $p(\mathbf{x})$ over the variables $\bf{x}$.
It is common to consider SCMs with additive noise, e.g., with linear functions $f_i$, as given by
\begin{equation}
\label{eq:linear_scm}
   f_i(\pa{i}, \noise{i}) = \mathbf{w}_i^\top \pa{i} + \noise{i} \, ,
\end{equation}
\rebuttal{where $w_{i,j} \in \RR$ denotes the weight from $j \in \paset{i}$ to $i$.}
% with $\mathbf{w}_i \in \RR^{\lvert \paset{i} \rvert}$.
The structural assignments in \eqref{eq:scm} induce a causal graph $\mathcal{G} = (\nodeset, \edgeset)$ over the variables $x_i$, which is assumed to be acyclic.
Specifically, the directed acyclic graph (DAG)  $\mathcal{G}$ has vertices $v_i \in \nodeset$ for every $x_i \in \mathbf{x}$ and a directed edge $(i,j) \in \edgeset$ if $x_i \in \pa{j}$.
We will explicitly distinguish this DAG $\mathcal{G}$ and its vertices $\nodeset$ from the variables $\mathbf{x}$.
The {\em skeleton} of $\mathcal{G}$ denotes $\mathcal{G}$ with all edges undirected.
If the skeleton of $\mathcal{G}$ is acyclic, we call  $\mathcal{G}$ a {\em forest}.

\vparagraph{Structure learning and benchmarking}
Given a set of \iid~observations from the probability distribution $p(\mathbf{x})$ induced by an unknown SCM, causal structure learning aims to infer the causal graph $\mathcal{G}$ underlying the SCM. 
In this work, we focus on structure learning from observational data and only consider SCMs with no latent confounders. 
% \looseness-1
Because it is difficult to obtain the true $\mathcal{G}$ for many real-world datasets, it is common to evaluate structure learning algorithms on synthetic data where $\mathcal{G}$ is known. A ubiquitous approach is to sample a DAG $\mathcal{G}$, then SCM functions defined over $\mathcal{G}$, 
% (\eg, $\mathbf{w}_i$ in \cref{eq:linear_scm}), 
and finally a dataset from this SCM, with the goal of later recovering $\mathcal{G}$ from the data.
It is common to consider $\noise{i}$ with mean \num{0} and fixed variance (often \num{1}), and for linear systems, to sample each $w_{i,j}$ uniformly and \iid with support bounded away from \num{0} \citep{shimizu2011directlingam,peters2014identifiability,zheng2018dags,yu2019dag,lachapelle2020gradient,zheng2020learning,ng2020role,reisach2021beware,lorch2022amortized,reisach2024scale}. 
%The simplicity of this benchmarking approach allows for conveniently studying how algorithm performance varies as a function of key system parameters such as number of nodes and graph sparsity. 
There exist alternative benchmarking strategies with domain-specific simulators \citep{schaffter2011genenetweaver,dibaeinia2020sergio}.

\vparagraph{Data standardization and \artefacts of SCMs}
% \looseness-1
Previous work shows that generating data as described above can lead to strong \artefacts.
\citet{reisach2021beware} observe that the variance of variables tends to increase along the topological ordering of $\mathcal{G}$. 
This leads to the \varsortregress baseline, which sorts variables based on their empirical variance and then performs sparse regression to infer $\mathcal{G}$.
\citet{seng2024learning} show that structure learning algorithms minimizing an MSE-based loss \citep[\eg,][]{zheng2018dags} can identify $\mathcal{G}$ under similar conditions.  
Therefore, \citet{reisach2021beware} propose using standardization (\cref{fig:figure1-standardized-SCM}) to remove this variance \artefact from benchmarks.
Specifically, they first sample all $x_i$ according to a standard \ref{eq:scm} and then {\em post-hoc} transform the variables as 
\begin{align}
\label{eq:standardized_scm}
    \hspace*{81pt} % moving equation approximately into center due to \tag offset
    \s{x_i} := \frac{x_i - \Expempty[x_i]}{\sqrt{\var{x_i}}} \, ,
\tag{Standardized SCM}
\end{align}
\looseness-1
such that our observations correspond to samples from $p(\s{\bf{x}})$.
Standardization, however, only removes the variance \artefact.
Even in standardized SCMs,
the fraction of a variable's variance that is explained by all others, measured by the coefficient of determination \rtwo, tends to increase along the topological ordering \citep{reisach2024scale}. 
\rtwosortregress exploits this correlation \artefact analogously to \varsortregress.
Existing heuristics aiming to avoid the variance accumulation adjust the sampling process of $f_i$,
but they ultimately limit the causal dependencies that can be modeled, e.g., 
to certain levels of correlations among the observed $\mathbf{x}$ 
\citep{mooij2020joint} 
or a constant proportion of variance explained by the parents $\pa{i}$  
\citep{squires2022causal} and fail to induce data free from both \artefacts
(Appendix \ref{sec:background-heuristics}).
To our knowledge, there are currently no general methods for generating SCM data without strong correlation \artefacts or significant limitations on the functions $f_i$ and noise $\noise{i}$.

\vparagraph{Identifiability}
Given a class of SCMs, there may be several SCMs with different causal graphs $\mathcal{G}$ that entail the same distribution $p(\mathbf{x})$ \citep{peters2017elements}. 
Thus, even with infinite observations from $p(\mathbf{x})$, we may be unable to identify the causal graph $\mathcal{G}$ that generated the observations.
However, some identifiability results are known depending on the class of functions and noise distributions of the SCMs considered. 
For example, among all linear SCMs \eqref{eq:linear_scm} with Gaussian noise $\noise{i} \sim \mathcal{N}(0, \sigma^2_i)$, 
the graph $\mathcal{G}$ can only be uniquely identified up to its MEC \citep{verma2013equivalence}.
However, if the noise \rebuttal{is Gaussian with} equal variances $\sigma^2_i = \sigma^2$ \citep{peters2014identifiability} or the noise is non-Gaussian \citep{shimizu2006linear}, $\mathcal{G}$ can be uniquely identified given $p(\bf{x})$.

% \looseness-1
% It is important to recognize that existing identifiability results \rebuttal{for linear Gaussian systems} only concern the {\em unstandardized} distributions $p(\bf{x})$ of SCMs.
% When we standardize the data and observe $p(\s{\bf{x}})$ instead, existing results no longer apply, because the {\em implied} SCM after standardization may violate the properties of the original SCM (e.g., its noise variances).
In this work, we present, to our knowledge, the first (partial) identifiability result for {\em standardized} SCMs \rebuttal{in the linear Gaussian case}. 
\rebuttal{Since standardization affects the {\em implied} noise scales, 
existing linear Gaussian identification results, which rely on $\sigma^2_i = \sigma^2$,
no longer hold when observing $p(\s{\bf{x}})$}.
\rebuttal{Other identifiability results, e.g., based on non-Gaussian noise, do continue to hold for standardized SCMs \citep[e.g.,][]{shimizu2006linear}.}  
Our result concerns a setting with prior knowledge on the magnitudes of $\textbf{w}$ in \cref{eq:linear_scm}, an assumption underlying common benchmarking practices. \rebuttal{Under this setup, we show a stark difference in the identifiability of standardized SCMs and the iSCMs we propose, which provides a novel explanation for what we empirically observe in benchmarks.}

\vsection{SCMs with Internal Standardization}\label{sec:iscm}

\vsubsection{Definition}

\label{sec:def_ours}
We propose {\em internally-standardized SCMs (iSCMs)} as a modification to the standard data-generating process of SCMs. 
An \ours 
$(\mathbf{S},\distnoisejoint)$ 
consists of $\numvars$ pairs of assignments,
where for each $i \in \{1, \dots, \numvars\}$, 
\begin{align}
\label{eq:iscm}
    x_i := f_i(\paours{i}, \noise{i})
    ~~~~\text{and}~~~~
    \as{x}_i := \frac{x_i - \Expempty[x_i]}{\sqrt{\var{x_i}}}
\tag{iSCM}
\end{align}
with parents $\smash{\paours{i} \subseteq \as{\bf{x}} \setminus \{\as{x}_i\}}$ of $\as{x}_i$ in the underlying DAG. 
\rebuttal{In the above, $f_i$ are general functions, and}
the exogenous noise variables $\Noise = [\noise{1}, ..., \noise{\numvars}] \sim \distnoisejoint$ are jointly independent, \rebuttal{as for SCMs}.
The variables $x_i$ are latent, and the variables $\smash{\as{x}_i}$ are observed. 
Figure \ref{fig:iscm-causal-mechanism} illustrates the generative process.
Algorithm \ref{alg:iscm} summarizes how to sample from \eqref{eq:iscm}.
If computing the population expectations and variances of $x_i$ is intractable, the empirical statistics obtained from $\numsamples$ samples can be used for standardization at each loop iteration of Algorithm \ref{alg:iscm}.

\vparagraph{Motivation}
By construction, \ourss model observed variables with zero mean and unit marginal variance.
Contrary to standard SCMs, \ourss avoid the accumulation of variance downstream in the causal ordering that can occur in standard SCMs (see Figure \ref{fig:figure1}) through the standardization operation.
Because each variable $x_i$ only depends on the standardized variables $\paours{i}$, the relative scales of the noise distribution $\distnoise{i}$ and the causal mechanisms $f_i$ are the same everywhere in the system and do not change, for example, downstream in the causal ordering.
The causal mechansims of \ourss are thus 
{\em scale-free}, 
in that the local interaction of mechanism $f_i$ and noise $\noise{i}$ occurs at a scale independent of the position of $x_i$ in the global ordering.
This property makes \ourss particularly useful for benchmarking, where random ground-truth models are commonly generated from a fixed distribution over functions $f_i$ and noise $\noise{i}$.
Contrary to existing heuristics (Section \ref{sec:background}), \ourss model arbitrarily strong or weak causal dependencies and levels of cause-explained variance.

\begin{figure}[t]
    \vspacefiguretop
    \vspace*{-10pt}
    \centering
    \begin{minipage}[t]{0.49\textwidth}
        \vspace{0pt} 
        \centering
            \centering
        \begin{tikzpicture}[x=0.55cm,y=0.55cm]
        
        % Define a style for grey edges
        \tikzset{grey edge/.style={draw=arrowlightgrey}}
        
        \node[latent] (xi) {$x_i$};     
        \node[obs, right=of xi, xshift=0.13cm] (chi) {${\widetilde{x}_{i}}$};
        
        \node[const, left=of xi, xshift=-0.2cm] (pai) {$~~\smash{\raisebox{-1ex}{\vdots}}~~~~$};
        \node[obs, above=of pai, xshift=0.35cm] (pa1) {${\widetilde{x}_{j}}$};
        \node[obs, below=of pai, xshift=0.35cm] (pa2) {${\widetilde{x}_{k}}$};
    
        \node[const, right=of chi, yshift=0.25cm] (chichi1) {$\hphantom{~~\dots~~}$};
        \node[const, right=of chi, yshift=-0.25cm] (chichi2) {$\hphantom{~~\dots~~}$};
    
        \node[const, left=of pa1, xshift=-0.2cm] (pa1pa) {$\hphantom{~~\dots~~}$};
        \node[const, left=of pa2, xshift=-0.2cm] (pa2pa) {$\hphantom{~~\dots~~}$};
    
        \edge {pa1} {xi} ; 
        \edge {pai} {xi} ; 
        \edge {pa2} {xi} ; 
        \edge [dashed, dash pattern=on 2pt off 2pt] {xi} {chi} ; 
    
        \edge [grey edge]  {chi} {chichi1} ; 
        \edge [grey edge]  {chi} {chichi2} ; 
        \edge [dashed, dash pattern=on 2pt off 2pt, grey edge]  {pa1pa} {pa1} ; 
        \edge [dashed, dash pattern=on 2pt off 2pt, grey edge]  {pa2pa} {pa2} ; 
        
        \node[below=1.95cm of pa1, xshift=0.60cm] (underbrace) {$\underbrace{\hspace{5.2em}}_{\displaystyle f_i}$};
    
        \end{tikzpicture}
        \vspace{-3pt}
        \caption{\textbf{Causal mechanisms in iSCMs.} The function $f_i$ modeling $x_i$ depends on the standardized $\smash{\paours{i}}$. Dashing indicates z-standardization.}
        \label{fig:iscm-causal-mechanism}
      \end{minipage}%
      \hfill % Add space between the two minipages
      \begin{minipage}[t]{0.47\textwidth}
        \vspace{-13pt} 
        \centering
        \input{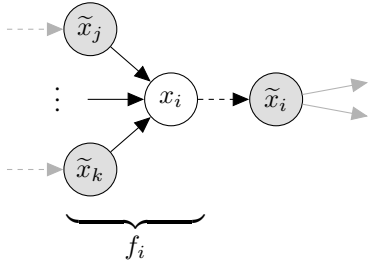}
      \end{minipage}
    \vspace*{5pt}
    \vspacefigurecaptionbottom
\end{figure}

\vparagraph{Interventions}

Analogous to standard SCMs,
interventions in \ourss can be defined as modifications of the structural assignments $f_i$ in \eqref{eq:iscm} (Figure \ref{fig:iscm-causal-mechanism}),
while keeping the standardization operation based on the observational distribution.
When the population statistics for standardization are intractable, we first sample observational data to obtain empirical statistics.
Since we do not study interventions in this work, we defer a further discussion of interventions in \ourss to Appendix~\ref{sec:interventionssss_iscm}.

\vparagraph{Units}
When modeling a physical system, the functional mechanisms in standard SCMs have to account for the difference in units between the variables for the model to be {\em unit-covariant} (see \citealp{villar2023towards}).
A side-effect of internal standardization is that variables of \ourss become {\em unit-less}, so \ourss obey the passive symmetry of unit covariance by construction.
Therefore, \ourss naturally model both unit-less quantities and variables measured in different units, which can make them useful beyond benchmarking.
Learned \ourss would be invariant to the units chosen by the experimenter, similar to the physical world being independent of the mathematical models chosen to describe it.

\vsubsection{Implied SCMs}\label{ssec:implied-scms-main}

It is natural to investigate whether SCMs can generate the same observations as standardized SCMs or \ourss, given the same causal graph $\mathcal{G}$ and exogenous variables $\Noise$.
In other words, can standardized SCMs and \ourss be written as SCMs?
For both models, the answer is yes.
Specifically, we can express the generative process of
$\smash{\s{\bf{x}}}$ in \eqref{eq:standardized_scm}
and $\smash{\as{\bf{x}}}$ in \eqref{eq:iscm}
as
\begin{align}
    \label{eq:implied-scms-overview}
    \s{x}_i = \s{g}_i(\s{\pa{i}}) + \s{\theta}_i \noise{i}
    \quad\quad\quad
    \text{and}
    \quad\quad\quad
    \as{x}_i = \as{g}_i(\paours{i}) + \as{\theta}_i \noise{i} \, ,
\end{align} 
respectively, by moving the standardization operations into the causal mechanisms of the observables but leaving the DAG $\mathcal{G}$ and the variables $\Noise$ unchanged.
Appendix \ref{sec:implied-models} describes how to construct these 
{\em implied causal mechanisms} $\smash{\s{g}_i}$ and $\smash{\as{g}_i}$
and {\em implied noise scales} $\smash{\s{\theta}_i}$ and $\smash{\as{\theta}_i}$.
We refer to the above SCM form of a standardized SCM or an \ours with additive noise as their {\bf \em implied (SCM) model}. 
Correspondingly, the implied SCMs have zero mean and unit variance.
The notion of implied SCMs is powerful, because it enables us to analyze standardized SCMs and \ourss as SCMs, and it sheds light on the performance of structure learning algorithms that assume unstandardized SCMs to underlie the generative process of the data (e.g., \citealp{shimizu2011directlingam,zheng2018dags,yu2019dag,lachapelle2020gradient,zheng2020learning}).

To provide a first characterization of standardized SCMs and \ourss, our theoretical analyses focus on systems where $f_i$ are linear functions with additive, zero-mean noise as given by Equation \eqref{eq:linear_scm}.
As a stepping stone for this analysis, we use an analytical expression for the covariance of linear SCMs, whose variables have unit variance by construction, without any form of standardization:

\medskip 

\begin{restatable}[Covariance in linear SCMs with unit marginal variances]{lemma}{covform}
\label{lemma:cov}
Let $\bf{x}$ be modeled by a linear SCM defined by \eqref{eq:linear_scm} with DAG $\mathcal{G}$ that satisfies $\var{x_i} = 1$.
Then, the covariance $\cov{x_i, x_j}$ is the sum of products of the weights along all unblocked paths between the nodes of $x_i$ and $x_j$ in $\mathcal{G}$.
Specifically, for any $i, j \in \numvarset$ such that $i\neq j$, it holds that
\begin{equation}
\label{eq:cov_form}
    \cov{x_i, x_j} = \sum_{\ugpath{j}{i} \in \ugpathset{j}{i}}\prod_{(l, m) \in \ugpath{j}{i}}w_{l, m} \, ,
\end{equation}
where $\ugpathset{j}{i}$ are all unblocked paths from $x_j$ to $x_i$ in $\mathcal{G}$, and $(l, m) \in \ugpath{j}{i}$ indicates that the directed edge $(l,m)$ is part of the path $\ugpath{j}{i}$.
\end{restatable}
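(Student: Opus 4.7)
The plan is to fix a topological order of $\mathcal{G}$ and induct on the position of the later of the two nodes. By symmetry of covariance and of the unblocked-path sum, I may assume without loss of generality that $i$ precedes $j$ in this ordering, so that $j$ is not an ancestor of $i$. This lets me use the structural equation $x_j = \sum_{l:\, l \to j} w_{j,l} x_l + \noise{j}$ and expand $\cov{x_i, x_j}$ through its last factor.

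The base case is immediate: if $j$ is a source in $\mathcal{G}$, then $x_j = \noise{j}$ is independent of $x_i$, so $\cov{x_i, x_j} = 0$; and any path incident to $j$ cannot be into $j$ (no parents) and cannot be out of $j$ either, since that would force $j$ to be the peak of the path and hence an ancestor of $i$. For the inductive step, expanding yields
\begin{equation*}
\cov{x_i, x_j} \;=\; \sum_{l:\, l\to j} w_{j,l}\, \cov{x_i, x_l} \;+\; \cov{x_i, \noise{j}},
\end{equation*}
where the second term vanishes because $x_i$ depends only on the exogenous noises of $i$ and its ancestors, none of which equals $j$. For each parent $l$ of $j$, I would then split into two cases: if $l = i$, the unit-variance assumption directly supplies $\cov{x_i, x_l} = 1$; otherwise, the inductive hypothesis (valid since $l$ has strictly smaller rank than $j$) rewrites $\cov{x_i, x_l}$ as a sum of weight products over $\ugpathset{l}{i}$.

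The remaining task is a combinatorial bijection. Each unblocked path in $\ugpathset{j}{i}$ has a well-defined first edge $l \to j$ at node $j$, and this edge must be oriented into $j$: otherwise $j$ would be the peak, hence an ancestor of $i$, contradicting the topological order. Removing this edge yields either the trivial empty path (when $l = i$, accounting for the length-one unblocked path contributing $w_{j,i}$) or an element of $\ugpathset{l}{i}$. I would verify the reverse direction --- prepending $l \to j$ to any unblocked path ending at $l$ --- never introduces a collider at $l$, since the added edge leaves $l$ whereas a collider would need two edges entering $l$; and that no unblocked path in $\ugpathset{l}{i}$ can pass through $j$, since such a path would make $j$ either the peak (again an ancestor of $i$) or a vertex on the descent toward $l$ (making $j$ an ancestor of its own parent $l$), both contradictions.

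The main obstacle is this combinatorial bijection, where one must ensure the case split on the incident edge at $j$ yields a clean bijection onto extensions of inductively counted paths, while ruling out colliders at $l$ and occurrences of $j$ in the interior of shorter paths. The unit-variance hypothesis enters only in the case $l = i$, supplying the factor $1$ attached to the length-one unblocked path from $x_j$ to $x_i$; the remaining steps are combinatorial consequences of the linear structural equations and of d-separation with an empty conditioning set.
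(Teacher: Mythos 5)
Your proposal is correct and follows essentially the same route as the paper's proof: induction along a topological ordering, expanding the covariance through the structural assignment of the later node, using the unit-variance hypothesis for the direct-parent term, and decomposing each unblocked path by its (necessarily incoming) first edge at that node. Your handling of the bijection details (no collider introduced at $l$, and the exclusion of $j$ from the interior of shorter paths) is if anything slightly more explicit than the paper's, which relegates part of this argument to a figure caption.
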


This lemma, also called the {\em trek rule}, is
originally due to \citet{wright1934method}.
We give a proof in Appendix \ref{ssec:cov_formula}.
Since the implied SCMs of linear standardized SCMs and \ourss are linear SCMs, the setting of Lemma \ref{lemma:cov} applies precisely to the SCM forms of both models.
Thus, Lemma~\ref{lemma:cov} enables us to study the covariances in standardized SCMs and \ourss, and as we show next, derive conditions for the (non)identifiability of their DAGs $\mathcal{G}$ from the observational distribution.

\vsection{Analysis}
\label{sec:theory}

\looseness-1
In this section, we give two theoretical results that support the suitability of \ourss over standard SCMs for causal discovery benchmarking.
First, we prove the general case of \cref{fig:figure1}. 
Contrary to standardized SCMs, \ourss do not degenerate towards deterministic implied SCM mechanisms in deep graphs.
Moreover, we prove that the DAGs of linear \ourss cannot be identified beyond their MEC, assuming the DAG is a forest, even if the support of $\bf{w}$ is known. 
Crucially, we also show that this is not generally true for standardized SCMs. 
This suggests that algorithms can less easily game benchmarks based on linear \ourss when knowing the data-generating process.
For all results, we consider linear SCMs \eqref{eq:linear_scm} with zero-mean additive noise and equal noise variances. 
All results are at the population level, so assume we know $p(\s{\bf{x}})$ or $p(\as{\bf{x}})$.
Proofs are given in Appendix \ref{sec:proofs}.

\vsubsection{Behavior with Increasing Graph Depth}
\label{ssec:determinism}

% \looseness-1
Standardized SCMs tend towards increasing correlations between adjacent nodes down the topological ordering.
This correlation \artefact makes standardized SCMs problematic for benchmarking, because it may not be a property we expect to underlie real data. 
\citet{reisach2024scale} show, under some assumptions on $\bf{w}$, that the dependencies in standardized SCMs become {\em deterministic} with increasing graph depth. 
This implies that any exogenous variation $\noise{i}$ vanishes lower down in the system.
Unless prior domain knowledge leads us to assume this holds in applications of interest, 
it may not be desirable to implicitly bias structure learning benchmarks towards such systems.
For example, if the causal ordering represents time \citep{pamfil2020dynotears}, the mechanisms of standardized SCMs are unable to model or characterize time-invariant or stable processes. 
Moreover, if we expect causal mechanisms to be independent \citep{scholkopf2022causality},
the qualitative behavior of a causal mechanism should not provide information about its position in the topological ordering relative to other mechanisms, as it would in SCMs.
\citet{reisach2024scale} show that baselines like \rtwosortregress can perform competitively on benchmarks by exploiting this \artefact (Section \ref{sec:background}).

\ourss do not tend towards determinism with increasing graph depth (Figure \ref{fig:figure1-iSCM}). 
In standardized SCMs, the correlations increase downstream, because the marginal variances of the underlying SCM increase with node depth, while the variance scale is fixed \citep{reisach2021beware}. 
Thus, for large $i$, the variance scale of $x_{i-1}$ becomes large relative to the scale of $\noise{i}$, and the correlation of $x_i$ and $x_{i-1}$ tends towards $1$. 
Since $x^s_i$ and $x^s_{i-1}$ are just standardized versions of these variables, they  maintain the same correlation. 
\ourss avoid this by standardizing internally, which scales the variance of any parents in a mechanism $f_i$ to \num{1}, modulating the relative variance of $\noise{i}$ and $\smash{\pa{i}}$.
In the following, we formalize this result for general graphs by bounding the fraction of cause explained variance (CEV).
The fraction of CEV for $x_i$ is the proportion of $\varop[x_i]$ explained by its causal parents and given by
\begin{equation}
\label{eq:cev-def}
    \cevf{x_i} =  1 - \frac{\var{x_i - \Expempty[x_i | \pa{i}]}}{\var{x_i}} \, .
\end{equation}
The following results shows that we can bound the fraction of CEV for any variable in a linear \ours:

\medskip 

\begin{restatable}[Bound on  $\smash{\operatorname{CEV_f}}$ in linear \ourss]{theorem}{cevbound}
\label{th:cevbound}
Let $\bf{x}$ be modeled by a linear \ours \eqref{eq:linear_scm} with DAG $\mathcal{G}$ and additive noise of equal variances $\varop[\noise{i}] = \smash{\sigma^2}$. 
Suppose any node in $\mathcal{G}$ has at most $\maxnumparents$ parents and $w = \max_{i, j  \in \numvarset}\lvert w_{i, j} \rvert$.
Then, for any $i \in \numvarset$, the fraction of CEV for $\as{x}_i$ is bounded as 
\begin{align*}
    \cevf{\as{x}_i} \leq 1 - \frac{\sigma^2}{\maxnumparents^2w^2 + \sigma^2} \, .
\end{align*}
\end{restatable}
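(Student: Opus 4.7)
The plan is to reduce the CEV expression to a ratio of noise variance to $\var{x_i}$, and then upper-bound $\var{x_i}$ using only the hypotheses on in-degree and weight magnitude together with the unit-variance property that iSCMs satisfy by construction.

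First I would unwind the definition of $\as{x}_i$. Since $\as{x}_i = (x_i - \Expempty[x_i])/\sqrt{\var{x_i}}$ is an affine, deterministic transformation of $x_i$ whose shift and scale depend only on the (latent) marginal statistics of $x_i$, the conditional expectation behaves covariantly: $\Expempty[\as{x}_i\mid \paours{i}] = (\Expempty[x_i\mid \paours{i}] - \Expempty[x_i])/\sqrt{\var{x_i}}$. Because $x_i = \mathbf{w}_i^\top \paours{i} + \noise{i}$ with $\noise{i}$ independent of $\paours{i}$ and zero-mean, the residual $x_i - \Expempty[x_i\mid \paours{i}]$ equals $\noise{i}$. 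Substituting into \eqref{eq:cev-def} and using $\var{\as{x}_i}=1$ yields the clean identity
\begin{equation*}
    \cevf{\as{x}_i} \;=\; 1 - \frac{\sigma^2}{\var{x_i}}.
\end{equation*}

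Next I would upper-bound $\var{x_i}$. Using independence of $\noise{i}$ and $\paours{i}$, $\var{x_i} = \var{\mathbf{w}_i^\top \paours{i}} + \sigma^2$. The key observation is that in an iSCM every $\as{x}_j$ has unit variance by construction, so by Cauchy--Schwarz $\lvert\cov{\as{x}_j,\as{x}_k}\rvert \leq 1$ for any two parents. Expanding the variance of the linear combination and using $\lvert w_{j,i}\rvert \leq w$ together with $\lvert\paours{i}\rvert \leq \maxnumparents$ gives
\begin{equation*}
    \var{\mathbf{w}_i^\top \paours{i}} \;=\; \sum_{j,k\in \paours{i}} w_{j,i}w_{k,i}\,\cov{\as{x}_j,\as{x}_k} \;\leq\; \Bigl(\sum_{j\in \paours{i}} \lvert w_{j,i}\rvert\Bigr)^{\!2} \;\leq\; \maxnumparents^2 w^2.
\end{equation*}
Hence $\var{x_i} \leq \maxnumparents^2 w^2 + \sigma^2$, and plugging this into the CEV identity above yields the stated bound.

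Since each step follows from elementary manipulations, there is no single hard step; the only subtlety is recognising that the standardization is deterministic given the marginal law of $x_i$ so the conditional expectation commutes with it, and that the unit-variance property of iSCM variables is what lets Cauchy--Schwarz close the bound without appealing to any structural acyclicity argument or to Lemma~\ref{lemma:cov}. In particular, the bound does not require independence or uncorrelatedness of the parents, which is what makes it hold uniformly over all linear iSCMs with bounded in-degree and bounded weights.
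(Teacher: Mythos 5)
Your proof is correct and follows essentially the same route as the paper's: reduce $\cevf{\as{x}_i}$ to $1-\sigma^2/\var{x_i}$ (the paper does this via scale-invariance of CEV, you via direct computation using $\var{\as{x}_i}=1$, which is equivalent), then bound $\var{x_i}\leq \maxnumparents^2w^2+\sigma^2$ by expanding the parent covariance sum and using the unit marginal variances of the standardized parents. Your handling of the cross terms via $\lvert\cov{\as{x}_j,\as{x}_k}\rvert\leq 1$ and absolute values of the weights is in fact slightly more careful than the paper's, which drops the absolute values when bounding the double sum.
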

Since the fraction of CEV is bounded, \ourss are guaranteed not to collapse to determinism in large systems, alleviating several of the concerns with (standardized) SCMs discussed above.

\vsubsection{Identifiability}
\label{ssec:identifiability}
\begin{wrapfigure}[24]{R}{0.37\textwidth}
\vspace*{-10pt}
\centering
\begin{subfigure}{\linewidth}
\centering
\vspace{-2pt}
\hspace{-0.2cm}\begin{tikzpicture}
    \matrix[column sep={1.2cm,between origins}, row sep=0.4cm] (m) {
        % First row: (i)
        \node[const] {$\mathrm{(i)}$}; & 
        \node[latent, xshift=-0.2cm] (x1) {$x_1$}; &
        \node[latent] (x2) {$x_2$}; &
        \node[latent, xshift=0.2cm] (x3) {$x_3$}; \\
        
        % Second row: (ii)
        \node[const] {$\mathrm{(ii)}$}; & 
        \node[latent, xshift=-0.2cm] (y1) {$x_1$}; &
        \node[latent] (y2) {$x_2$}; &
        \node[latent, xshift=0.2cm] (y3) {$x_3$}; \\
        
        % Third row: (iii)
        \node[const] {$\mathrm{(iii)}$}; & 
        \node[latent, xshift=-0.2cm] (z1) {$x_1$}; &
        \node[latent] (z2) {$x_2$}; &
        \node[latent, xshift=0.2cm] (z3) {$x_3$}; \\
    };
    
    % Arrows and labels for (i)
    \path[->] (x1) edge node[above, midway, yshift=0.1cm] {$\alpha$} (x2);
    \path[->] (x2) edge node[above, midway, yshift=0.1cm] {$\beta$} (x3);

    % Arrows and labels for (ii)
    \path[->] (y2) edge node[above, midway, yshift=0.1cm] {$\alpha$} (y1);
    \path[->] (y2) edge node[above, midway, yshift=0.1cm] {$\beta$} (y3);

    % Arrows and labels for (iii)
    \path[->] (z2) edge node[above, midway, yshift=0.1cm] {$\alpha$} (z1);
    \path[->] (z3) edge node[above, midway, yshift=0.1cm] {$\beta$} (z2);
    
\end{tikzpicture}
\vspace*{1pt}
\caption{DAGs with edge weights $\alpha$ and $\beta$}\label{fig:3-path-id-a}
\end{subfigure}

\vspace{10pt}

\begin{subfigure}{0.75\linewidth}
    \centering
    \hspace*{10pt}\includegraphics[width=0.85\textwidth]{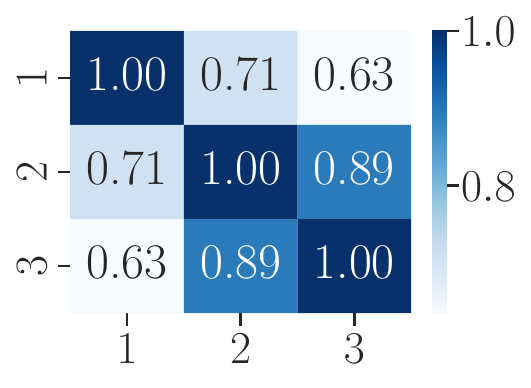}
    \vspace{-3pt}
    \caption{Cov.\ matrix of linear \ourss}\label{fig:3-path-id-b}
\end{subfigure}

\vspace{2pt}

\captionsetup{width=0.94\linewidth} 
\caption{%
% \looseness-1
\textbf{\ourss with the same covariance matrix.}
(a) DAGs in an MEC with the same edge weights.
(b) Covariance matrix for all linear \ourss in (a)
when $\alpha = 1$, $\beta = 2$.
}
\label{fig:3-path-id}

\end{wrapfigure}

\cref{fig:figure1-standardized-SCM} illustrates that the pairwise correlations in SCMs over chain graphs depend on the position in the topological ordering.
This can allow algorithms like \rtwosortregress to infer the graph. 
By contrast, \cref{fig:figure1-iSCM} shows that \ourss do no exhibit this pattern, with correlations between variables not increasing the identifiability of any part of the system.

In the following, we formalize this phenomenon for forests, that is, all DAGs with acyclic skeletons
(Section \ref{sec:background}).
Specifically, we prove two results concerning the identifiability of the DAG $\mathcal{G}$ from the observational distribution, for standardized SCMs and \ourss.  
This makes our finding the first identifiability result for {\em standardized} SCMs.
While not every DAG is a forest, DAGs have forests as subgraphs and resemble forests as sparsity increases, thus providing us with intuition for generally sparse systems 
(e.g., \citealp{alon2016probabilistic}, Chapter 11).

Our first result leverages the observation that, for standardized SCMs, many DAGs in an MEC are infeasible given $p(\s{\bf{x}})$ when their edge directions are not consistent with the direction of increasing absolute covariance.
To illustrate this idea, suppose our goal is to distinguish between the DAGs in the MEC $\mec = \{\mathrm{(i)}, \mathrm{(ii)}, \mathrm{(iii)}\}$
in Figure \ref{fig:3-path-id-a}.
We overload notation and denote the weights of the edges $\alpha$ and $\beta$ regardless of orientation. 
For standardized SCMs, we can apply \cref{lemma:cov} to the implied SCM of graph $\mathrm{(i)}$ to obtain the covariances
\begin{align*}
   \cov{\s{x_1}, \s{x_2}} = \tfrac{\alpha}{\sqrt{\alpha^2 + 1}}
   \quad\quad\text{and}\quad\quad
   \cov{\s{x_2}, \s{x_3}} = \beta \sqrt{\tfrac{\alpha^2 + 1}{\beta^2(\alpha^2+1)+1}} \, .
\end{align*}
See Appendix \ref{ssec:ident_3node}. Together, both expressions imply that standardized SCMs with DAG $\mathrm{(i)}$ satisfy
\begin{equation}
\label{eq:cov_weight_rel}
\lvert\cov{\s{x_1}, \s{x_2}}\rvert < \lvert\cov{\s{x_2}, \s{x_3}}\rvert \quad\Longleftrightarrow\quad  \tfrac{\alpha^2}{\alpha^2+1} < \beta^2 \, .
\end{equation}
If $\lvert \beta \rvert \geq 1$, then the right-hand side of \cref{eq:cov_weight_rel} is always true.
In this case, the absolute covariance increases from $x_1$ to $x_3$ in all standardized SCMs with DAG $\mathrm{(i)}$. 
By symmetry, the covariance in SCMs with DAG $\mathrm{(iii)}$ increases from $x_3$ to $x_1$ when $\lvert \alpha \rvert \geq 1$.
Therefore, if both weights are greater than \num{1}, the absolute covariance increases downstream in all SCMs of $\mathrm{(i)}$ and  $\mathrm{(iii)}$.
This implies that, among $\mathrm{(i)}$ and  $\mathrm{(iii)}$, only the DAG whose edges align with the covariance ordering in $p(\s{\bf{x}})$ can induce $p(\s{\bf{x}})$.
Irrespectively, the DAG $\mathrm{(ii)}$ remains plausible.
We can extend the intuition of this 3-variable example to identify almost all edges in any forest MEC:
% almost all edges in the undirected forest components of any MEC:

\medskip 

\begin{restatable}[Partial identifiability of standardized linear SCMs with forest DAGs]{theorem}{forestpartident}
\label{th:three_part_ident}
Let $\s{\bf{x}}$ be modeled by a standardized linear SCM \eqref{eq:linear_scm} with forest DAG $\mathcal{G}$, additive noise of equal variances $\var{\noise{i}} = \sigma^2$, and $\abs{w_{i,j}}>1$ for all $i \in \text{pa}(j)$. 
Then, given $p(\s{\bf{x}})$ and the partially directed graph $\mec$ representing the MEC of $\mathcal{G}$,
% we can orient all but at most one edge in each undirected connected component of $\mec$. 
we can identify all but at most one edge of the true DAG $\mathcal{G}$ in each undirected connected component of the MEC $\mec$.
\end{restatable}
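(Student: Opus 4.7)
The plan is to decompose the MEC by connected component, establish a monotonicity property of absolute covariances along directed paths in a tree-DAG, and then combine it with a short combinatorial argument on tree rootings. Since $\mathcal{G}$ is a forest, its connected components are trees and the MEC decomposes accordingly. In each tree component, the MEC consists exactly of the rooted orientations of the tree: any other orientation would create a v-structure between non-adjacent parents, and trees have no v-structures. It therefore suffices to show, for each tree component, that at most two rootings admit weights $\lvert w'_{i,j}\rvert > 1$ and equal noise variance reproducing the observed $p(\s{\bf{x}})$, and that any two such rootings differ in exactly one edge.

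The key technical ingredient is a \textbf{monotonicity lemma}: along any directed path $v_0 \to v_1 \to \dots \to v_k$ in a tree-DAG with equal noise variance $\sigma^2$ and edge weights $\lvert w_{v_{i-1},v_i}\rvert > 1$, the sequence $\lvert \cov{\s{x_{v_{i-1}}}, \s{x_{v_i}}} \rvert$ is strictly increasing in $i$. Because $v_i$ has $v_{i-1}$ as its only parent, the recursion $\var{x_{v_i}} = w_{v_{i-1},v_i}^2 \var{x_{v_{i-1}}} + \sigma^2$ together with Lemma \ref{lemma:cov} applied to the implied SCM gives $\lvert\cov{\s{x_{v_{i-1}}}, \s{x_{v_i}}}\rvert^2 = t_i/(t_i+1)$, where $t_i := w_{v_{i-1},v_i}^2 \var{x_{v_{i-1}}}/\sigma^2$. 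A short substitution produces $t_{i+1} - t_i = \sigma^{-2}\bigl(w_{v_{i-1},v_i}^2 \var{x_{v_{i-1}}} (w_{v_i,v_{i+1}}^2 - 1) + w_{v_i,v_{i+1}}^2 \sigma^2\bigr)$, which is strictly positive whenever $\lvert w_{v_i,v_{i+1}}\rvert > 1$.

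Using the lemma, I would rule out non-adjacent alternative roots. Suppose two DAGs $\mathcal{G}_1, \mathcal{G}_2$ in the MEC, each admitting a weight parameterization satisfying the theorem's constraints, both induce $p(\s{\bf{x}})$. Let $r_1, r_2$ be their roots. If $r_1, r_2$ are neither equal nor adjacent, the tree path $r_1 = u_0 - u_1 - \dots - u_k = r_2$ has $k \geq 2$. In $\mathcal{G}_1$ its initial segment is directed as $u_0 \to u_1 \to u_2$, so the lemma forces $\lvert \cov{\s{x_{u_0}}, \s{x_{u_1}}}\rvert < \lvert \cov{\s{x_{u_1}}, \s{x_{u_2}}}\rvert$. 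In $\mathcal{G}_2$ the same segment is directed $u_2 \to u_1 \to u_0$, and applying the lemma with the weights of $\mathcal{G}_2$ gives the opposite strict inequality. Since both covariances are read from the same $p(\s{\bf{x}})$, this is a contradiction; hence $r_1$ and $r_2$ are equal or adjacent.

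Finally, three distinct compatible rootings would require three pairwise equal-or-adjacent roots, forming a triangle that cannot occur in a tree, so at most two rootings per component are compatible. Two compatible rootings with adjacent roots $r, r'$ differ only in the orientation of $\{r, r'\}$, because the parent of any other node is still its unique neighbor on the shortest path to the (old or new) root, and that neighbor is unchanged. Hence all but at most one edge per connected component is identified. The main obstacle is establishing the monotonicity lemma and applying it coherently in both alleged DAGs with their different weight parameterizations; the rest is a clean combinatorial argument about rootings of trees.
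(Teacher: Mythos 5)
Your core argument is correct, and it takes a genuinely different route from the paper's. The covariance computation is the same in substance---your identity $\lvert\cov{\s{x}_{v_{i-1}},\s{x}_{v_i}}\rvert^2 = t_i/(t_i+1)$ and the positivity of $t_{i+1}-t_i$ under $\abs{w}>1$ is exactly what the paper derives for ``subsystem 1'' in Lemma \ref{lemma:identpath}, including the device of allowing an arbitrary variance at the start of the path (the paper's $p^2\sigma^2$), which is what lets you apply the lemma to the sub-path $u_2\to u_1\to u_0$ even though $u_2$ is not the root of $\mathcal{G}_2$. But you package it as a single monotonicity statement along an entire directed path and then replace the paper's machinery (strong induction on component size, extraction of a longest undirected chain, a case analysis around the minimum-covariance pair, and repeated use of the first Meek rule) with a short global argument: two admissible rootings must have equal or adjacent roots, a tree has no triangle so at most two rootings survive, and adjacent rootings differ in exactly one edge. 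This is cleaner and avoids the induction entirely. What it gives up is constructiveness: the paper's proof yields an explicit procedure (locate the minimum-covariance pair and orient everything away from it), and it identifies the true DAG as a function of the observed covariances alone, whereas your argument only bounds the size of the solution set \emph{within the class} $\abs{w_{i,j}}>1$ --- sufficient for the identifiability claim as stated, but it does rely on the alternative DAG also being constrained to that class.

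The one genuine gap is your reduction to rooted orientations. You assert that the MEC of a tree component ``consists exactly of the rooted orientations'' because ``trees have no v-structures,'' but a forest DAG can certainly contain v-structures (e.g., $v_1\to v_2\leftarrow v_3$ on a path skeleton), in which case the essential graph $\mec$ has directed edges and the DAGs in the MEC are not all rootings of the tree. The theorem is stated per \emph{undirected connected component} of $\mec$ precisely for this reason. To close the gap you must restrict your argument to each undirected connected component $C$ and verify that (i) the admissible orientations of $C$ are exactly its rootings, and (ii) no node of $C$ has a parent outside $C$ in the true DAG, so that the variance recursion underlying your monotonicity lemma is self-contained within $C$. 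The paper establishes exactly this in the footnote to its proof: in a forest, a directed edge of $\mec$ pointing into an undirected component would force the adjacent undirected edge to be oriented by the first Meek rule, a contradiction, so all bordering directed edges point away from $C$. With that reduction in place, your argument goes through unchanged on each component.
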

Our proof of \cref{th:three_part_ident} considers each undirected component separately from the rest of the MEC~$\mec$.
Hence, the identifiability result extends to undirected tree components of arbitrary, non-forest MECs as well.
% However, for that the 
% For symmetry with the next result
\cref{th:three_part_ident} shows that, when using standardized SCM data for benchmarking, algorithms can use pairwise correlations to orient additional edges correctly.
The weights assumption of \cref{th:three_part_ident} is relevant to causal discovery benchmarking, because weights are often sampled \iid from intervals bounded away from $0$ (\cref{sec:background}).
Hence, empirical evaluations may render standardized linear SCMs identifiable only through the design of their weights distribution.
In the following, we show that, under similar conditions, \ourss are more difficult to identify from their MEC.
In the 3-variable example above, we can show that the observational distribution of \ourss is the same for all DAGs $\mathrm{(i)}$, $\mathrm{(ii)}$, and $\mathrm{(iii)}$ when the weights $\alpha$ and $\beta$ are shared over the corresponding  edges in the MEC (Figure \ref{fig:3-path-id-b}; see Appendix \ref{ssec:non-ident-three-nodes}). 
This result generalizes to forests:

\medskip 

\begin{restatable}[Nonidentifiability of linear Gaussian \ourss with forest DAGs]{theorem}{forestnonident}
\label{th:three_non_ident}
Let $\as{\bf{x}}$ be modeled by a linear \ours \eqref{eq:linear_scm} with forest DAG $\mathcal{G}$ and additive Gaussian noise of equal variances  $\var{\noise{i}}$.
Then, for every DAG $\mathcal{G}'$ in the MEC of $\mathcal{G}$, there exists a linear \ours with DAG $\mathcal{G}'$ that has the same observational distribution as $\as{\bf{x}}$, the same noise variances, and the same weights on the corresponding edges in the MEC.
\end{restatable}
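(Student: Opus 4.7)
The plan is to reduce the statement to an invariance claim about the covariance matrix of $\as{\mathbf{x}}$ and then establish that invariance by walking along covered-edge reversals within the MEC. Since $\Noise$ is Gaussian and each standardization step is an affine map of preceding variables, $\as{\mathbf{x}}$ is jointly Gaussian; by construction, each $\as{x}_i$ has mean $0$ and variance $1$, so the full distribution is determined by the covariance matrix $\Sigma$. It therefore suffices to show that, with the noise variances $\sigma^2$ and edge weights fixed, every orientation of $\mathcal{G}$'s skeleton lying in the MEC of $\mathcal{G}$ yields the same $\Sigma$. I would invoke the standard fact that two DAGs are Markov equivalent if and only if one can be transformed into the other by a sequence of covered-edge reversals, each reversal staying in the MEC, so it is enough to prove invariance of $\Sigma$ under a single such reversal.

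The first structural lemma I would prove is that, in a forest, a covered edge $u \to v$ -- one for which $\pa{v} = \pa{u} \cup \{u\}$ -- must satisfy $\pa{u} = \emptyset$ and $\pa{v} = \{u\}$: any additional parent $p$ of $u$ would also be a parent of $v$, creating a triangle $\{p,u,v\}$ and hence a cycle in the skeleton. Next, I would observe that parents of any node $c$ in a forest are marginally uncorrelated, since the unique skeleton path between two parents of $c$ must traverse $c$ as a collider, which is automatically a v-structure (adjacency of the two parents would close a cycle). Combined with $\var{\as{x}_j} = 1$ for every $j$, this yields the clean formula $\var{x_c} = \sum_{p \in \pa{c}} w_{c,p}^2 + \sigma^2$.

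Plugging these facts into the implied SCM, the reversal $u \to v$ to $v \to u$ swaps $\var{x_u}$ and $\var{x_v}$ between $\sigma^2$ and $w^2 + \sigma^2$, but leaves $\var{x_c}$ unchanged for every $c \notin \{u,v\}$, because no other node's parent set changes. Crucially, whichever endpoint is the child of the reversed edge always has variance $w^2 + \sigma^2$, so the implied weight $w/\sqrt{w^2 + \sigma^2}$ on that edge is preserved; every other edge's implied weight $w_{c,p}/\sqrt{\var{x_c}}$ is preserved too, since its child's variance is unchanged. Applying Lemma~\ref{lemma:cov} to the implied SCM (which has unit marginal variances by construction), each $\cov{\as{x}_i, \as{x}_j}$ equals the product of implied weights along the unique skeleton path between $i$ and $j$ if that path is unblocked, and $0$ otherwise. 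Since all colliders in a forest are v-structures and v-structures are MEC-invariant, the blocked/unblocked status of every skeleton path is preserved by the reversal, so $\Sigma$ is invariant.

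The main obstacle I anticipate is the bookkeeping in the reversal step: although $\var{x_u}$ and $\var{x_v}$ individually change, one must verify that every implied quantity entering $\Sigma$ -- the implied weight on the reversed edge and on every edge incident to $u$ or $v$ -- remains the same. The forest hypothesis is essential here, both in restricting covered edges to the local form $\pa{u}=\emptyset$, $\pa{v}=\{u\}$ and in ruling out non-v-structure colliders, both of which let the reversal argument go through cleanly.
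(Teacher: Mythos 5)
Your proposal is correct, but it reaches the conclusion by a genuinely different route than the paper. The paper fixes an arbitrary $\mathcal{G}'$ in the MEC and compares the two implied SCMs globally: for each pair of nodes it disposes of blocked or disconnected skeleton paths via $d$-separation, and for the unique unblocked path it shows edge by edge that the implied weights coincide, via a two-case analysis on whether the relevant endpoint has one or several parents (using that colliders in forests are v-structures and hence MEC-invariant, and that parents are pairwise independent so Equation~\eqref{eq:weights_implied_ours_indep} applies). You instead invoke Chickering's transformational characterization of Markov equivalence and reduce everything to invariance of the covariance matrix under a single covered-edge reversal; the forest assumption then forces the covered edge into the local configuration $\text{pa}(u)=\emptyset$, $\text{pa}(v)=\{u\}$, after which the bookkeeping is essentially a one-line computation ($\var{x_u}$ and $\var{x_v}$ swap between $\sigma^2$ and $w^2+\sigma^2$, the implied weight $w/\sqrt{w^2+\sigma^2}$ on the reversed edge is unchanged, and all other implied weights depend only on their child's unchanged parent set). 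Both arguments rest on the same three pillars --- Lemma~\ref{lemma:cov}, uniqueness of skeleton paths in forests, and the fact that every forest collider is a v-structure --- but your version buys a much cleaner local verification at the cost of importing the covered-edge-reversal theorem as a black box, whereas the paper's direct comparison is self-contained but requires the more delicate case analysis on parent sets. One small point worth making explicit if you write this up: the covered edge reversed at an intermediate step is covered in the intermediate DAG, so you should note that every intermediate graph shares the skeleton (hence remains a forest) and that your structural lemma therefore applies at each step.
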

Our proof consists of showing that the covariance matrices of these systems are equal. 
For linear Gaussian \ourss, this then implies that their observational distributions are identical.
\cref{th:three_non_ident} thus shows that additional knowledge of the weight distribution in a benchmark does not allow identifying any additional edges beyond the MEC. 
By contrast, \cref{th:three_part_ident} shows that, for standardized SCMs, lower-bounding the weight magnitudes is sufficient for identifying most of the graph from its MEC. 
Without standardization, $\mathcal{G}$ is fully identified from its observational distribution under even weaker assumptions \citep{peters2014identifiability}. 
Importantly, Theorem \ref{th:three_non_ident} does not generalize to arbitrary graphs beyond forests. 
Appendix \ref{rem:3node_counterexample} provides a counterexample involving a 3-node skeleton.
% in which some nodes share parents. 
As we study in the next section, this implies that causal structure can still be learned from nontrivial \ourss.
However, DAGs in benchmarks are often sparse, so we expect  the implications of our identifiability results to capture relevant parts of empirical phenomena in benchmarking settings. 

\vsection{Experimental Results}
\label{sec:experiments}

\looseness-1
Our analyses suggest that \ourss address shortcomings of naive standardization, in particular, when sampling each $f_i$ and $\noise{i}$ from the same distribution, as common in benchmarking.
In this section, we now also provide empirical evidence that \ourss do not contain the covariance \artefacts of SCMs.
\rebuttal{This makes \ourss a useful tool for disentangling, alongside SCMs, which data patterns drive causal discovery in practice.}
\rebuttal{To show this,} we benchmark the \sortregress baselines and a suite of popular structure learning algorithms to gain insights into how their performance varies when benchmarked on standardized SCMs and \ourss.
Appendix \ref{sec:csl-setup} provides complete details of the experimental setup.

\vsubsection{\rtwo-Sortability} \label{ssec:experimentas-rtwo}

\cite{reisach2024scale} introduce the \rtwo-sortability metric to evaluate the correlation \artefact underlying a dataset.
% (Figure \ref{fig:figure1}).
\rtwo-sortability measures (rescaled to $[0, 1]$) the association between the variables' causal ordering and the \rtwo coefficients obtained from regressing each variable onto all others (Appendix \ref{sec:sortabilities-def}). 
An \rtwo-sortability close to $0.5$ suggests that the \rtwo coefficients from regression contain no information about the causal ordering.
Conversely, an \rtwo-sortability  of $0$ or $1$ implies that the causal ordering can be completely identified from this information. 
The metric gives rise to the \rtwosortregress baseline described in Section \ref{sec:background}.
\citet{reisach2024scale} show that \rtwo-sortability in SCMs is driven by an interplay of graph connectivity and the weight distribution of $f_i$.

\begin{figure}
    \vspacefiguretop
    \centering
    \vspace{-5pt}
    \includegraphics[width=\linewidth]{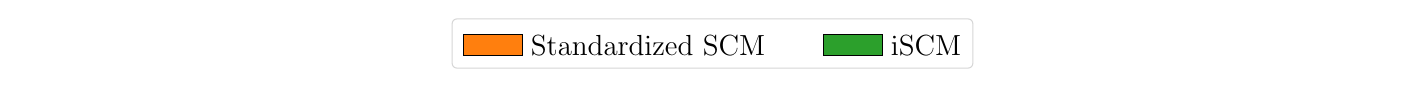}
    \vspace{-15pt}

    \begin{subfigure}{0.49\textwidth}
    \centering
    \hspace{-13pt}\includegraphics[width=\linewidth]{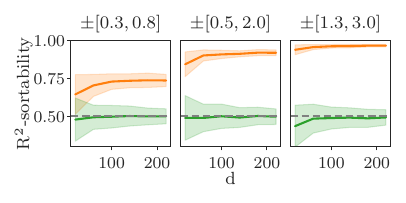}
    
    \vspace{-5pt}
    \caption{$\er{\numvars, 2}$}
    \label{fig:r2_sortability_er2}

    \end{subfigure}
    \hfill
    \begin{subfigure}{0.49\textwidth}
    \centering
    \hspace{-13pt}\includegraphics[width=\linewidth]{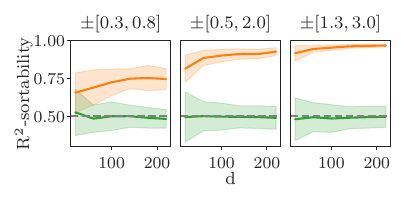}
    
    \vspace{-5pt}
    \caption{$\usf{\numvars, 2}$}
    \label{fig:r2_sortability_sf2}
    \end{subfigure}
    \caption{\textbf{\rtwo-sortability for different graph sizes.}
    Linear standardized SCMs and \ourss with $\noise{i} \sim \mathcal{N}(0, 1)$ and weights drawn from uniform distributions with supports given above each plot. 
    For every model, we evaluate \num{100} systems and $\numsamples =$\num{1000} samples each.
    Lines and shaded regions denote mean and standard deviation.
    Datasets that satisfy \rtwo-sortability $=0.5$ (dashed) are not \rtwo-sortable.
    \vspacefigurecaptionbottom
    }
    \label{fig:r2-sortability}
\end{figure}

\looseness-1
Figure \ref{fig:r2-sortability} summarizes the \rtwo-sortability statistics for linear SCM and \ours data.
We write $\er{\numvars, k}$ and $\usf{\numvars, k}$ to denote \erdosrenyi and scale-free graphs of size $\numvars$ and (expected) degree $k$, respectively (see Appendix \ref{ssec:experiment-configuration} for details). 
% For both $\er{\numvars, 2}$ and $\usf{\numvars, 2}$, 
We find that \ourss generate datasets that are not \rtwo-sortable (\rtwo-sortability $\approx$ \num{0.5}) and thus \artefact-free while sampling over common graph structures
(e.g., \citealp{zheng2018dags,yu2019dag,reisach2021beware}).
Conversely, standardized SCMs generate datasets that are strongly \rtwo-sortable
($\lvert \text{\rtwo-sortability} - 0.5 \rvert \gg 0$).
Since \rtwo-sortability can be exploited for causal discovery, \ours data serves as a test for evaluating whether algorithms utilize any data properties beyond the association between \rtwo and the causal ordering in SCMs.
Our results do not exclude the possibility of \ours configurations that still produce \rtwo-sortable datasets.
However, we show empirically that, for commonly-used $\mathcal{G}$, $\distnoisejoint$, and $\bf{w}$, \ours datasets are not \rtwo-sortable with high probability.
Appendix \ref{sec:background-heuristics} reports the sortability metrics of the existing heuristics in Section \ref{sec:background}, showing that neither mitigate both $\varop$- and \rtwo-sortability.
Appendix \ref{sec:additional-results} provides results for denser graphs.

\vsubsection{Structure Learning}\label{ssec:experimentas-structure-learning}

% \looseness-1
Under the same weight and noise distributions, standardized SCMs and \ourss have different implied SCMs and generate qualitatively different datasets.
Here, we study how this affects causal structure learning in practice.
For this, we evaluate $\varop$- and \rtwo-\sortregress (\textsc{SR}) \citep{reisach2021beware,reisach2024scale}
as well as an SR variant that uses a random ordering (Random SR).
In addition, we evaluate representative algorithms from several approaches to learning structure from (co)variance information.
\notears by \citet{zheng2018dags} leverages continuous optimization to minimize an MSE loss, which is affected by noise scaling \citep{loh2014high,seng2024learning}.
\golem \citep{ng2020role}, similar to \notears, formulates causal discovery as a continuous optimization problem. 
Its EV and NV versions assume equal and potentially unequal noise scales, respectively. 
\rebuttal{\cam \citep{buhlmann2014cam} searches over causal orderings and performs sparse nonlinear regression to find the parents, while also estimating the noise scales.}
\pc \citep{spirtes1991algorithm} and \ges \citep{chickering2002optimal} are approaches based on statistical independence testing and greedy search, respectively. 
Finally, \avici by \citet{lorch2022amortized} predicts graphs using a model pretrained on  simulated data and is thus optimized to exploit any \artefacts that improve predictive accuracy. 
To investigate its susceptibility to \artefacts, we evaluate the public model checkpoints trained on standardized SCMs.

\newcommand{\yaxislabel}[4]{\hspace{#1}\makebox[0pt][r]{\raisebox{#2}{\rotatebox{90}{\color{matplotlibblack}#3}}}\hspace{#4}}

\begin{figure}
    \vspacefiguretop
    \centering

  \hspace{20pt}\includegraphics[width=0.72\linewidth]{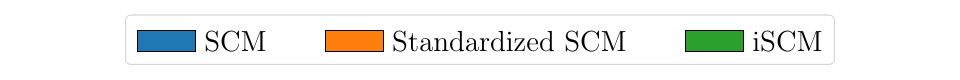}
        \vspace{-5pt}
    
        \begin{tabular}{cc}
            \yaxislabel{0pt}{15pt}{$\er{20, 2}$}{-10pt} &
        \begin{subfigure}{0.96\linewidth}
        \includegraphics[width=0.52\linewidth, trim=0 6pt 7pt 0 , clip]{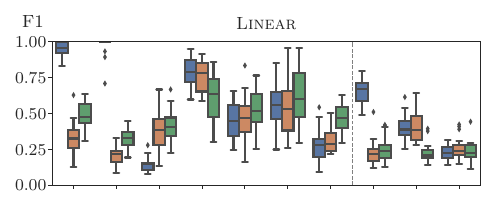}
            \hspace{5pt}\includegraphics[width=00.462\linewidth, trim=25pt 6pt 7pt 0 , clip]{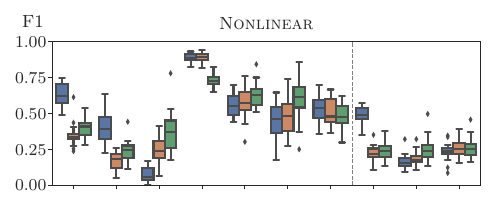}
       \end{subfigure}     
            \\
            \yaxislabel{0pt}{45pt}{$\er{100, 2}$}{-10pt} &
            \begin{subfigure}{0.96\linewidth}\includegraphics[width=0.52\linewidth, trim=0 0 7pt 6pt , clip]{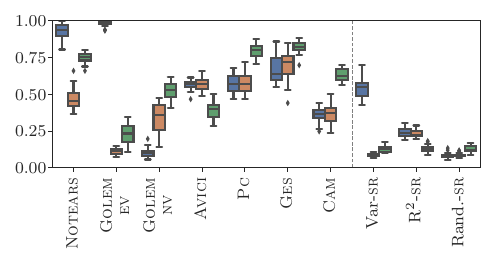}
            \hspace{5pt}\includegraphics[width=00.462\linewidth, trim=25pt 0 7pt 6pt , clip]{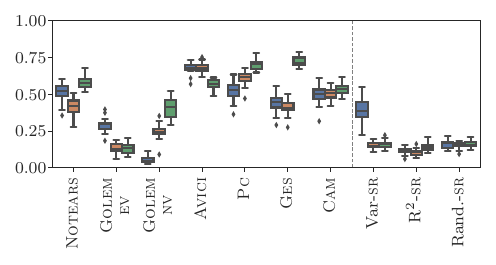}
            \end{subfigure}\\
        \end{tabular}
        \vspace{-6pt}
    % \end{subfigure}
    \caption{%
    \looseness-1
    \textbf{Structure learning performance on SCM and \ours data.}
    F1 scores for recovering the edges of the true graph. 
    Box plots show median and interquartile range (IQR). Whiskers extend to the largest value inside \num{1.5}$\times$IQR from the boxes. Left (right) column shows results for linear (nonlinear) causal mechanisms with additive noise $\noise{i} \sim \mathcal{N}(0, 1)$ and $w_{i,j} \sim \unif_{\pm [0.5, 2.0]}$ (Appendix E). For every model, we evaluate \num{20} systems each using $\numsamples =$\num{1000} data points.
    }%
    \label{fig:results-benchmark-main}
\end{figure}

% \looseness-1
Figure \ref{fig:results-benchmark-main} summarizes the results for linear and nonlinear systems \rebuttal{with Gaussian noise (see Figure \ref{fig:results-benchmark-nongaussian}, Appendix \ref{sec:additional-results} for non-Gaussian systems)}.
The nonlinear mechanisms $f_i$ are samples from a Gaussian process with squared exponential kernel.
As expected, \varsortregress performs best when SCMs are not standardized.
% as only unstandardized data has the increasing-variance pattern. 
Likewise, \rtwosortregress performs better on SCMs and standardized SCMs, as \ourss have \rtwo-sortability close to \num{0.5} (Section \ref{ssec:experimentas-rtwo}).
\avici shows the same trend, suggesting it may indeed be exploiting the correlation \artefacts present in its training distribution.
Like \citet{reisach2021beware},
we find that \notears performs best on unstandardized data.  
However, and more interestingly, \notears also performs better on \ourss than on standardized SCMs, especially in linear and larger systems.
As we investigate later on, this gap may be explained by the fact that the implied models of standardized SCMs violate the assumptions of \notears more strongly than \ourss.
Overall, we find \golemev shows the same patterns as \notears, severely underperforming on standardized SCMs and slightly improving the predictive accuracy on iSCMs. 
\rebuttal{\cam and} \golemnv, which do not assume equal noise scales, perform \rebuttal{equally well or better} on standardized data, respectively, \rebuttal{and generally better on \ourss.} 
The poor performance of \golemnv \begin{wrapfigure}[32]{r}{0.37\textwidth} % ICLR
% \begin{wrapfigure}[30]{r}{0.37\textwidth} % arxiv
    \vspacefiguretop
    \centering
    \vspace{10pt}
    \begin{subfigure}[b]{0.33\textwidth}
        \centering
    \includegraphics[width=0.99\linewidth, trim=2pt 8pt 0 0 , clip]{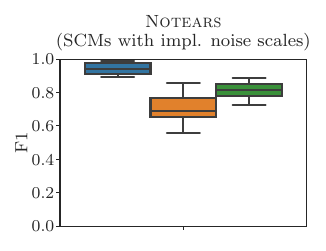}
        \vspace{-2pt}
        
        \hspace{0pt}\includegraphics[width=0.98\linewidth, trim=0 0 0 7pt]{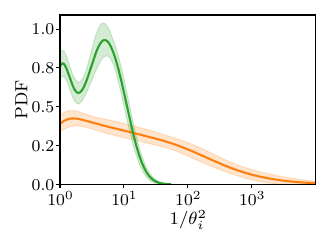}       
        
        \vspace{-4pt}
        \includegraphics[width=\linewidth, trim=0 0 0 6pt]{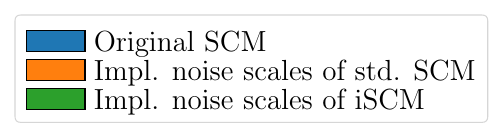}
        \vspace{-4pt}
    \end{subfigure}
    % \hspace{-20pt}% this line shift plots right and it does the job, not sure why
    \vspace{-5pt}
    \captionsetup{width=0.95\linewidth} 
    \caption{%
    % \looseness-1
    \textbf{Implied Noise.}
    Bottom panel shows the distribution over inverse implied noise scales 
    in the implied SCMs 
    for $\er{100, 2}$ graphs (kernel density estimate).
    Lines and shading denote mean and standard deviation.
    Top panel shows the performance of NOTEARS on systems with these noise scale statistics but the same $\varop$-sortability as SCMs 
    (see Appendix \ref{ssec:experiment-configuration} and \ref{ssec:adjusting-variances}).
    \vspacefigurecaptionbottom
    }
    \label{fig:results-induced-noise-main}
\end{wrapfigure}%%% ICLR figure position
on unstandardized SCMs was also observed by \citet{reisach2021beware}. 
%
% % ICLR figure position
% \input{fullfigures/results-induced-noise}
%
% \looseness-1
% Moreover, 
In addition, for approaches based on discrete search,
we find that, in particular on large systems, the \pc and \ges algorithms  perform better on \ourss. 
%This might be explained by \cref{th:cevbound}: in standardized SCMs, the parent-child relationships become more deterministic for larger graphs, which can make independence testing less reliable \rebuttal{by inducing spurious conditional dependencies} \citep[see][Chapter 10.3]{peters2017elements}.
% The relationship between conditional independence-based causal inference and deterministic relationships has been considered previously by \citet[][Chapter 10.3]{peters2017elements}.
Overall, performance differences \rebuttal{tend to be} more pronounced for linear systems, where the downstream variance accumulation in SCMs is unbounded.
Appendix \ref{sec:additional-results} reports the results for the structural Hamming distance (SHD) and different weight ranges.

\vparagraph{Properties of the implied SCMs}
%
% \looseness-1 
When standardizing SCM data, the implied SCM corresponds to the SCM that could have generated the observations.
Therefore, algorithms assuming that unstandardized SCMs generated the data will be susceptible to any assumption violations of the implied SCM, such as assumptions about the exogenous noise.
Figure \ref{fig:results-induced-noise-main} (bottom) shows the distribution of inverse implied noise scales $1 / \theta_i^2$ for the variables of the implied models (see Equation \ref{eq:implied-scms-overview}).
Since $\var{\noise{i}} = 1$ in our experiments, these inverse squared noise scales are equal to the inverse variances of the full additive noise terms.
We find that standardized SCMs induce inverse noise scales that are orders of magnitude greater than those of \ourss.
This distribution is essentially the footprint of the determinism in the depth limit discussed in Section \ref{ssec:determinism}. 
This observation also provides empirical support for our earlier explanation for the improved performance of the \pc algorithm on \ours data.
The modes at $1 /\theta_i^2 = 1$ and at $1 /\theta_i^2 > 1$ in the \ours plot correspond to root and non-root nodes, respectively.

% % arxiv figure position% \input{fullfigures/results-induced-noise}

\looseness-1
Figure \ref{fig:results-induced-noise-main} (top) shows the performance of \notears when isolating the noise properties of the implied models from the fact that standardized SCMs and \ourss are not $\varop$-sortable.
For this, we construct SCMs that have the marginal variances (and $\varop$-sortability, here \num{0.99} on average) of unstandardized SCMs but the noise variances of the implied models by correcting their weights (see Appendix \ref{ssec:adjusting-variances}).
\notears performs better in such systems, suggesting that {(i)} the noise statistics may indeed explain the performance difference on \ours data, and {(ii)} $\varop$-sortability may not be the only reason why \notears performs significantly worse on standardized data \citep{reisach2021beware}.
Conversely, when the weight ranges of (standardized) SCMs are smaller, the phenomenon of exploding marginal variances is less pronounced (Figure \ref{fig:inverse-noise-scales-dist} in Appendix \ref{sec:results-similar-implied-noise-scales}).
In this case, we indeed find that \notears performs similarly on standardized SCMs and \ourss (Figure \ref{fig:results-benchmark-appendix-full}, left, in Appendix \ref{ssec:csl-performance}).

This sheds light on previous benchmarking results, where MSE-based algorithms perform below expectations despite perhaps not intending to evaluate the algorithms under model mismatch (e.g., \citealp{reisach2021beware,kaiser2021unsuitability}).
For the MSE loss, \citet{loh2014high} and \citet{seng2024learning} show that smaller ratios of noise variances increase the magnitude of weights required for the true DAG to be the unique minimizer. 
The MSE loss ultimately does not account for the inverse variance factor in the Gaussian noise likelihood.
Overall, the statistics of the implied models of standardized SCMs are empirically further from SCMs with equal noise variances than their \ours counterparts.

\vsection{Conclusions}\label{sec:conclusion}

We describe the \ours, a one-line modification of the SCM that modulates the scale of interaction between the causal mechanism $f_i$ and noise $\noise{i}$ at each variable $x_i$.
Through several theoretical and experimental results,
we study its properties in relation to standard SCMs and its ramifications for benchmarking causal discovery algorithms.
To conclude, we highlight the following key takeaways:

\vparagraph{Standardizing during the generative process removes sortability \artefacts.}
When the functions $f_i$ and the noise $\noise{i}$ are, for example, sampled \iid for each variable $x_i$, SCMs exhibit \artefacts that are not removed when shifting and scaling the generated data.
Our results in Section \ref{sec:experiments} show that \ourss are effective at removing $\varop$- and \rtwo-sortability.
This makes \ourss a useful complement to structure learning benchmarks with SCMs, enabling a specific evaluation of the ability of algorithms to transfer to real-world settings that do not exhibit \rtwo \artefacts.
Despite the removed sortability \artefacts, causal discovery algorithms are able to infer nontrivial structure from \ours data (Figure \ref{fig:results-benchmark-main}).

\vparagraph{Standardizing post-hoc can lead to partial identifiability and degenerate implied SCMs.}
%
% \looseness-1
Scaling the units of SCM data is not innocuous.
Theorem \ref{th:three_part_ident} shows that mild knowledge on the distribution of $f_i$ can identify edges in standardized SCMs that are typically not identifiable from observational data.
To our knowledge, our result is the first concerning the identifiability of $\mathcal{G}$ from the standardized observational distribution of linear SCMs. This may make benchmarks, where similar assumptions on $f_i$ often hold, trivial under standardized SCMs. 
Moreover, Figure \ref{fig:results-induced-noise-main} shows that standard SCMs can collapse to modeling near-zero exogenous noise.
Theorems \ref{th:cevbound} and \ref{th:three_non_ident} demonstrate that
neither property appears in the analogous \ourss. 
Ultimately, (non)identifiability may be either a feature or bug, depending on whether assumptions are verifiable in practice or a priori known during evaluation.

\vparagraph{\ourss are stable and scale-free, making them useful models beyond benchmarking.}
Beyond data generation,
the stable generative process of \ourss might also provide insights for modeling, e.g., large, temporal  \citep{kilian2013structural,pamfil2020dynotears} or physical systems.
In \ourss, the scale of a causal mechanism $f_i$ and its unexplained variation $\noise{i}$ are both unit-less and independent from its position in the causal ordering (Section \ref{sec:iscm}).
If we think of each structural assignment as a physical mechanism, energy conservation must be respected, since a mechanism can only output as much energy as it receives from its inputs (including unexplained noise). 
Standardization may thus not be completely unrealistic, since it naturally bounds the output scale of every mechanism.

Since each \ours implies a standard SCM, \ourss can also be viewed as a reparameterization of SCMs that facilitates modeling and learning the functions $f_i$ on the same scale, \eg, under a shared prior or level of regularization.
Conceptually, \ourss are related to batch normalization \citep{ioffe2015batch}, a technique used to stabilize the optimization of neural networks, which compose sequences of functions like SCMs, by adding internal standardization. 
Overall, these properties may make the \ours a useful structural equation model beyond the benchmarking problem studied here.

\section*{Reproducibility Statement}
To facilitate reproducibility, we provide code, configuration files, and the commands used to obtain all the experimental results in this manuscript as supplementary material. They are also available at: \href{https://github.com/werkaaa/iscm}{https://github.com/werkaaa/iscm}. In \Cref{sec:csl-setup}
, we describe the experimental setup, including the computational resources and wall time used to produce the results. Finally, we provide detailed proofs of our theoretical results in \Cref{sec:proofs}.

\section*{Acknowledgements}
This research was supported by the European Research Council (ERC) under the European Union's Horizon 2020 research and innovation program grant agreement no.\ 815943 and the Swiss National Science Foundation under NCCR Automation, grant agreement 51NF40 180545.
This work was also supported by the German Federal Ministry of Education and Research (BMBF): T{\"u}bingen AI Center, FKZ: 01IS18039B, and by the Machine Learning Cluster of Excellence, EXC number 2064/1, project number 390727645.

\bibliography{bibliography}

\begin{thebibliography}{}

\bibitem[Alon and Spencer, 2016]{alon2016probabilistic}
Alon, N. and Spencer, J.~H. (2016).
\newblock {\em The probabilistic method}.
\newblock John Wiley \& Sons.

\bibitem[Andersson et~al., 1997]{andersson1997characterization}
Andersson, S.~A., Madigan, D., and Perlman, M.~D. (1997).
\newblock A characterization of markov equivalence classes for acyclic digraphs.
\newblock {\em The Annals of Statistics}, 25(2):505--541.

\bibitem[Barab{\'a}si and Albert, 1999]{barabasi1999emergence}
Barab{\'a}si, A.-L. and Albert, R. (1999).
\newblock Emergence of scaling in random networks.
\newblock {\em science}, 286(5439):509--512.

\bibitem[B{\"u}hlmann et~al., 2014]{buhlmann2014cam}
B{\"u}hlmann, P., Peters, J., and Ernest, J. (2014).
\newblock {CAM}: Causal additive models, high-dimensional order search and penalized regression.

\bibitem[Chickering, 2002]{chickering2002optimal}
Chickering, D.~M. (2002).
\newblock Optimal structure identification with greedy search.
\newblock {\em Journal of machine learning research}, 3(Nov):507--554.

\bibitem[Dibaeinia and Sinha, 2020]{dibaeinia2020sergio}
Dibaeinia, P. and Sinha, S. (2020).
\newblock {SERGIO}: a single-cell expression simulator guided by gene regulatory networks.
\newblock {\em Cell systems}, 11(3):252--271.

\bibitem[Erd{\H{o}}s and R{\'e}nyi, 1959]{erdos1959}
Erd{\H{o}}s, P. and R{\'e}nyi, A. (1959).
\newblock On random graphs.
\newblock {\em Publicationes Mathematicae}, 6:290--297.

\bibitem[Imbens and Rubin, 2015]{imbens2015causal}
Imbens, G.~W. and Rubin, D.~B. (2015).
\newblock {\em Causal inference in statistics, social, and biomedical sciences}.
\newblock Cambridge university press.

\bibitem[Ioffe and Szegedy, 2015]{ioffe2015batch}
Ioffe, S. and Szegedy, C. (2015).
\newblock Batch normalization: Accelerating deep network training by reducing internal covariate shift.
\newblock In {\em International conference on machine learning}, pages 448--456. pmlr.

\bibitem[Kaiser and Sipos, 2021]{kaiser2021unsuitability}
Kaiser, M. and Sipos, M. (2021).
\newblock Unsuitability of {NOTEARS} for causal graph discovery.
\newblock {\em arXiv preprint arXiv:2104.05441}.

\bibitem[Kalainathan et~al., 2020]{kalainathan2020causal}
Kalainathan, D., Goudet, O., and Dutta, R. (2020).
\newblock Causal discovery toolbox: Uncovering causal relationships in python.
\newblock {\em Journal of Machine Learning Research}, 21(37):1--5.

\bibitem[Kilian, 2013]{kilian2013structural}
Kilian, L. (2013).
\newblock Structural vector autoregressions.
\newblock In {\em Handbook of research methods and applications in empirical macroeconomics}, pages 515--554. Edward Elgar Publishing.

\bibitem[Lachapelle et~al., 2020]{lachapelle2020gradient}
Lachapelle, S., Brouillard, P., Deleu, T., and Lacoste-Julien, S. (2020).
\newblock Gradient-based neural {DAG} learning.
\newblock In {\em International Conference on Learning Representations}.

\bibitem[Loh and B{\"u}hlmann, 2014]{loh2014high}
Loh, P.-L. and B{\"u}hlmann, P. (2014).
\newblock High-dimensional learning of linear causal networks via inverse covariance estimation.
\newblock {\em The Journal of Machine Learning Research}, 15(1):3065--3105.

\bibitem[Lorch et~al., 2022]{lorch2022amortized}
Lorch, L., Sussex, S., Rothfuss, J., Krause, A., and Sch{\"o}lkopf, B. (2022).
\newblock Amortized inference for causal structure learning.
\newblock {\em Advances in Neural Information Processing Systems}, 35:13104--13118.

\bibitem[Maathuis et~al., 2010]{maathuis2010predicting}
Maathuis, M.~H., Colombo, D., Kalisch, M., and B{\"u}hlmann, P. (2010).
\newblock Predicting causal effects in large-scale systems from observational data.
\newblock {\em Nature methods}, 7(4):247--248.

\bibitem[Meek, 1995]{meek1995causal}
Meek, C. (1995).
\newblock Causal inference and causal explanation with background knowledge.
\newblock In {\em Proceedings of the Eleventh conference on Uncertainty in artificial intelligence}, pages 403--410.

\bibitem[Mooij et~al., 2020]{mooij2020joint}
Mooij, J.~M., Magliacane, S., and Claassen, T. (2020).
\newblock Joint causal inference from multiple contexts.
\newblock {\em The Journal of Machine Learning Research}, 21(1):3919--4026.

\bibitem[Ng et~al., 2020]{ng2020role}
Ng, I., Ghassami, A., and Zhang, K. (2020).
\newblock On the role of sparsity and {DAG} constraints for learning linear {DAG}s.
\newblock {\em Advances in Neural Information Processing Systems}, 33:17943--17954.

\bibitem[Pamfil et~al., 2020]{pamfil2020dynotears}
Pamfil, R., Sriwattanaworachai, N., Desai, S., Pilgerstorfer, P., Georgatzis, K., Beaumont, P., and Aragam, B. (2020).
\newblock {DYNOTEARS}: Structure learning from time-series data.
\newblock In {\em International Conference on Artificial Intelligence and Statistics}, pages 1595--1605. Pmlr.

\bibitem[Pearl, 2009]{pearl2009causality}
Pearl, J. (2009).
\newblock {\em Causality}.
\newblock Cambridge university press.

\bibitem[Peters and B{\"u}hlmann, 2014]{peters2014identifiability}
Peters, J. and B{\"u}hlmann, P. (2014).
\newblock Identifiability of {Gaussian} structural equation models with equal error variances.
\newblock {\em Biometrika}, 101(1):219--228.

\bibitem[Peters et~al., 2017]{peters2017elements}
Peters, J., Janzing, D., and Sch{\"o}lkopf, B. (2017).
\newblock {\em Elements of causal inference: foundations and learning algorithms}.
\newblock The MIT Press.

\bibitem[Rahimi and Recht, 2007]{rahimi2007random}
Rahimi, A. and Recht, B. (2007).
\newblock Random features for large-scale kernel machines.
\newblock {\em Advances in neural information processing systems}, 20.

\bibitem[Reisach et~al., 2021]{reisach2021beware}
Reisach, A., Seiler, C., and Weichwald, S. (2021).
\newblock Beware of the simulated {DAG}! {Causal} discovery benchmarks may be easy to game.
\newblock {\em Advances in Neural Information Processing Systems}, 34:27772--27784.

\bibitem[Reisach et~al., 2024]{reisach2024scale}
Reisach, A., Tami, M., Seiler, C., Chambaz, A., and Weichwald, S. (2024).
\newblock A scale-invariant sorting criterion to find a causal order in additive noise models.
\newblock {\em Advances in Neural Information Processing Systems}, 36.

\bibitem[Runge et~al., 2019]{runge2019inferring}
Runge, J., Bathiany, S., Bollt, E., Camps-Valls, G., Coumou, D., Deyle, E., Glymour, C., Kretschmer, M., Mahecha, M.~D., Mu{\~n}oz-Mar{\'\i}, J., et~al. (2019).
\newblock Inferring causation from time series in earth system sciences.
\newblock {\em Nature communications}, 10(1):2553.

\bibitem[Sachs et~al., 2005]{sachs2005causal}
Sachs, K., Perez, O., Pe'er, D., Lauffenburger, D.~A., and Nolan, G.~P. (2005).
\newblock Causal protein-signaling networks derived from multiparameter single-cell data.
\newblock {\em Science}, 308(5721):523--529.

\bibitem[Schaffter et~al., 2011]{schaffter2011genenetweaver}
Schaffter, T., Marbach, D., and Floreano, D. (2011).
\newblock {GeneNetWeaver}: in silico benchmark generation and performance profiling of network inference methods.
\newblock {\em Bioinformatics}, 27(16):2263--2270.

\bibitem[Sch{\"o}lkopf, 2022]{scholkopf2022causality}
Sch{\"o}lkopf, B. (2022).
\newblock Causality for machine learning.
\newblock In {\em Probabilistic and causal inference: The works of Judea Pearl}, pages 765--804.

\bibitem[Seng et~al., 2024]{seng2024learning}
Seng, J., Ze{\v{c}}evi{\'c}, M., Dhami, D.~S., and Kersting, K. (2024).
\newblock Learning large {DAG}s is harder than you think: Many losses are minimal for the wrong {DAG}.
\newblock In {\em The Twelfth International Conference on Learning Representations}.

\bibitem[Shimizu et~al., 2006]{shimizu2006linear}
Shimizu, S., Hoyer, P.~O., Hyv{\"a}rinen, A., Kerminen, A., and Jordan, M. (2006).
\newblock A linear non-{Gaussian} acyclic model for causal discovery.
\newblock {\em Journal of Machine Learning Research}, 7(10).

\bibitem[Shimizu et~al., 2011]{shimizu2011directlingam}
Shimizu, S., Inazumi, T., Sogawa, Y., Hyvarinen, A., Kawahara, Y., Washio, T., Hoyer, P.~O., Bollen, K., and Hoyer, P. (2011).
\newblock {DirectLiNGAM}: A direct method for learning a linear non-{Gaussian} structural equation model.
\newblock {\em Journal of Machine Learning Research-JMLR}, 12(Apr):1225--1248.

\bibitem[Spirtes and Glymour, 1991]{spirtes1991algorithm}
Spirtes, P. and Glymour, C. (1991).
\newblock An algorithm for fast recovery of sparse causal graphs.
\newblock {\em Social science computer review}, 9(1):62--72.

\bibitem[Squires and Uhler, 2022]{Squires_2022}
Squires, C. and Uhler, C. (2022).
\newblock Causal structure learning: A combinatorial perspective.
\newblock {\em Foundations of Computational Mathematics}, 23(5):1781–1815.

\bibitem[Squires et~al., 2022]{squires2022causal}
Squires, C., Yun, A., Nichani, E., Agrawal, R., and Uhler, C. (2022).
\newblock Causal structure discovery between clusters of nodes induced by latent factors.
\newblock In {\em Conference on Causal Learning and Reasoning}, pages 669--687. PMLR.

\bibitem[Verma and Pearl, 2013]{verma2013equivalence}
Verma, T.~S. and Pearl, J. (2013).
\newblock On the equivalence of causal models.

\bibitem[Villar et~al., 2023]{villar2023towards}
Villar, S., Hogg, D.~W., Yao, W., Kevrekidis, G.~A., and Sch{\"o}lkopf, B. (2023).
\newblock Towards fully covariant machine learning.
\newblock {\em arXiv preprint arXiv:2301.13724}.

\bibitem[Vowels et~al., 2022]{vowels2022d}
Vowels, M.~J., Camgoz, N.~C., and Bowden, R. (2022).
\newblock {D’ya like DAGs?} {A} survey on structure learning and causal discovery.
\newblock {\em ACM Computing Surveys}, 55(4):1--36.

\bibitem[Wien{\"o}bst et~al., 2023]{wienobst2023efficient}
Wien{\"o}bst, M., Luttermann, M., Bannach, M., and Liskiewicz, M. (2023).
\newblock Efficient enumeration of markov equivalent dags.
\newblock In {\em Proceedings of the AAAI Conference on Artificial Intelligence}, volume~37, pages 12313--12320.

\bibitem[Wright, 1934]{wright1934method}
Wright, S. (1934).
\newblock The method of path coefficients.
\newblock {\em The Annals of Mathematical Statistics}, 5(3):161--215.

\bibitem[Yu et~al., 2019]{yu2019dag}
Yu, Y., Chen, J., Gao, T., and Yu, M. (2019).
\newblock {DAG-GNN}: {DAG} structure learning with graph neural networks.
\newblock In {\em International Conference on Machine Learning}, pages 7154--7163. PMLR.

\bibitem[Zheng et~al., 2018]{zheng2018dags}
Zheng, X., Aragam, B., Ravikumar, P.~K., and Xing, E.~P. (2018).
\newblock {DAGs with NO TEARS}: Continuous optimization for structure learning.
\newblock {\em Advances in neural information processing systems}, 31.

\bibitem[Zheng et~al., 2020]{zheng2020learning}
Zheng, X., Dan, C., Aragam, B., Ravikumar, P., and Xing, E. (2020).
\newblock Learning sparse nonparametric {DAGs}.
\newblock In {\em International Conference on Artificial Intelligence and Statistics}, pages 3414--3425. Pmlr.

\end{thebibliography}

%%%%%%%%%%%%%%%%%%%%%%%%%%%%%%%%%%%%%%%%%%%%%%%%%%%%%%%%%%%%

\newpage
\appendix
% APPENDIX TOC
\section*{Contents}
\startcontents[sections]
\printcontents[sections]{l}{1}{\setcounter{tocdepth}{2}}
\newpage

\section{Implied Models}\label{sec:implied-models}
In this section, we describe how to express the assignments of the observed variables of standardized SCMs and \ourss with a general additive noise mechanism
\begin{align}
    \label{eq:implied-model-mechanism}
    f_i(\mathbf{x}, \varepsilon_i) = f_i(\mathbf{x}) + \varepsilon_i \, ,
\end{align}
in the form of \eqref{eq:scm},
while sharing the same causal graph $\mathcal{G}$ and exogenous noise variables $\Noise$.
We obtain the SCM form by moving the standardization steps into the causal mechanisms by linearly rescaling $f_i$ and $\noise{i}$, such that each observed variable is only a function of observed variables and the noise $\noise{i}$.
Throughout this work, the {\bf \em implied (SCM) model} denotes the specific construction given in the following two subsections. 
For this, we assume that we can express the first two moments of the system in closed form.
Similar to the main text, we overload notation for both standardized SCMs and \ourss and write
\begin{equation*}
    \mu_i := \Expempty[x_i] \quad\quad\text{and}\quad\quad s_i := \sqrt{\var{x_i}} \, .
\end{equation*}
We also derive analytic expressions for the weights of the implied models of linear \ourss defined by Equation \eqref{eq:linear_scm}, which we later use in our proofs.

\subsection{Implied Model of a Standardized SCM}

Let $\s{\bf{x}}$ be modeled by \eqref{eq:standardized_scm} 
with causal mechanisms defined by Equation \eqref{eq:implied-model-mechanism}.
We recall that $\s{\bf{x}}$ are the observations obtained after standardizing $\bf{x}$.
Thus, we can rearrange $\s{x}_i$ as
\begin{equation*}
    x_i = s_i\s{x}_i + \mu_i 
\end{equation*}
and substitute every unstandardized variable $x_i$ by a function of its standardized parents $\pa{i}^s$ as
\begin{equation*}
    \s{x}_i = \frac{x_i - \mu_i}{s_i} = \frac{f_i(\pa{i}) + \noise{i} - \mu_i}{s_i} 
    = \frac{f_i(\pa{i}^s \odot \boldsymbol{s}_{\panox{i}}
    + \boldsymbol{\mu}_{\panox{i}}) - \mu_i}{s_i} + \frac{1}{s_i} \noise{i}\, ,
\end{equation*}
where $\odot$ denotes elementwise multiplication, and $\boldsymbol{\mu}_{\panox{i}}$ and $\boldsymbol{s}_{\panox{i}}$ are the vectors of the parent means and standard deviations before standardization. 
Thus, the assignments of $\s{\bf{x}}$ in a standardized SCM can be written as the SCM given by
\begin{equation*}
    \s{x}_i = \s{g}_i(\s{\pa{i}}) + \s{\theta}_i \noise{i} \, ,
\end{equation*}
with implied noise scales $ \s{\theta}_i := 1/s_i$
and implied causal mechanisms 
\begin{align*}
    \s{g}_i(\s{\pa{i}}) &:= 
    \begin{cases}
        \displaystyle \frac{f_i(\s{\pa{i}} \odot \boldsymbol{s}_{\panox{i}} + \boldsymbol{\mu}_{\panox{i}}) - \mu_i}{s_i} & \text{if $i$ is a non-root variable, and} \\
        \displaystyle \frac{f_i - \mu_i}{s_i} & \text{if $i$ is a root variable.}
    \end{cases}
\end{align*}

\subsection{Implied Model of an \ours}\label{ssec:implied_models_iscm}

Let $\as{\bf{x}}$ be modeled by \eqref{eq:iscm} with causal mechanisms defined by Equation \eqref{eq:implied-model-mechanism}.
In an \ours, 
$\as{\bf{x}}$ are the observed variables and $\bf{x}$ are the latent variables.
We can express every observation $\as{x}_i$ in terms of its observed parents $\paours{i}$ as
\begin{equation*}
    \as{x}_i = \frac{x_i - \mu_i}{s_i} 
    = \frac{f_i(\paours{i}) + \noise{i} - \mu_i}{s_i}
    = \frac{f_i(\paours{i}) - \mu_i}{s_i} +  \frac{1}{s_i} \noise{i}  \, .
\end{equation*}
Thus, the assignments of $\as{\bf{x}}$ in a \ours can be written as the SCM given by
\begin{equation*}
    \as{x}_i = \as{g}_i(\paours{i}) + \as{\theta}_i \noise{i}  \, ,
\end{equation*}
with implied noise scales $\as{\theta}_i := 1/s_i$
and implied causal mechanisms 
\begin{align*}
    \as{g}_i(\paours{i}) &:= 
    \begin{cases}
        \displaystyle \frac{f_i(\paours{i}) - \mu_i}{s_i} & \text{if $i$ is a non-root variable, and} \\
        \displaystyle \frac{f_i - \mu_i}{s_i} & \text{if $i$ is a root variable.}
    \end{cases}
\end{align*}

\subsection{Weights of the Implied Model of a Linear \ours}
\label{ssec:implied_linear_iscm}
Here, we derive the analytical form for the mechanisms of the implied model of a linear \ours with zero-centered, additive noise $\noise{i}$.
This \ours is given by
\begin{equation*}
     x_i := \mathbf{w}_i^T\paours{i} + \noise{i} 
     \quad\quad\text{and}\quad\quad
    \as{x}_i := \frac{x_i}{\sqrt{\var{x_i}}} \, ,
\end{equation*}
where $\noise{i}$ satisfies $\Expempty[\noise{i}] = 0$ and $\var{\noise{i}} = \sigma_i^2$.
We can write the above as
\begin{align*}
     \as{x}_i &= \frac{\mathbf{w}_i^T\paours{i} + \noise{i}} {\sqrt{\var{x_i}}} = \frac{\sum_{j \in \paset{i}} w_{j, i}\as{x}_j + \noise{i}} {\sqrt{\var{x_i}}}
     = \sum_{j \in \paset{i}}\frac{w_{j, i}}{\sqrt{\var{x_i}}} \, \as{x}_j + \frac{1}{\sqrt {\var{x_i}}}\noise{i} \, .
\end{align*}
It follows that the implied SCM of a linear \ours is also linear, with weights and noise variances given by
\begin{equation}
\label{eq:implied_weights_noise_variances}
    \as{w}_{j, i} = \frac{w_{j, i}}{\sqrt{\var{x_i}}}
    \quad\quad
    \text{and}
    \quad\quad
    \as{\sigma}^2_i = \frac{\sigma^2_i}{\var{x_i}} \, .
\end{equation}
In the above, we can write the variance of $x_i$ explicitly as
\begin{equation}
\label{eq:var_hidden_vars}
\begin{aligned}
    \var{x_i} 
    &= \varop \bigg [
    \sum_{j \in \paset{i}}w_{j, i}\as{x}_j + \noise{i} 
    \bigg ] 
    = \varop \bigg [
    \sum_{j \in \paset{i}}w_{j, i}\as{x}_j
    \bigg ]
    + \sigma_i^2 \\
    &\overset{\tiny \circled{1}}{=} \sum_{k \in \paset{i}}\sum_{j \in \paset{i}} \cov{w_{k, i}\as{x}_k, w_{j, i}\as{x}_j} + \sigma_i^2 \\
    &\overset{\tiny \circled{2}}{=} \sum_{k \in \paset{i}}\sum_{j \in \paset{i}} w_{k, i}w_{j, i}\cov{\as{x}_k,\as{x}_j} + \sigma_i^2 \, ,
\end{aligned}
\end{equation}
where $\tiny \circled{1}$ follows from Bienaym{\'e}'s identity and $\tiny \circled{2}$ from covariance being bilinear. 
Substituting the variance into the expressions for the weights and noise variances, 
we obtain
\begin{align}
    \as{w}_{j, i} &=  \frac{w_{j, i}}{\sqrt{\sum_{k \in \paset{i}}\sum_{j \in \paset{i}} w_{k, i}w_{j, i}\cov{\as{x}_k, \as{x}_j} + \sigma_i^2}} \, , \label{eq:weights_implied_ours} \\
    \as{\sigma}^2_i &=  \frac{\sigma^2_i}{\sum_{k \in \paset{i}}\sum_{j \in \paset{i}} w_{k, i}w_{j, i}\cov{\as{x}_k, \as{x}_j} + \sigma_i^2} \, . \label{eq:noise_var_implied_ours}
\end{align}
Finally, by construction, the variables $\as{\bf{x}}$ of an \ours have unit marginal variances.
Thus, when the parents of $\as{x}_i$ are pairwise independent, \cref{eq:noise_var_implied_ours} simplifies to
\begin{equation}
\label{eq:weights_implied_ours_indep}
    \as{w}_{j, i} =  \frac{w_{j, i}}{\sqrt{\sum_{j \in \paset{i}}w_{j, i}^2 + \sigma_i^2}}.
\end{equation}
This independence condition always holds when the DAG $\mathcal{G}$ is a forest.

\paragraph{Efficient computation}
We can efficiently compute the implied model weights using a bottom-up dynamic programming approach.
This allows sampling data directly from the exact implied model of an \ours without resorting to empirical standardization statistics.
Algorithm \ref{alg:implied_weights} describes the procedure.
We iteratively compute the weights and noise variances of the implied model following Equations \eqref{eq:weights_implied_ours} and \eqref{eq:noise_var_implied_ours}. 
At each iteration, we  update the covariance matrix according to Lemma \ref{lemma:cov}. 
The algorithm processes the nodes in topological order, mirroring the proof by induction of Lemma \ref{lemma:cov}. 

\begin{algorithm}
\caption{Computing the Implied Model Parameters of Linear \ourss}
\begin{algorithmic}[0]
\State \textbf{Input:} DAG $\mathcal{G}$, weight matrix $[W]_{i,j} := w_{i,j}$, noise variances $\bm{\sigma}^2 \in \RR_{+}^\numvars$
\State $\as{W} \gets 0_{d\times d}$ 
\State $\Sigma \gets I_{\numvars}$ 
\State $\pi \gets$ topological ordering of $\mathcal{G}$ 
\For{$i=1$ to $\numvars$}
    \vspace{1pt}
    \State $\mathbf{w} \gets W_{:, \pi_i}$ \Comment{Edge weights ingoing to $\pi_i$}
    \vspace{1pt}
    \State $\var{x_{\pi_i}} \gets \mathbf{w}^\top \Sigma \mathbf{w} + \sigma^2_{\pi_i}$ 
    \Comment{Equation \eqref{eq:var_hidden_vars}}
    \vspace{0pt}
    \State $\as{W}_{:, \pi_i} \gets \mathbf{w}/\sqrt{\var{x_{\pi_i}}}$ 
    \Comment{Equation \eqref{eq:weights_implied_ours}}
    \vspace{3pt}
    \State $\as{\sigma}^2_{\pi_i} \gets \sigma^2_{\pi_i} /\var{x_{\pi_i}}$ 
    \Comment{Equation \eqref{eq:noise_var_implied_ours}}
    \vspace{2pt}
    \For{$j=1$ to $i$}
        \State $\Sigma_{\pi_{j}, \pi_{i}} \gets (\Sigma_{\pi_{j}, :})^\top \as{W}_{:, \pi_i}$
        \State  $\Sigma_{\pi_{i}, \pi_{j}} \gets \Sigma_{\pi_{j}, \pi_{i}}$
    \EndFor
\EndFor \\
\Return implied weights $\as{W}$, implied noise variances $\as{\bm{\sigma}}^2$
\end{algorithmic}
\label{alg:implied_weights}
\end{algorithm}
\vspace*{10pt}

\newpage

\section{Interventions in \ourss}
\label{sec:interventionssss_iscm}

For an \ours $(\mathbf{S},\distnoisejoint)$, we can formalize interventions as changes to its causal mechanisms $f_i$, analogous to the common definition for SCMs
\citep{peters2017elements}.
Specifically, let $\mu_i := \Expempty[x_i]$ and $s_i := \smash{\sqrt{\var{x_i}}}$ be the mean and standard deviation of the  latent variable $x_i$.
We define an {\em intervention} as replacing one (or several) of the assignments to the latent variables as
\begin{align*}
\begin{split}
    x_i &:= h_i(\paours{i}, \noise{i}),
\end{split}
\end{align*}
for some function $h_i$.
Importantly, the statistics $\mu_i$ and $s_i$ used for the standardization operation
\begin{align*}
    \as{x}_i := \frac{x_i - \mu_i}{s_i}
\end{align*}
remain {\em unchanged}.
Thus, if we intervene on mechanisms of \ourss, the variables $\as{\bf{x}}$ may no longer have zero mean and unit variance, and the perturbations of $x_i$ propagate downstream through the causal mechanisms.
We note that, under the above definition, intervening on an \ours through a new mechanism $h_i$ is equivalent to intervening on the implied SCM of an \ours with the mechanism
\begin{align*}
    \as{h}_i(\mathbf{x}, \varepsilon) = \frac{h_i(\mathbf{x}, \varepsilon) - \mu_i}{s_i} \, .
\end{align*}
Appendix \ref{ssec:implied_models_iscm} provides details on the implied models of \ourss.

\section{Proofs}\label{sec:proofs}

\subsection{Definitions}
We define the key concepts used throughout our analysis.
A {\em path} $\ugpath{j}{i}$ between $v_i$ and $v_j$ is a set of directed edges that allows reaching $v_i$ from $v_j$ (and vice versa), not taking into account edge directionality, and that joins unique vertices.
We call a node a {\em collider} in a path if the node has two ingoing directed edges in the path.
We say that a path between $v_i$ and $v_j$ is {\em unblocked} if and only if there is no node $v_k$ that is a collider in the path (see Figure \ref{fig:collider-in-path}). 
Finally, we use the term {\em undirected connected component} to refer to any maximal subgraph of $\mec$ in which any two nodes are connected by a path containing only undirected edges \citep{wienobst2023efficient}.

\subsection{Explicit Covariance in Linear SCMs with Unit Marginal Variances}
\label{ssec:cov_formula}

\covform*
\begin{proof}
We will give a proof by induction on the number of vertices $\numvars = |\mathcal{V}|$ in the DAG $\mathcal{G}$. 
Without loss of generality, we assume that the indices of the nodes are ordered according to some fixed topological ordering $\pi$, so $\pi(j) < \pi(i)$ if $j < i$.
By the unit marginal variance assumption, 
\begin{equation}
\label{eq:var1}
    \cov{x_i, x_i} = \var{x_i} = 1 \, .
\end{equation}
From now on and without loss of generality, we consider two arbitrary indices $j < i$. The covariance between $x_i$ and $x_j$ is symmetric.

\paragraph{Base case ($\numvars = 2$)} 

If $v_j$ is not an ancestor of $v_i$ in graph $\mathcal{G}$, they both must be root nodes, because the edge $v_i \leftarrow v_j$ is the only possible edge when $\pi(j) < \pi(i)$.
Since $x_i$ and $x_j$ are root nodes, they are independent and $\cov{x_i, x_j} = 0$. 
Since a path of one edge cannot contain a collider, there are no unblocked paths between $v_i$ and $v_j$, so the RHS of Equation \eqref{eq:cov_form} is also $0$. 

Conversely, if $v_j$ is an ancestor of $v_i$ in graph $\mathcal{G}$, $v_j$ is the only parent and ancestor of $v_i$.
This implies that
\begin{align*}
\begin{split}
    \cov{x_i, x_j} &= \cov{w_{j, i}x_j + \noise{i}, x_j} \\
    &= w_{j, i}\cov{x_j, x_j}\\
    &= w_{j, i} \, ,
\end{split}
\end{align*}
where the last equality follows from Equation \eqref{eq:var1}. This is exactly Equation \eqref{eq:cov_form} for a two-node graph.

\begin{figure}
    \centering
% \begin{tikzpicture}[>=stealth,shorten >=1pt,auto,node distance=2cm, thick]
\begin{tikzpicture}[node distance=1.7cm]

  % Nodes
  \node[state] (vi)                {$v_i$};
  \node[state] (v2) [above left of=vi]  {};
  \node[state] (v3) [left of=v2]  {};
  \node[state] (v4) [above right of=vi]  {};
  \node[state] (v5) [right of=v4]  {};
  \node[state] (vj) [above right of=v2]  {$v_j$};

  % Edges
  \path[->] (v2) edge (vi)
            (v3) edge (vi)
            (v4) edge (vi)
            (v5) edge (vi)
            (v4) edge[dotted] (v5)
            (vj) edge[dotted] (v3)
            (vj) edge[dotted, bend right] (v3)
            (vj) edge[dotted, bend left] (v5)
            (vj) edge[blue] (vi)
            (vj) edge[dotted, bend left] (v4)
            (vj) edge[dotted] (v4)
            (vj) edge[dotted, bend right] (v4);

\end{tikzpicture}
    \caption{\textbf{Lemma \ref{lemma:cov} inductive step.} If $v_j$ is before $v_i$ in the topological ordering, then all unblocked paths from $v_j$ to $v_i$ must contain a parent of $v_i$ as the second to last node. 
    To see this, suppose an unblocked path from $v_j$ to $v_i$ would instead contain a child of $v_i$ as the last node. 
    Then, there either exists a collider on the path to $v_j$, contradicting that the path is unblocked, or all edges in the path point away from $v_i$, implying that $v_j$ is a descendant of $v_i$ and contradicting the topological ordering.
    Dotted lines represent unblocked paths (which may have common nodes). Solid lines represent edges. $v_j$ may or may not be a parent of $v_i$, which we illustrate with a blue arrow.}
    \label{fig:cov_form_vis}
\end{figure}
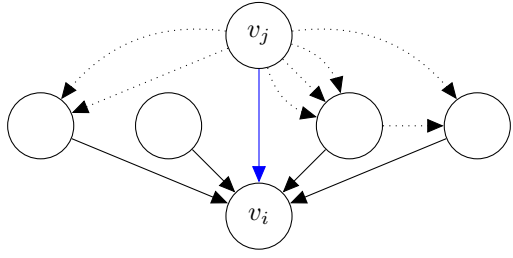

\paragraph{Induction step ($\numvars > 2$)} Let us assume that Equation \eqref{eq:cov_form} holds for all graphs of size $\numvars -1$,
and let $\mathcal{G}$ have $\numvars$ nodes.
We will apply the inductive hypothesis to the subgraph of the first $d-1$ nodes in $\mathcal{G}$ and show that the full DAG $\mathcal{G}$ including the $d$-th vertex still satisfies Equation \eqref{eq:cov_form}.
First, we note that, since the $\numvars$-th vertex is last in the topological ordering, it has no outgoing edges. 
Because the node has no outgoing edges, it is not visited on any unblocked paths between $v_j$ and $v_i$ for $i,j<\numvars$, as $v_d$ must be a collider in any path.
Second, adding the node $v_\numvars$ to a subsystem containing $x_1, \dots, x_{\numvars-1}$ results in no change to the joint distribution of $x_i, x_j$.
Therefore, it has no effect on the covariance between $x_i, x_j$. Hence, both sides of \cref{eq:cov_form} are unchanged by the presence of a node $v_d$ for all $i,j<\numvars$ and the equation still holds for all $i,j<\numvars$.

We want to show that \cref{eq:cov_form} also holds for $i = \numvars$ and any $j < i$. 
For this, we first construct all unblocked paths from $v_j$ to $v_i$. First, we note that any unblocked path must go through the parents $k \in \text{pa}(i)$, because $j<i$ in the topological ordering (see Figure \ref{fig:cov_form_vis}). Moreover, for any $k \in \text{pa}(i)$, appending $k\rightarrow i$ to an unblocked path $\ugpath{j}{k}$ between $v_j$ and $v_k$, creates a new unblocked path between $v_j$ and $v_i$. 
Hence, for $i=\numvars$ and any $j < i$, it holds that
\begin{align*}
\begin{split}
    \cov{x_{i}, x_j} &= \cov{\sum_{k \in \paset{i}}w_{k, i}x_k + \noise{i}, x_j} \\
    &= \sum_{k \in \paset{i}}w_{k, i}\cov{x_k, x_j} \\
    &\overset{\tiny \circled{1}}{=} w_{j, i} \cov{x_j, x_j} + \sum_{k \in \paset{i} \setminus{j}}w_{k, i}\cov{x_k, x_j} \\
    &\overset{\tiny \circled{2}}{=} w_{j, i} + \sum_{k \in \paset{i} \setminus{j}}w_{k, i}\sum_{\ugpath{j}{k} \in \ugpathset{j}{k}}\prod_{(l, m) \in \ugpath{j}{k}}w_{l, m}\\
    &= w_{j, i} + \sum_{k \in \paset{i}\setminus{j}} \left( \sum_{\ugpath{j}{k} \in \ugpathset{j}{k}}w_{k, i}\prod_{(l, m) \in \ugpath{j}{k}}w_{l, m} \right) \\
     &\overset{\tiny \circled{3}}{=} \sum_{k \in \paset{i}}\left(\mathbbm{1}[k=j] w_{j, i} +  \mathbbm{1}[k\neq j]  \left(\sum_{\ugpath{j}{k} \in \ugpathset{j}{k}}w_{k, i}\prod_{(l, m) \in \ugpath{j}{k}}w_{l, m}\right)\right) \\
    &\overset{\tiny \circled{4}}{=} \sum_{\ugpath{j}{i} \in \ugpathset{j}{i}}\prod_{(l, m) \in \ugpath{j}{i}}w_{l, m} \,  .
\end{split}
\end{align*}
For step $\tiny \circled{1}$, consider two cases. If $j \notin \text{pa}(i)$, then $w_{j,i}=0$ and the equality trivially holds. If $j \in \text{pa}(i)$, then it holds by pulling the term for $j$ out of the sum in the previous line. In $\tiny \circled{2}$, we apply the inductive hypothesis to express the covariances in terms of a sum of products of weights. 
In $\tiny \circled{3}$, we rearrange terms to pull the $w_{j, i}$ term into the sum over parents. 
In $\tiny \circled{4}$, we use the fact that the set of unblocked paths from $v_j$ to $v_i$ corresponds to all paths from $v_j$ to any parent of $v_i$, which is $v_k$ here, with an extra edge $k\rightarrow i$ appended, and a possible single-edge path directly connecting $v_j$ with $v_i$ (if $j\in\paset{i}$). 

This completes the induction step and the proof.
\end{proof}

\subsection{Bound on the Fraction of CEV}
\label{ssec:cev_bound}
\cevbound*
\begin{proof}
We begin by bounding the variance of the latent variables $x_i$ in \ourss. Starting from Equation \eqref{eq:var_hidden_vars}, we can  bound the covariances with a product of unit variances as
\begin{align*}
\begin{split}
    \var{x_i} &= \sum_{k \in \paset{i}}\sum_{j \in \paset{i}} w_{k, i}w_{j, i}\cov{\as{x}_j, \as{x}_k} + \sigma^2 \\
    &\overset{\tiny \circled{1}}{\leq} \sum_{k \in \paset{i}}\sum_{j \in \paset{i}} w_{k, i}w_{j, i} + \sigma^2 \\
    &= \Big(\sum_{j \in \paset{i}}w_{j, i}\Big)^2 + \sigma^2 \\
    &\overset{\tiny \circled{2}}{\leq} m^2w^2 + \sigma^2 \, ,
\end{split}
\end{align*}
where $\tiny \circled{1}$ uses $\cov{\as{x}_j, \as{x}_k} \leq 1$ since $\var{\as{x}_j} = 1$ and $\var{\as{x}_k} = 1$,
and $\tiny \circled{2}$ applies the Cauchy-Schwartz inequality.
Since we obtain $\as{x_i}$ from $x_i$ just by shifting and scaling the latter, we observe that $\cevf{\as{x_i}} = \cevf{x_i}$. Using the upper bound on the variance of $x_i$ and the definition of the fraction of cause-explained variance in Equation \eqref{eq:cev-def}), we get
\begin{align*}
\begin{split}
    \cevf{\as{x_i}} 
    &= \cevf{x_i} 
    =  1 - \frac{\var{x_i - \Expempty[x_i | \pa{i}]}}{\var{x_i}} 
    = 1 - \frac{\var{x_i - \mathbf{w}_i^\top \pa{i}}}{\var{x_i}} \\
    &= 1 - \frac{\var{\noise{i}}}{\var{x_i}}
    = 1 - \frac{\sigma^2}{\var{x_i}} \leq  1 - \frac{\sigma^2}{m^2w^2 + \sigma^2} \,  .
\end{split}
\end{align*}
\end{proof}

\subsection{Identifiability}

In this section, we prove Theorems \ref{th:three_part_ident} and \ref{th:three_non_ident}. 
We begin by deriving the covariances for the 3-node example in \cref{ssec:identifiability} and then give the general proofs for forests. 
The proofs of both theorems share the same underlying argument.
We first derive the SCM forms of the original models, i.e., standardized SCMs in Theorem \ref{th:three_part_ident} and \ourss in Theorem \ref{th:three_non_ident}.
By showing that the standardized SCMs and \ourss are SCMs with the same causal graphs $\mathcal{G}$ and observational distributions $p(\bf{x})$, we can leverage Lemma \ref{lemma:cov} to obtain the covariances between the observed variables in both model classes.
Ultimately, these covariances allow us to derive (non)identifiability conditions for the DAGs $\mathcal{G}$ in an MEC underlying the original models.

Theorems \ref{th:three_part_ident} and \ref{th:three_non_ident} assume that the exogenous noise is sampled from a zero-centered distribution with equal variance across variables. Since the results are based on the analysis of covariances, they also hold with the assumption that $\Expempty[\noise{i}] \neq 0$, but the zero-mean assumption simplifies notation. 
To derive the results for \ourss, we additionally assume that the noise is Gaussian (see Theorem \ref{th:three_non_ident}) . 
When referring to an undirected edge between nodes $v_i,v_j$, for example, in an MEC, we still denote the edge with $(v_i, v_j)$, but the ordering of the nodes is arbitrary. 

\subsubsection{3-Node Case}
\label{ssec:ident_3node}

We begin by studying the 3-node example of Figure \ref{fig:3-path-id} in  \cref{ssec:identifiability}.
Let $\alpha_i, \beta_i, \gamma_i, \lambda_i \in \mathbb{R}$ be linear function weights, and consider the following three causal graphs $\mathcal{G}$ belonging to the same MEC, along with their corresponding SCMs and \ourss.

\newcommand{\proofvisualizationspacing}{\hspace{10pt}}
\begin{figure}[htbp]
    \centering
     \begin{minipage}[b]{0.3\textwidth}
     \centering
     $\mathcal{G}$
     \end{minipage}
     \hfill
     \begin{minipage}[b]{0.3\textwidth}
     \centering
     SCM
     \end{minipage}
     \hfill
     \begin{minipage}[b]{0.3\textwidth}
     \centering
     iSCM
     \end{minipage}
    \begin{minipage}[b]{0.3\textwidth}
        \centering
       \begin{tikzpicture}[node distance=1.5cm]
            \node[circle,draw] (circle1) {$v_1$};
            \node[circle,draw,right of=circle1] (circle2) {$v_2$};
            \node[circle,draw,right of=circle2] (circle3) {$v_3$};

            \draw[->] (circle1) -- (circle2);
            \draw[->] (circle2) -- (circle3);
        \end{tikzpicture}
    \end{minipage}%
    \hfill
    \begin{minipage}[b]{0.3\textwidth}
        \begin{equation}
        \label{eq:s1}
            \begin{aligned}[c]
                x_1 &:= \noise{1} \\
                x_2 &:= \alpha_1x_1 + \noise{2} \\
                x_3 &:= \beta_1x_2 + \noise{3} \\
            \end{aligned}
        \end{equation}
    \end{minipage}
    \hfill
    \begin{minipage}[b]{0.3\textwidth}
        \begin{equation}
        \label{eq:s1_ours}
            \begin{aligned}[c]
                x_1 &:= \noise{1} \\
                x_2 &:= \gamma_1 \as{x}_1 + \noise{2} \\
                x_3 &:= \lambda_1 \as{x}_2 + \noise{3} \\
            \end{aligned}
        \end{equation}
    \end{minipage}

    \vspace{20pt}
    \begin{minipage}[b]{0.3\textwidth}
        \centering
       \begin{tikzpicture}[node distance=1.5cm]
            \node[circle,draw] (circle1) {$v_1$};
            \node[circle,draw,right of=circle1] (circle2) {$v_2$};
            \node[circle,draw,right of=circle2] (circle3) {$v_3$};

            \draw[<-] (circle1) -- (circle2);
            \draw[->] (circle2) -- (circle3);
        \end{tikzpicture}
    \end{minipage}%
    \hfill
    \begin{minipage}[b]{0.3\textwidth}
        \begin{equation}
        \label{eq:s2}
            \begin{aligned}[c]
                x_1 &:= \alpha_2x_2 + \noise{1} \\
                x_2 &:=  \noise{2} \\
                x_3 &:= \beta_2x_2 + \noise{3} \\
            \end{aligned}
        \end{equation}
    \end{minipage}
    \hfill
    \begin{minipage}[b]{0.3\textwidth}
        \begin{equation}
        \label{eq:s2_ours}
            \begin{aligned}[c]
                x_1 &:= \gamma_2 \as{x}_1 + \noise{1} \\
                x_2 &:= \noise{2} \\
                x_3 &:= \lambda_2 \as{x}_2 + \noise{3} \\
            \end{aligned}
        \end{equation}
    \end{minipage}
    \vspace{20pt}

    \begin{minipage}[b]{0.3\textwidth}
        \centering
       \begin{tikzpicture}[node distance=1.5cm]
            \node[circle,draw] (circle1) {$v_1$};
            \node[circle,draw,right of=circle1] (circle2) {$v_2$};
            \node[circle,draw,right of=circle2] (circle3) {$v_3$};

            \draw[<-] (circle1) -- (circle2);
            \draw[<-] (circle2) -- (circle3);
        \end{tikzpicture}
    \end{minipage}%
    \hfill
    \begin{minipage}[b]{0.3\textwidth}
        \flushright
        \begin{equation}
        \label{eq:s3}
            \begin{aligned}[c]
                x_1 &:= \alpha_3x_2 + \noise{1} \\
                x_2 &:= \beta_3x_3 + \noise{2} \\
                x_3 &:= \noise{3} \\
            \end{aligned}
        \end{equation}
    \end{minipage}
    \hfill
    \begin{minipage}[b]{0.3\textwidth}
        \flushright
        \begin{equation}
        \label{eq:s3_ours}
            \begin{aligned}[c]
                x_1 &:= \gamma_3\as{x}_2 + \noise{1} \\
                x_2 &:= \lambda_3\as{x}_3 + \noise{2} \\
                x_3 &:= \noise{3} \\
            \end{aligned}
        \end{equation}
    \end{minipage}

\end{figure}

In the following subsections, we derive the covariance matrices of each of the three systems, respectively. This leads us to the equivalence presented in Equation \eqref{eq:cov_weight_rel} for standardized SCMs. 
Moreover, we show that, for \ourss, all three systems induce exactly the same observational distribution if and only if $\lambda_{1} = \lambda_2 = \lambda_3$ and  $\gamma_{1} = \gamma_2 = \gamma_3$.
These are the 3-node special cases of Theorems \ref{th:three_part_ident} and \ref{th:three_non_ident}. 

\subsubsection*{Standardized SCM}
To obtain the covariances between the observed variables in the standardized SCMs of Equations \eqref{eq:s1}, \eqref{eq:s2}, and \eqref{eq:s3}, 
we first show that the assignments to the observed variables in standardized SCMs can be written in the form of linear SCMs over the same causal graph, which allows us to use Lemma \ref{lemma:cov}.
In all three systems, every vertex has at most one parent. When the node $v_j$ is the only parent of $v_i$, under our assumptions on the noise, we have $x_j = \smash{\sqrt{\var{x_j}}\s{x_j}}$, so the assignment of $\s{x_i}$ can be written in the form of an SCM over $\s{\bf{x}}$ as
\begin{equation}
\label{eq:implied_stand}
    \s{x_i} := 
    \frac{x_i}{\sqrt{\var{x_i}}} =
    \frac{w_{j, i}x_j + \noise{i}} {\sqrt{\var{x_i}}} = \frac{w_{j, i}{\sqrt{\var{x_j}}}\s{x_j} + \noise{i}} {\sqrt{\var{x_i}}} = w_{j, i}\sqrt{\frac{\var{x_j}}{\var{x_i}}}\s{x_j} + \frac{\noise{i}}{\sqrt {\var{x_i}}} \, .
\end{equation}
To use Equation \eqref{eq:implied_stand}, we first need to compute the marginal variances of the unstandardized observations $x_i$. 
For the standardized SCMs, these marginal variances are, respectively:

\begin{center}
\vspace{5pt}
\footnotesize
\begin{tabular}{ lll } 
{\normalsize for Equation \eqref{eq:s1}:}
& {\normalsize for Equation \eqref{eq:s2}:} \hspace*{38pt}
& {\normalsize for Equation \eqref{eq:s3}:} \\[10pt]
  $\var{x_1} = \sigma^2$ 
& $\var{x_1} = (\alpha_2^2 + 1)\sigma^2  $ 
& $\var{x_1} = (\alpha_3^2(\beta_3^2 + 1) + 1)\sigma^2 $ \\[7pt] 
  $\var{x_2} = (\alpha_1^2 + 1)\sigma^2$ 
& $\var{x_2} = \sigma^2$ 
& $\var{x_2} = (\beta_3^2 +1)\sigma^2$ \\[7pt]  
  $\var{x_3} = (\beta_1^2(\alpha_1^2+1)+1)\sigma^2 $ 
& $\var{x_3} = (\beta_2^2+1)\sigma^2 $ 
& $\var{x_3} = \sigma^2 $ \\ 
\end{tabular}
\vspace{10pt}
\end{center}

Given Equation \eqref{eq:implied_stand} and the marginal variances, we know the weights of all three implied SCMs explicitly.
Since all implied SCMs are linear, have unit marginal variances, and share the same causal graph, we can apply Lemma \ref{lemma:cov} and obtain the covariances of the observational distributions in the original models:

\begin{center}
\vspace{5pt}
\footnotesize
\begin{tabular}{ lll } 
{\normalsize for Equation \eqref{eq:s1}:}
& {\normalsize for Equation \eqref{eq:s2}:}
& {\normalsize for Equation \eqref{eq:s3}:} \\[5pt]
  $\cov{\s{x_1}, \s{x_2}} = \tfrac{\alpha_1}{\sqrt{\alpha_1^2 + 1}}$ 
& $\cov{\s{x_1}, \s{x_2}} = \tfrac{\alpha_2}{\sqrt{\alpha_2^2 + 1}}$ 
& $\cov{\s{x_1}, \s{x_2}} = \alpha_3 \sqrt{\tfrac{\beta_3^2 + 1} {\alpha_3^2(\beta_3^2+1)+1}}$ \\[12pt] 
  $\cov{\s{x_1}, \s{x_3}} = \tfrac{\alpha_1\beta_1}{\sqrt{\beta_1^2(\alpha_1^2+1)+1}}$ 
& $\cov{\s{x_1}, \s{x_3}} = \tfrac{\alpha_2\beta_2}{\sqrt{(\alpha_2^2 + 1)(\beta_2^2 + 1)}}$ 
& $\cov{\s{x_1}, \s{x_3}} = \tfrac{\alpha_3}{\sqrt{\alpha_3^2(\beta_3^2+1)+1}}$ \\[8pt]  
  $\cov{\s{x_2}, \s{x_3}} = \beta_1 \sqrt{\tfrac{\alpha_1^2 + 1}{\beta_1^2(\alpha_1^2+1)+1}}$ 
& $\cov{\s{x_2}, \s{x_3}} = \tfrac{\beta_2}{\sqrt{\beta_2^2 + 1}} $ 
& $\cov{\s{x_2}, \s{x_3}} = \tfrac{\beta_3}{\sqrt{\beta_3^2 + 1}} $ \\ 
\end{tabular}
\vspace{10pt}
\end{center}
In the standardized SCM \eqref{eq:s1}, 
the causal graph is $v_1 \rightarrow v_2 \rightarrow v_3$. 
Hence, the edge directions of the DAG $\mathcal{G}$ are consistent with the direction of increasing absolute covariance if and only if 
\begin{align}
\begin{split}\label{eq:cov-inequality-algebra}
    \lvert\cov{\s{x_1}, \s{x_2}}\rvert < \lvert\cov{\s{x_2}, \s{x_3}}\rvert 
    &\quad\Longleftrightarrow\quad 
    \left \lvert \tfrac{\alpha_1}{\sqrt{\alpha_1^2 + 1}} \right \rvert < \left \lvert \beta_1 \sqrt{\tfrac{\alpha_1^2 + 1}{\beta_1^2(\alpha_1^2+1)+1}} \right \rvert   \\
    &\quad\Longleftrightarrow\quad 
    \tfrac{\alpha_1^2}{\alpha_1^2 + 1} < \beta_1^2 \tfrac{\alpha_1^2 + 1}{\beta_1^2(\alpha_1^2+1)+1} \\
    &\quad\Longleftrightarrow\quad 
    \alpha_1^2(\beta_1^2(\alpha_1^2+1)+1) < \beta_1^2(\alpha_1^2 + 1)^2\\
    &\quad\Longleftrightarrow\quad 
    \cancel{\beta_1^2\alpha_1^4}+ \cancel{\beta_1^2\alpha_1^2}+\alpha_1^2 < \cancel{\beta_1^2\alpha_1^4} + \cancel{2}\beta_1^2\alpha_1^2 + \beta_1^2\\
    &\quad\Longleftrightarrow\quad 
    \alpha_1^2 < \beta_1^2(\alpha_1^2 + 1)\\
    &\quad\Longleftrightarrow\quad 
    \tfrac{\alpha_1^2}{\alpha_1^2+1} < \beta_1^2 \, .
\end{split}
\end{align}
In the above equivalences, we always multiply or divide by quantities greater than \num{0}, so the direction of the inequality does not change, and transformations are equivalent.
For the standardized SCM \eqref{eq:s3} with causal graph $v_1 \leftarrow v_2 \leftarrow v_3$, we get an analogous condition for the edges to be aligned with the order of increasing absolute covariance when following the same algebraic manipulations:
\begin{align*}
    \lvert\cov{\s{x_3}, \s{x_2}}\rvert < \lvert\cov{\s{x_2}, \s{x_1}}\rvert 
    &\quad\Longleftrightarrow\quad  \tfrac{\beta_3^2}{\beta_3^2+1} < \alpha_3^2.
\end{align*}
We make use of both of these conditions in Section \ref{sec:theory}.
Since $z / (z+1) < 1$ for any $z > 0$, the right-hand sides of both conditions are true if all weights are greater than $1$.
In this case, the absolute covariance increases downstream in all SCMs of Equations \eqref{eq:s1} and \eqref{eq:s3}.
Hence, among these two systems, only the DAG $\mathcal{G}$ whose edges aligns with the covariance ordering in the observed $p(\s{\bf{x}})$ can induce $p(\s{\bf{x}})$, and we can conclude that the other DAG is not the true causal graph. 

\subsubsection*{\ours}

\label{ssec:non-ident-three-nodes}
To derive the observational distributions of the \ourss in Equations
\eqref{eq:s1_ours}, \eqref{eq:s2_ours}, and \eqref{eq:s3_ours}, we proceed in the same way as we did for standardized SCMs.
We first show that the \ours is an SCM with a specific set of mechanisms and then apply Lemma \ref{lemma:cov} to obtain the covariances between the observed variables.
To see this, we write the assignment of $\as{x_i}$ as
\begin{equation}
\label{eq:implied_ours}
    \as{x_i} := 
    \frac{x_i}{\sqrt{\var{x_i}}} = 
    \frac{w_{j, i}\as{x_j} + \noise{i}} {\sqrt{\var{x_i}}} = \frac{w_{j, i}}{\sqrt{\var{x_i}}}\as{x_j} + \frac{\noise{i}}{\sqrt {\var{x_i}}}
\end{equation}
As before, using \cref{eq:implied_ours} requires first computing the marginal variances of the latent variables $x_i$. For the \ourss defined by Equations \eqref{eq:s1_ours}, \eqref{eq:s2_ours}, and \eqref{eq:s3_ours}, they are given by
\begin{center}
\vspace{5pt}
\small
\begin{tabular}{ lll } 
{\normalsize for Equation \eqref{eq:s1_ours}:} \hspace{20pt}
& {\normalsize for Equation \eqref{eq:s2_ours}:} \hspace*{20pt}
& {\normalsize for Equation \eqref{eq:s3_ours}:} \\[10pt]
  $\var{x_1} = \sigma^2$ 
& $\var{x_1} = \gamma_2^2 + \sigma^2 $ 
& $\var{x_1} = \gamma_3^2 + \sigma^2 $ \\[7pt] 
  $\var{x_2} = \gamma_1^2 + \sigma^2$ 
& $\var{x_2} = \sigma^2$ 
& $\var{x_2} = \lambda_3^2 + \sigma^2$ \\[7pt]  
  $\var{x_3} = \lambda_1^2 + \sigma^2 $ 
& $\var{x_3} = \lambda_2^2 + \sigma^2 $ 
& $\var{x_3} = \sigma^2 $ \\ 
\end{tabular}
\vspace{10pt}
\end{center}
Given Equation \eqref{eq:implied_ours} and the marginal variances, we obtain an explicit form for the weights of all three implied SCMs.
Since the implied SCMs are linear, have unit marginal variances, and share the same causal graph, we can apply Lemma \ref{lemma:cov} and obtain the covariances of the observational distributions in the original models.
It turns out that the observational distribution of all three ground-truth systems $(\as{x}_1, \as{x}_2, \as{x}_3)$
in Equations \eqref{eq:s1_ours}, \eqref{eq:s2_ours}, and \eqref{eq:s3_ours}
is a multivariate Gaussian with {\em the same covariance matrix}, with the diagonal elements equal to $1$ and the off-diagonal elements given by

\begin{align}
\begin{split}\label{eq:iscm_covar_mat}
    \cov{\as{x}_1, \as{x}_2} &= \displaystyle \frac{\gamma_i}{\sqrt{\gamma_i^2 + \sigma^2}} \\
    \cov{\as{x}_1, \as{x}_3} &= \displaystyle \frac{\gamma_i \lambda_i}{\sqrt{(\lambda_i^2+\sigma^2)(\gamma_i^2+\sigma^2)}} \\
    \cov{\as{x}_2, \as{x}_3} &= \displaystyle \frac{\lambda_i}{\sqrt{\lambda_i^2 + \sigma^2}} 
\end{split}
\end{align}
Since the observational distribution of all three SCMs is a zero-centered multivariate Gaussian, the distributions are equal if and only if their their covariance matrices are identical. 
The covariances are equal if and only if $\lambda_1 = \lambda_2 = \lambda_3$ and $\gamma_1 = \gamma_2 = \gamma_3$, because the function $f(z) = \smash{z/\sqrt{z^2 + \sigma^2}}$ appearing in $\smash{\cov{\as{x}_1, \as{x}_2}}$ and $\smash{\cov{\as{x}_2, \as{x}_3}}$ of \cref{eq:iscm_covar_mat} is {\em injective} for any $\sigma>0$, which means that distinct weights $z$ are mapped to distinct covariances.
Therefore, the three node linear \ourss in the above MEC share the same observational distribution if and only if they also share the same weights for each edge, regardless of edge orientation. 

This implies that the three DAGs $\mathcal{G}$ in the MEC of Equations \eqref{eq:s1_ours}, \eqref{eq:s2_ours}, and \eqref{eq:s3_ours} are not identifiable from $p(\as{\bf{x}})$: given $p(\as{\bf{x}})$ induced by an \ours with DAG in this 3-node MEC, the two other DAGs with the same linear function weights induce the same distribution $p(\as{\bf{x}})$.

\subsubsection{Forests}
In this section, we generalize the above partial identifiability result for standardized SCMs to arbitrary forest DAGs (Theorem \ref{th:three_part_ident}). 
After that, we similarly generalize the nonidentifiability of \ourss to forests (Theorem \ref{th:three_non_ident}). 
Our results concern the identification edge directions in an MEC represented by its partially directed graph $\smash{\mec = (\nodeset, \tilde{\edgeset})}$, where $\smash{\tilde{\edgeset}}$ contains both directed and undirected edges. 

\subsubsection*{Standardized SCM}

Before proving the main theorem, we extend the 3-node example to chains of arbitrary length. We show that all but at most one edge in the MEC can be correctly oriented  from observational data using the assumption on the support of the weights. Analogous to the three node case, we then use this to prove a similar result for forest graphs. 

\smallskip

\begin{restatable}[Orientation of edges in undirected chains of standardized SCMs]{lemma}{identpath}
\label{lemma:identpath}

Let $\s{\bf{x}}$ be modeled by a standardized linear SCM \eqref{eq:linear_scm} with chain DAG $\mathcal{G} = (\nodeset, \edgeset)$ , where $\var{\noise{i}} = \sigma^2$ for non-root nodes and $\abs{w_{i,j}}>1$ for all $i \in \text{pa}(j)$. 
Additionally, suppose $\mathcal{G}$ contains no colliders.
Then, given $p(\s{\bf{x}})$ and the partially directed graph $\smash{\tilde{\mathcal{G}}}$ representing the MEC of $\mathcal{G}$, 
% we can orient all but at most one undirected edge $(v_i, v_j)$ in the MEC $\mec$.
we can identify all but at most one edge $(v_i, v_j)$ of the true DAG $\mathcal{G}$ in each undirected connected component of the MEC $\mec$.
The possible undirected edge has the smallest absolute covariance of all variables connected by edges in the MEC, satisfying $\smash{\lvert \cov{\s{x}_i,\s{x}_j} \rvert < \lvert \cov{\s{x}_k,\s{x}_l} \rvert}\,$ for all $(k,l) \in \smash{\tilde{\edgeset}} \setminus (i, j)$.
\end{restatable}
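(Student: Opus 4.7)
The plan is to pass through the implied linear SCM of the standardized chain (see \cref{sec:implied-models}), use \cref{lemma:cov} to express each adjacent-node covariance as a single implied weight, then exploit the assumption $\lvert w_{i,j}\rvert>1$ to obtain a strict monotonicity of these covariances along each directed subchain, and finally read off the orientation of every edge except the minimum one from the resulting V-shape.

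First, the implied SCM is linear, shares the DAG $\mathcal{G}$, and has unit marginal variances. In a chain, there is at most one unblocked path between any two nodes, so by \cref{lemma:cov}, for any directed edge $v_a \to v_b$ in $\mathcal{G}$, $\cov{\s{x_a}, \s{x_b}}$ equals the implied weight $w_{a,b}\sqrt{V_a/V_b}$, where $V_i := \var{x_i}$ in the original SCM satisfies $V_b = w_{a,b}^2 V_a + \sigma^2$. Hence the squared adjacent covariance on that edge is $w_{a,b}^2 V_a / (w_{a,b}^2 V_a + \sigma^2)$. Now consider an outgoing subchain from the root $v_r \to v_{r+1} \to v_{r+2} \to \ldots$, and write $a_k := w_k^2 V_{k-1}$ with $w_k$ the weight on the $k$-th edge. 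The squared covariance on that edge is $a_k / (a_k + \sigma^2)$, strictly increasing in $a_k$. The $V$-recursion gives $a_{k+1} = w_{k+1}^2 a_k + w_{k+1}^2 \sigma^2$, and $\lvert w_{k+1}\rvert>1$ yields $a_{k+1} > a_k + \sigma^2 > a_k$. Thus $\lvert\cov{\s{x_{k-1}},\s{x_{k}}}\rvert$ is strictly increasing along the subchain, regardless of the root's noise variance and earlier weights.

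Next, because $\mathcal{G}$ is a chain with no colliders, a short argument on the direction bits of its edges shows that there is a unique root $v_r$ and that edges flow outward to both sides of it. Applying the monotonicity above to each of the (up to two) outgoing subchains, the sequence $c_i := \lvert\cov{\s{x_i}, \s{x_{i+1}}}\rvert$ along the undirected MEC path is strictly decreasing for indices left of $r$ and strictly increasing for indices at least $r$, i.e., V-shaped. Given the observed $c_i$ and the undirected skeleton, I locate the minimum edge $e_{i^*}$. By the V-shape, the true root must be one of the two endpoints of $e_{i^*}$; in both candidate orientations, every edge $e_j$ with $j < i^*$ is forced to point from $v_{j+1}$ to $v_j$ and every $e_j$ with $j > i^*$ to point from $v_j$ to $v_{j+1}$, so the only edge whose direction is not pinned down is $e_{i^*}$, which by construction has strictly smallest absolute covariance. (A tie between two adjacent minima is easily seen to force $v_r$ to be their shared endpoint with both neighboring edges outgoing, yielding full identifiability.)

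The main obstacle is the algebraic step establishing strict monotonicity of adjacent covariances along each entire subchain from the one-step hypothesis $\lvert w_{k+1}\rvert>1$ alone, without any control over the root's noise variance or the weights of preceding edges; the representation via $a_k$ and the telescoping recursion $a_{k+1} = w_{k+1}^2 a_k + w_{k+1}^2 \sigma^2$ seems the cleanest way to handle it. Everything else reduces to combinatorial bookkeeping on the orientations of a colliderless path.
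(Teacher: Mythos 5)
Your proof is correct and follows essentially the same route as the paper's: pass to the implied linear SCM, use \cref{lemma:cov} to read off each adjacent covariance as a single implied weight, use $\lvert w\rvert>1$ to force the covariances to grow away from the root, and finish with the combinatorics of a colliderless path. The one place you genuinely streamline is the monotonicity/uniqueness step: the paper argues locally over the three possible 3-node subsystems and then needs a separate proof by contradiction that the covariance-minimizing edge is unique, whereas your telescoping recursion $a_{k+1}=w_{k+1}^2(a_k+\sigma^2)>a_k$ gives global strict monotonicity along each arm at once, so the V-shape, the uniqueness of the minimizer (up to an adjacent tie at the root), and the handling of an arbitrary root variance all fall out immediately.
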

\begin{proof}

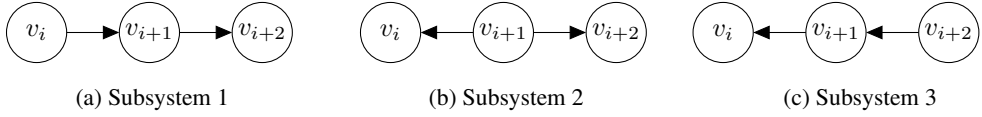
\begin{figure}[!t]
\centering
\begin{subfigure}{0.32\linewidth}
\centering
\begin{tikzpicture}[node distance=1.5cm]

% First graph
\node[circle,draw, minimum size=0.8cm, inner sep=0cm,outer sep=0] (circle1) {$v_i$};
\node[circle,draw,right of=circle1,  minimum size=0.8cm, inner sep=0cm,outer sep=0] (circle2) {$v_{i+1}$};
\node[circle,draw,right of=circle2,  minimum size=0.8cm, inner sep=0cm,outer sep=0] (circle3) {$v_{i+2}$};

\draw[->] (circle1) -- (circle2);
\draw[->] (circle2) -- (circle3);

\end{tikzpicture}
\vspace{3pt}
\caption{Subsystem 1}\label{fig:chain_sub_systems-1}
\end{subfigure}
\hfill
\begin{subfigure}{0.32\linewidth}
\centering
\begin{tikzpicture}[node distance=1.5cm]

% Second graph
\node[circle,draw, minimum size=0.8cm, inner sep=0cm,outer sep=0] (circle1) {$v_i$};
\node[circle,draw,right of=circle1,  minimum size=0.8cm, inner sep=0cm,outer sep=0] (circle2) {$v_{i+1}$};
\node[circle,draw,right of=circle2,  minimum size=0.8cm, inner sep=0cm,outer sep=0] (circle3) {$v_{i+2}$};

\draw[<-] (circle1) -- (circle2);
\draw[->] (circle2) -- (circle3);

\end{tikzpicture}
\vspace{3pt}
\caption{Subsystem 2}\label{fig:chain_sub_systems-2}
\end{subfigure}
\hfill
\begin{subfigure}{0.32\linewidth}
\begin{tikzpicture}[node distance=1.5cm]

% Third graph
\node[circle,draw, minimum size=0.8cm, inner sep=0cm,outer sep=0] (circle1) {$v_i$};
\node[circle,draw,right of=circle1,  minimum size=0.8cm, inner sep=0cm,outer sep=0] (circle2) {$v_{i+1}$};
\node[circle,draw,right of=circle2,  minimum size=0.8cm, inner sep=0cm,outer sep=0] (circle3) {$v_{i+2}$};

\draw[<-] (circle1) -- (circle2);
\draw[<-] (circle2) -- (circle3);

\end{tikzpicture}        
\vspace{3pt}
\caption{Subsystem 3}\label{fig:chain_sub_systems-3}
\end{subfigure}
\vspace{5pt}
\caption{\textbf{Proof subcases of \cref{lemma:identpath}.} Three possible subgraphs in a chain without a collider.}
\label{fig:chain_sub_systems}
\end{figure}

Throughout the proof, we label the nodes $v_i \in \nodeset$ such that $v_{i-1}$ and $v_{i+1}$ are its neighbors for $i \in \{2, \dots, \numvars-1\}$. 
We start with the analysis of three arbitrary, consecutive vertices in a chain graph. The three possible subgraphs are depicted in Figure \ref{fig:chain_sub_systems}. 
We can always find $p \in \R$ such that the variance of the latent root of this directed subgraph is $p^2\sigma^2$. 
This relaxed assumption on specifically the root node allows for the root of the subgraph to have potential parents outside the subgraph, or to be the root of the whole chain, when later using this lemma to prove the main theorem.

We will follow similar derivations as in Section \ref{ssec:ident_3node}.
Specifically, we first write the observed variables of the standardized SCM in SCM form, and then invoke Lemma \ref{lemma:cov} to obtain the covariances of the observed variables.
To use \cref{eq:implied_stand}, we again need to compute the marginal variances of the variables before standardization. For the subsystems in Figures \ref{fig:chain_sub_systems-1} and \ref{fig:chain_sub_systems-2}, these are, respectively:

\begin{center}
\vspace{5pt}
\small
\begin{tabular}{ ll } 
{\normalsize for \cref{fig:chain_sub_systems-1}:} \hspace{20pt}
& {\normalsize for \cref{fig:chain_sub_systems-2}:} \hspace{20pt} \\[10pt]
  $\var{x_i} = p^2\sigma^2$ 
& $\var{x_i} = (w_{i+1, i}^2p^2 + 1)\sigma^2 $ \\[7pt] 
  $\var{x_{i+1}} = (w_{i, i+1}^2p^2 + 1)\sigma^2 $ 
& $\var{x_{i+1}} = p^2\sigma^2$ \\[7pt]  
  $\var{x_{i+2}} = (w_{i+1, i+2}^2(w_{i, i+1}^2p^2+1)+1)\sigma^2 $ 
& $\var{x_{i+2}} = (w_{i+1, i+2}^2p^2 + 1)\sigma^2  $  
\end{tabular}
\vspace{10pt}
\end{center}

By substituting the expressions for the marginal variances into \cref{eq:implied_stand}, 
we obtain the weights of the implied models of the standardized SCM.
Using Lemma \ref{lemma:cov}, we obtain the covariances between the observed variables $\s{x_{i-1}}, \s{x_i}, \s{x_{i+1}}$. By construction, the marginal variances of the observed variables are equal to $1$.
We treat each subsystem separately:
\paragraph{Subsystem 1 {\normalfont (Figure \ref{fig:chain_sub_systems-1})}}
Given the marginal variances and Lemma \ref{lemma:cov}, the covariances are

\begin{align*}
    \cov{\s{x}_i, \s{x}_{i+1}} &= \frac{w_{i, i+1}p}{\sqrt{w_{i, i+1}^2p^2 + 1}} \\
    \cov{\s{x}_{i+1}, \s{x}_{i+2}} &= w_{i+1, i+2} \sqrt{\frac{w_{i, i+1}^2p^2 + 1}{w_{i+1, i+2}^2(w_{i, i+1}^2p^2+1)+1}}
\end{align*}

Following the same algebraic manipulations as in \cref{eq:cov-inequality-algebra}, substituting $\alpha_1 := w_{i,i+1}p$ and $\beta_1 := w_{i+1,i+2}$ in the derivation, we obtain
\begin{align}\label{eq:chain_one_cond}
    \abs{\cov{\s{x}_i, \s{x}_{i+1}}} < \abs{\cov{\s{x}_{i+1}, \s{x}_{i+2}}}
    \quad\Longleftrightarrow\quad
    \frac{w_{i, i+1}^2p^2}{w_{i, i+1}^2p^2 + 1} < w_{i+1, i+2}^2 \, .
\end{align}
The left-hand side of the right-hand inequality in \cref{eq:chain_one_cond} is upper-bounded by $1$, similar to the 3-node case.
Therefore, if we assume that $\lvert w_{i+1, i+2} \rvert \geq 1$, it must hold that $\lvert{\cov{\s{x}_i, \s{x}_{i+1}}}\rvert < \lvert{\cov{\s{x}_{i+1}, \s{x}_{i+2}}}\rvert$ for any choice of $p$.

\paragraph{Subsystem 2 {\normalfont (Figure \ref{fig:chain_sub_systems-2})}} Given the marginal variances and Lemma \ref{lemma:cov}, the covariances are

\begin{align*}
    \cov{\s{x}_i, \s{x}_{i+1}} &= \frac{w_{i+1, i}p}{\sqrt{w_{i+1, i}^2p^2 + 1}} \\
    \cov{\s{x}_{i+1}, \s{x}_{i+2}} &= \frac{w_{i+1, i+2}p}{\sqrt{w_{i+1, i+2}^2p^2 + 1}} \, .
\end{align*}
The ordering of the covariances in this case depends on the specific choice of the weights.

\paragraph{Subsystem 3 {\normalfont (Figure \ref{fig:chain_sub_systems-3})}}
Following steps analogous to the symmetric subsystem 1, we conclude that, if $\lvert{w_{i+1, i}}\rvert \geq 1$, it must hold that $\lvert{\cov{\s{x}_i, \s{x}_{i+1}}}\rvert > \lvert{\cov{\s{x}_{i+1}, \s{x}_{i+2}}}\rvert$ for any $p$.

\bigskip

Given the above, we can now study the relationship between the underlying DAG $\mathcal{G}$ and the absolute covariance magnitudes under the assumption that $\smash{\abss{w_{i,i+1}}}>1$.
We will use the fact that, if the chain does not contain a collider, then there can be at most one node contained in edges pointing in opposite directions. 

First, we treat the case where there exists a vertex $v_i$ such that $\abss{\cov{\s{x}_{i-1}, \s{x}_{i}}} = \abss{\cov{\s{x}_{i}, \s{x}_{i+1}}}$, that is, where some neighboring covariances are equal.
If this occurs in a 3-node subsystem, only subsystem 2 can describe the true graph. 
To be consistent with the assumption that there are no colliders in the graph (see Lemma \ref{lemma:identpath}), all other edges must be oriented in a direction away from $v_i$, which  completely identifies the graph $\mathcal{G}$ in the MEC. 

In the second case,  $\abss{\cov{\s{x}_{j-1}, \s{x}_{j}}} \neq \abss{\cov{\s{x}_{j}, \s{x}_{j+1}}}$ holds for all nodes $v_j$ that have two neighbors in the path.
Let $\s{x}_i, \s{x}_{i+1}$ be the unique pair of consecutive variables in the chain that minimizes $\abss{\cov{\s{x}_i, \s{x}_{i+1}}}$.  
We can show that this pair is the unique minimizer using a proof by contradiction. Suppose there exist two pairs $\s{x}_i, \s{x}_{i+1}$ and $\s{x}_j, \s{x}_{j+1}$ such that $\abss{\cov{\s{x}_i, \s{x}_{i+1}}} = \abss{\cov{\s{x}_j, \s{x}_{j+1}}}$ is the minimum covariance. 
Without loss of generality, let $j+1<i$. 
Then, the triple $\s{x}_{i-1}, \s{x}_i, \s{x}_{i+1}$ is consistent with only subsystems 2 or 3 based on their relative covariances, which implies that we must have $v_{i-1} \leftarrow v_{i}$. Using the fact that we have no colliders, we can then orient all edges $v_{k-1} \leftarrow v_{k}$ for $1 < k < i$. Thus, we can find a subsystem containing $v_j, v_{j+1}, v_{j+2}$, which has been already oriented as subsystem 3, meaning $\abss{\cov{\s{x}_{j}, \s{x}_{j+1}}} > \abss{\cov{\s{x}_{j+1}, \s{x}_{j+2}}}$, a contradiction.

Given $\s{x}_i, \s{x}_{i+1}$ is the unique pair of consecutive variables that minimizes $\lvert {\cov{\s{x}_i, \s{x}_{i+1}}\rvert}$, 
we now show that we can orient all edges except $(v_i, v_{i+1})$. 
We will do this in two parts. 
First, we show that one can orient all edges $(v_{j}, v_{j+1})$ with $j < i$, and then we show that we can do the same for all edges $(v_{j}, v_{j+1})$ with $j > i$. 
If $i>1$, consider the subsystem $v_{i-1}, v_i, v_{i+1}$. 
Since $\smash{\lvert{\cov{\s{x}_{i-1}, \s{x}_{i}}}\rvert } > \smash{\lvert {\cov{\s{x}_i, \s{x}_{i+1}}}\lvert } $, 
only subsystems 2 and 3 are possible for this subgraph. 
We can therefore orient $v_{i-1} \leftarrow v_i$. 
Similarly, if $i<d-1$, by a symmetric argument on $v_i, v_{i+1}, v_{i+2}$, we can orient $v_{i+1} \rightarrow v_{i+2}$. 
Since the graph cannot contain colliders, all other edges must be oriented as $v_j \leftarrow v_{j+1}$ for $j < i$, and $v_j \rightarrow v_{j+1}$ for $j>i$. 
In other words, all edges except $(v_i, v_{i+1})$ point away from the two vertices $v_i, v_{i+1}$, and one of the two variables must be the root of the chain.
Therefore, if $\smash{\abss{\cov{\s{x}_{j-1}, \s{x}_{j}}} \neq \abss{\cov{\s{x}_{j}, \s{x}_{j+1}}}}$ holds for all vertices $v_j$ that have two neighbors, then there exists a unique covariance minimizing pair $\s{x}_i, \s{x}_{i+1}$, and
all edges except $(v_i, v_{i+1})$ are oriented. 

The two cases above are exhaustive, and in the worst case at most one edge $(v_{j}, v_{j+1})$ is left unoriented in the chain. This edge always corresponds to the minimizer of $\abss{\cov{\s{x}_{j}, \s{x}_{j+1}}}$.
This completes the proof. 
\end{proof}

\paragraph{Remark} From the proof of Lemma \ref{lemma:identpath}, it follows that if we are able to orient all the edges in the chain, then the root of the chain is the node joining the two edges with minimum absolute covariance. 
When we orient all but one edge $(v_i, v_{i+1})$, the root node of the chain is either $v_i$ or $v_j$. 

We can extend Lemma \ref{lemma:identpath} to forest graphs. 
For this, we will make use of the first Meek rule \citep{meek1995causal}. 
The first Meek rule concerns an MEC $\mec$, containing the undirected edges $(v_i, v_j), (v_j, v_k)$ but not the edge $(v_i, v_k)$. It states that, if one can orient $v_i\rightarrow v_j$, we must have $v_j \rightarrow v_k$. 

\bigskip

\forestpartident*
\begin{proof}

The undirected parts of an MEC $\mec$ are disjoint undirected connected components.
Orienting the edges in all these undirected connected components without introducing a v-structure produces a valid DAG $\mathcal{G}$ in $\mec$ \citep{andersson1997characterization}. Each undirected connected components represents a Markov equivalence class of its own \citep{andersson1997characterization}. 
Thus, to prove the theorem, we consider these undirected connected components independently with respect to the rest of the graph and show how to orient the edges in each undirected connected component.%
\footnote{Orienting edges of an undirected connected component that touch a directed edge in $\mec$ never introduces an additional v-structure. If a directed edge pointed into the undirected connected component, the undirected edge downstream would have had to already be directed in $\mec$ by the first Meek rule. Hence, all directed edges bordering the undirected connected component must be oriented away from it, and none of the possible undirected edge orientations creates a new collider at the border node. 
This implies that all undirected connected components in $\mec$ are upstream of the colliders and directed subgraphs of $\mec$.}
In the following argument, we therefore consider $\mec$ to be a single undirected connected component, with no directed edges by definition, and show that we can orient all but one edge in $\mec$.
This argument then extends to all undirected connected components of the original MEC $\mec$, implying the statement made in Theorem~\ref{th:three_part_ident}.

If $\mec$ is an undirected connected component with no directed edges, we only have to consider SCMs with a ground-truth DAG $\mathcal{G}$ that are members of this MEC $\mec$ to distinguish among possible edge orientations in $\mec$.
In the case of undirected trees, the ground-truth DAG $\mathcal{G}$ must be a tree with no colliders and the same skeleton as $\mec$, since any other DAGs would belong to a different MEC.

\begin{figure}[!t]
\centering
\vspace{-10pt}
\begin{tikzpicture}[node distance=1.8cm, auto]
% Nodes
minimum size=0.8cm
\node[draw, circle] (v1)[minimum size=0.85cm, inner sep=0cm,outer sep=0] {$v_1$};
\node[draw, circle, right of=v1] (v2)[minimum size=0.85cm, inner sep=0cm,outer sep=0] {$v_{i-1}$};
\node[draw, circle, right of=v2] (vi)[minimum size=0.85cm, inner sep=0cm,outer sep=0] {$v_i$};
\node[draw, circle, right of=vi] (vj)[minimum size=0.85cm, inner sep=0cm,outer sep=0] {$v_{i+1}$};
\node[draw, circle, right of=vj] (v3)[minimum size=0.85cm, inner sep=0cm,outer sep=0] {$v_{i+2}$};
\node[draw, circle, right of=v3] (v4)[minimum size=0.85cm, inner sep=0cm,outer sep=0] {$v_{k}$};

% non-chain
\node[draw, circle, below of=v2] (v21) [minimum size=0.85cm, inner sep=0cm,outer sep=0, xshift=-0.8cm] {$u$};
\node[draw, circle, above of=vi] (vi1) [minimum size=0.85cm, inner sep=0cm,outer sep=0, xshift=-0.8cm] {};
\node[draw, circle, below of=vi] (vi2) [minimum size=0.85cm, inner sep=0cm,outer sep=0, xshift=-0.0cm] {};
\node[draw, circle, above of=vj] (vj1) [minimum size=0.85cm, inner sep=0cm,outer sep=0, xshift=0.5cm] {};
\node[draw, circle, above of=vj1] (vj11) [minimum size=0.85cm, inner sep=0cm,outer sep=0, xshift=-1.3cm, yshift=-0.4cm] {};
\node[draw, circle, right of=vj1] (vj12) [minimum size=0.85cm, inner sep=0cm,outer sep=0, yshift=0.0cm] {};
\node[draw, circle, below of=vj] (vj2) [minimum size=0.85cm, inner sep=0cm,outer sep=0, xshift=0.8cm] {};

% Connections
\draw[->] (v2) -- (v1);
\draw[->] (v2) -- (v21);
\draw[->] (vi) -- (v2);
\draw[->, blue] (vi) -- (vj);
\draw[->] (vj) -- (v3);
\draw[->] (v3) -- (v4);
\draw[->] (vi) -- (vi1);
\draw[->] (vi2) -- (vi);
\draw[->] (vj) -- (vj1);
\draw[->] (vj) -- (vj2);
\draw[->] (vj1) -- (vj11);
\draw[->] (vj1) -- (vj12);

% Background shading
\begin{pgfonlayer}{background}

    % vi
    \fill[gray!15, rounded corners=20pt] 
        ($(vi.south west) + (-0.4,-2.3)$) rectangle 
        ($(vi.north east) + (0.4,0.4)$);

    \fill[gray!15, rounded corners=20pt, rotate around={25:(vi)}] 
        ($(vi.south west) + (-0.3,-0.6)$) rectangle 
        ($(vi.north east) + (0.3,2.5)$);

   % vj
    \fill[gray!15, rounded corners=20pt, rotate around={25:(vj2)}] 
        ($(vj2.south west) + (-0.3,-0.6)$) rectangle 
        ($(vj2.north east) + (0.3,2.5)$);
    \fill[gray!15, rounded corners=20pt, rotate around={42:(vj1)}] 
        ($(vj1.south west) + (-0.3,-0.6)$) rectangle 
        ($(vj1.north east) + (0.3,2.5)$);
    \fill[gray!15, rounded corners=20pt, rotate around={90:(vj1)}] 
        ($(vj12.south west) + (-0.4,-1.0)$) rectangle 
        ($(vj12.north east) + (0.4,2.8)$);
    \fill[gray!15, rounded corners=20pt, rotate around={-16:(vj)}] 
        ($(vj.south west) + (-0.5,-0.2)$) rectangle 
        ($(vj.north east) + (0.4,2.0)$);

    % Violet rectangle
\end{pgfonlayer}

\end{tikzpicture}
\vspace{10pt}
\caption{
\textbf{Inductive step of the proof of \cref{th:three_part_ident}.}
Ground-truth DAG $\mathcal{G}$ underlying an undirected connected component $\mec$ in some given MEC.
The nodes $\nodeset_C = \{ v_1, \dots, v_k\}$ are a longest chain in $\mathcal{G}$.
Using \cref{lemma:identpath}, we 
can orient all edges in $\smash{\mec_C}$ except possibly $(v_i, v_{i+1})$ (blue).
Edges like $(v_{i-1}, u)$ are oriented by the first Meek rule.
After \cref{lemma:identpath}, we are left with either the single undirected tree of $v_i$ (left shaded tree) or the single undirected tree consisting of $(v_i, v_{i+1})$ (blue) and both undirected trees of $v_i$ and $v_{i+1}$ (both shaded trees). 
Either $v_i$ or $ v_{i+1}$ must be the root of $\smash{\mathcal{G}_C}$.
In this specific example, $v_i$ is the root of $\smash{\mathcal{G}_C}$ and is therefore the only node that can have a parent outside $\mathcal{G}_C$.
Any node in $\mathcal{G}$ can have directed, outgoing edges to children in a (possibly non-forest) MEC the undirected connected component $\mec$ may be a subgraph of.
}\label{fig:theorem-3-proof-intuition}
\end{figure}
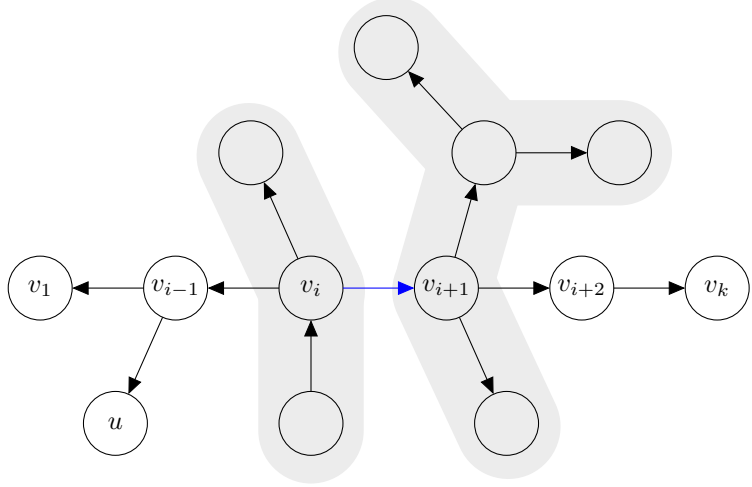

We give a proof by strong induction on the number of vertices $\abss{\mathcal{V}}$ in the MEC $\mec$.
The base case of the induction argument is an MEC with $\abss{\mathcal{V}} = 2$ nodes. 
This case holds trivially, since this MEC can contain at most one undirected edge. 
For the inductive step, we consider an undirected tree MEC $\mec$ with $\abs{\nodeset} = \numvars$ and assume that we can orient all but one edge of undirected tree MECs with $\abs{\nodeset} < \numvars$. 

Our argument will proceed by considering the longest chain of the undirected tree $\mec$.
We will use \cref{lemma:identpath} to orient all but at most one edge in this chain and then apply the first Meek rule to possibly orient additional edges in $\mec$ outside the chain.
After orienting these edges, we show that we reduced the original problem of orienting all but one edge in $\mec$ with $\abs{\nodeset} = d$ 
to orienting all but one edge in a single undirected connected component that has strictly fewer than $\numvars$ nodes. This allows us to apply the inductive hypothesis and complete the proof (see Figure \ref{fig:theorem-3-proof-intuition}). 

Consider a longest undirected chain $\tilde{\mathcal{G}}_C  = (\nodeset_{C}, \tilde{\edgeset}_{C})$ that is a subgraph of the undirected tree $\mec$. 
Let $\smash{\mathcal{G}_C}$ refer to the  directed subgraph of the DAG $\mathcal{G}$ induced by considering only the vertices $\nodeset_C$. 
We label the $k$ vertices in $\nodeset_{C}$ as $v_1, ..., v_k$, with undirected edges $(v_{i}, v_{i+1}) \in \smash{\tilde{\edgeset}}$ for all $i \in \{1, \dots, k-1 \}$. 
The nodes $v_1, v_k$ can have no undirected neighbours in $\mec$ outside the chain, because otherwise we could construct a longer chain in $\mec$. 

The only vertex in $\nodeset_C$ that can have a parent in the DAG $\mathcal{G}$ outside the chain $\smash{\mathcal{G}_C}$, that is, in $\nodeset \smash{\setminus} \nodeset_C$, is the unique root of $\smash{\mathcal{G}_C}$.
To see this, we first note that all nodes $v_i$ have at most one parent in $\mathcal{G}$, because any $v_i$ with $\abs{\text{pa}(v_i))} > 1$ in $\mathcal{G}$ would be a collider, but $\mathcal{G}$ contains no colliders.
Since non-root nodes in $\smash{\mathcal{G}_C}$ have an in-chain parent, they cannot have a parent outside of $\nodeset_C$.
Therefore, besides the root node of $\smash{\mathcal{G}_C}$ via its potential outside parent, $\smash{\mathcal{G}_C}$ is a completely disconnected subgraph from the rest of $\mathcal{G}$.
This implies that we may treat $\smash{\mathcal{G}_C}$ as a separate standardized SCM with undirected chain MEC, in which the potential parent of the root of $\smash{\mathcal{G}_C}$ is modeled as part of the exogenous noise of the root. This allows us to apply \cref{lemma:identpath} to the variables of the subgraph $\smash{\mathcal{G}_C}$. 

By applying \cref{lemma:identpath} to $\smash{\mathcal{G}_C}$, we can orient all but at most one undirected edge in $\smash{\mathcal{\tilde{G}}_C}$.
We split the resulting analysis into the two cases of \cref{lemma:identpath} leaving either $0$ or $1$ undirected edge. 
In the first case, we can orient all edges in $\smash{\mathcal{\tilde{G}}_C}$ with \cref{lemma:identpath}. 
In this case, we know that the root of $\smash{\mathcal{G}_C}$ is the node $v_i$ (see {\em Remark} of \cref{lemma:identpath}).
By the first Meek rule, we can recursively orient all additional edges in $\mec$ outside of $\smash{\mathcal{\tilde{G}}_C}$ away from $v_i$, except for the subtrees of $\mec$ connected to $v_i$ itself (Figure \ref{fig:theorem-3-proof-intuition}). 
This leaves at most a single connected undirected subtree containing $v_i$ and strictly less than $d$ vertices.

In the second case, we orient all but one edge $(v_i, v_{i+1})$ in $\smash{\mathcal{\tilde{G}}_C}$ by applying \cref{lemma:identpath}.
In this case, we know that the root of $\smash{\mathcal{G}_C}$ is either the node $v_i$ or $v_{i+1}$ (see {\em Remark} of \cref{lemma:identpath}).
Similar to the first case, we can recursively use the first Meek rule to orient all additional edges in $\mec$ pointing away from $v_i$ and $v_{i+1}$, except for the subtrees of $\mec$ connected to $v_i$ and $v_{i+1}$ itself. 
Since $v_{i}$ and $v_{i+1}$ are connected by an undirected edge, we are left with a single connected subtree containing the undirected edge  $(v_i, v_{i+1})$ that is strictly smaller than before.

In both cases, we orient at least one undirected edge of $\mec$, because the longest undirected chain in $\mec$ with $\abs{\nodeset} > 2$ has at least length $2$.
We always obtain at most a single undirected connected tree component with strictly less than $d$ vertices, allowing us to apply the inductive hypothesis and complete the proof. 

\end{proof}

\subsubsection*{iSCM}

\smallskip

\forestnonident*

\begin{proof}
Because we consider linear \ourss with Gaussian noise, the implied model is a linear SCM with additive Gaussian noise (see \cref{ssec:implied_models_iscm}). Hence, the observational distribution is a multivariate Gaussian with mean zero.
In \ourss, the marginal variance of an observed variable is always $1$. 
Hence, we prove the statement if we show that for all $\as{x}_i, \as{x}_j$ in the \ours with graph $\mathcal{G}$, and the corresponding $\as{x}_i', \as{x}_j'$ in the \ours with graph $\mathcal{G}'= (\nodeset, \edgeset')$, $\cov{\as{x}_i, \as{x}_j} = \cov{\as{x}_i', \as{x}_j'}$. 

Let $\as{x}_{i}'$ and $\smash{\as{x}_{j}'}$ be the random variables associated with the nodes $v_i$ and $v_j$ from $\mathcal{G}'$, respectively. We consider two cases. First, if there is no path between $v_i$ and $v_j$ in the skeleton of $\mathcal{G}'$ then there is no path between $v_i$ and $v_j$ in the skeleton of $\mathcal{G}$ and hence $\cov{\as{x}_{i}, \as{x}_{j}} = \cov{\as{x}_{i}', \as{x}_{j}'} = 0.$
In the second case, there is a path between $v_i$ and $v_j$ in the skeleton of $\mathcal{G}'$, so there also exists a path in the skeleton of $\mathcal{G}$, as both graphs have the same skeleton. 
Due to the acyclicity of the skeleton in forests, this path is the only one connecting $v_i$ and $v_j$ in both $\mathcal{G}$ and $\mathcal{G}'$.

\begin{figure}[!t]
\centering
\begin{subfigure}[b]{0.99\linewidth}
    \centering
    \begin{tikzpicture}[node distance=1.3cm, auto]
        % Nodes
        \node[draw, circle] (vi)[minimum size=0.7cm] {$v_i$};
        \node[draw, circle, right of=vi] (B)[minimum size=0.7cm] {};
        \node[draw, circle, right of=B] (C)[minimum size=0.7cm] {$v_k$};
        \node[draw, circle, right of=C] (D)[minimum size=0.7cm] {};
        \node[draw, circle, right of=D] (vj)[minimum size=0.7cm] {$v_j$};
        
        % Connections
        \draw[dotted] (vi) -- (B);
        \draw[->] (B) -- (C);
        \draw[->] (D) -- (C);
        \draw[dotted] (vj) -- (D);
    \end{tikzpicture}
    \vspace{5pt}
    \caption{First subcase}
    \vspace{5pt}
    \label{fig:collider-in-path}
\end{subfigure}
\hfill
\begin{subfigure}[b]{0.49\linewidth}
    \centering
    \begin{tikzpicture}[node distance=1.3cm, auto]
        % Nodes
        \node[draw, circle] (vi)[minimum size=0.7cm] {$v_i$};
        \node[draw, circle, right of=vi] (vl)[minimum size=0.7cm] {$v_l$};
        \node[draw, circle, right of=vl] (vk)[minimum size=0.7cm] {$v_k$};
        \node[draw, circle, above of=vk] (vp)[minimum size=0.7cm] {$v_p$};
        \node[draw, circle, right of=vk] (D)[minimum size=0.7cm] {};
        \node[draw, circle, right of=D] (vj)[minimum size=0.7cm] {$v_j$};
        
        % Connections
        \draw[dotted] (vi) -- (vl);
        \draw[->] (vl) -- (vk);
        \draw[->] (vp) -- (vk);
        \draw[->] (vk) -- (D);
        \draw[dotted] (vj) -- (D);
    \end{tikzpicture}
    \vspace{5pt}
    \caption{Second subcase (More than one parent in $\mathcal{G}'$)}
    \label{fig:multiple-parents-in-tree}
\end{subfigure}
\hfill
\begin{subfigure}[b]{0.49\linewidth}
    \centering
    \begin{tikzpicture}[node distance=1.3cm, auto]
        % Nodes
        \node[draw, circle] (vi)[minimum size=0.7cm] {$v_i$};
        \node[draw, circle, right of=vi] (vl)[minimum size=0.7cm] {$v_l$};
        \node[draw, circle, right of=vl] (vk)[minimum size=0.7cm] {$v_k$};
        \node[draw, circle, right of=vk] (D)[minimum size=0.7cm] {};
        \node[draw, circle, right of=D] (vj)[minimum size=0.7cm] {$v_j$};
        
        % Connections
        \draw[dotted] (vi) -- (vl);
        \draw[->] (vl) -- (vk);
        \draw[->] (vk) -- (D);
        \draw[dotted] (vj) -- (D);
    \end{tikzpicture}
    \vspace{5pt}
    \caption{Second subcase (A single parent in $\mathcal{G}'$)}
    \label{fig:single-parent-in-tree}
\end{subfigure}
\vspace{5pt}
\caption{
\textbf{Proof subcases of \cref{th:three_non_ident}.}
(a) Path with a collider. In other words, a path blocked by an empty set. In the case of forests, this configuration implies that $v_i$ and $v_j$ are $d$-separated.
(b) Unblocked path connecting $v_i$ and $v_j$ with one of the path nodes having a parent both in the path and outside the path. The weight $w_{p, k}$ influences the weight $\as{w}_{l, k}$ in the implied model of the \ours. 
If this structure is present in a forest, it has to be present in other graphs in the same MEC.
(c) Unblocked path connecting $v_i$ and $v_j$ with the only parent of $v_k$ being part of the considered path. The weight $\as{w}_{l, k}$ depends only on $w_{l, k}$, irrespective of the edge direction.
}
\end{figure}
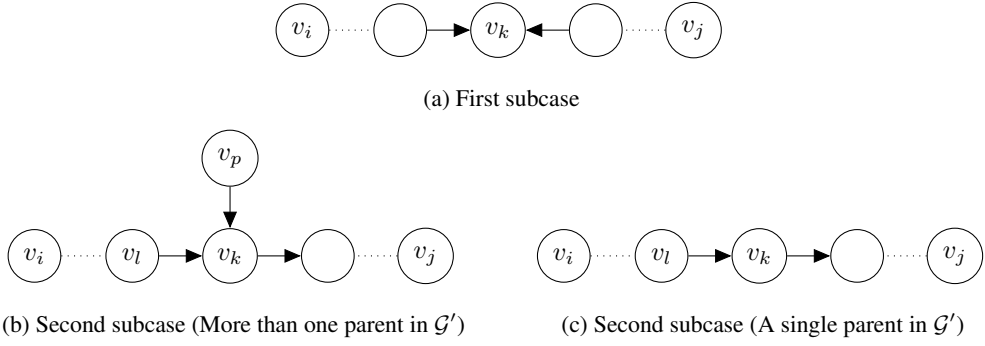

We further break this second case into two subcases. In the first subcase, this path contains a collider in $\mathcal{G}$ as shown in Figure \ref{fig:collider-in-path}. 
Because the skeleton cannot have undirected cycles under the forest assumption, this collider forms a $v$-structure. $\mathcal{G}' \in \mec$ implies that the same $v$-structure must be present in $\mathcal{G}$. Hence, $v_i$ and $v_j$ are $d$-separated in both $\mathcal{G}$ and $\mathcal{G}'$. By the global Markov condition, this implies that $\as{x}_{i}'$ and $\as{x}_{j}'$ are independent, and that $\as{x}_{i}$ and $\as{x}_{j}$ are independent. This implies that both $\cov{\as{x}_{i}', \as{x}_{j}'} = \cov{\as{x}_{i}, \as{x}_{i}} = 0$.

In the second subcase, there exists an unblocked path between $v_i$ and $v_j$ in both $\mathcal{G}$ and $\mathcal{G}'$. Here, we denote the weight matrix associated with both \ourss by $W := [w_{i, j}]$, with $W$ being symmetric, so that $w_{i, j} = w_{j, i}$ is the linear weight of the edge $(i, j)$ regardless of its orientation in the graph.

We now derive the analogous weights $\as{W}, \as{W}'$ in the implied SCMs for $\mathcal{G}, \mathcal{G}'$ respectively. 
Ultimately, we will demonstrate that the implied SCMs have the same weights.
Specifically, we will show that $\smash{\as{w}_{k,l}} = \smash{\as{w}_{k, l}'}$. 
Given this, \cref{lemma:cov} implies that both \ourss have the same covariance matrix over the observed variables.

Without loss of generality, since the node labelling is arbitrary, let $v_k$ have at least as many incoming edges as $v_l$ in $\mathcal{G}'$. 
We divide the analysis into two cases: $v_k$ having only $1$ parent in $\mathcal{G}'$, and $v_k$ having more than $1$ parent.
The node $v_k$ must have at least one parent, since at least one of $v_k, v_l$ have an incoming edge in $\mathcal{G}'$, and we chose $v_k$ to have at least as many incoming edges as $v_l$. 

\paragraph{More than one parent in $\mathcal{G}'$} We know that any collider in $\mathcal{G}'$ will appear as part of a $v$-structure in $\mec$ due to the forest assumption, and therefore will also be a collider in  $\mathcal{G}$. Therefore, if $v_k$ has more than one parent in $\mathcal{G}'$ (see \cref{fig:multiple-parents-in-tree}), all pairs of edges incoming to $v_k$ will form $v$-structures, so $v_k$ must have exactly the same set of parents in $\mathcal{G}$. 

Moreover, any two parents of $v_k$ are d-separated in $\mathcal{G}$ and $ \mathcal{G}'$ by the forest assumption, since the blocked path going through $v_k$ is the only path connecting them. By the global Markov condition, the parents are pairwise independent. Hence, we can use Equation \eqref{eq:weights_implied_ours_indep} to compute $\as{w}_{k, l}, \as{w}_{k, l}'$. Since the parent sets are the same between the two graphs, and $W$ is shared between the two \ourss, the weight associated with the edge $(l, k)$ in both graphs in the implied models is given by
\begin{equation}
\label{eq:tree_proof_more_parents}
    \as{w}_{l, k} = \as{w}_{l, k}' = \frac{w_{l, k}}{\sqrt{\sum_{u \in \paset{k}}w_{u, k}^2 + \sigma^2}} \, .
\end{equation}

\paragraph{A single parent in $\mathcal{G}'$} Let $(l, k)$ be the only incoming edge to $v_k$ in $\mathcal{G}'$, as depicted in Figure \ref{fig:single-parent-in-tree}. 
Then, the edge connecting $v_l$ and $v_k$ in $\mathcal{G}$ is either the only incoming edge to $v_k$ or the only incoming edge to $v_l$. To see this, suppose that it was not the only incoming edge to $v_k$ or $v_l$ in $\mathcal{G}$. This would make $v_k$ or $v_l$ a collider that would be common to both graphs, implying that $v_k$ or $v_l$ would have at least two parents in $\mathcal{G}'$. We operate under the assumption that $v_k$ has at least as many parents as $v_l$, so it would imply that $v_k$ has more than one parent, contradicting the assumption we made for case we consider in this paragraph. Irrespective of the direction, the weight associated with the edge $(l, k)$ in the skeleton of both graphs in the implied model is, similar to Equation \eqref{eq:implied_ours}, given by
\begin{equation}
\label{eq:tree_proof_single_parents}
    \as{w}_{l,k} = \as{w}_{l, k}' = \frac{w_{l, k}}{\sqrt{w_{l, k}^2 + \sigma^2}} \, .
\end{equation}
Equations \eqref{eq:tree_proof_more_parents} and \eqref{eq:tree_proof_single_parents}  show that, for the SCM form of each \ours, the edges connecting the same nodes irrespective of their direction in $\mathcal{G}'$ and $\mathcal{G}$ have the same weights. 
By Lemma \ref{lemma:cov}, the covariance between any $\as{x}_{i}$ and $\as{x}_{j}$ can be expressed as a product of the weights in the implied SCM corresponding to the edges on the path between $v_i, v_j$. Hence, $\cov{\as{x}_{i}, \as{x}_{j}} = \cov{\as{x}_{i}', \as{x}_{j}'}$.
\end{proof}

Figure \ref{fig:tree-id} shows an example for Theorem \ref{th:three_non_ident} for two trees from the same MEC.

\begin{figure}[t]
    \centering
\hspace{10pt}
\begin{subfigure}[c]{0.28\textwidth}
    \begin{tikzpicture}[node distance=2cm]
    % Define nodes with specified positions
    \node[circle,draw] (x1) at (0.2,-2.4) {$v_{1}$};
    \node[circle,draw] (x2) at (1,-1.2) {$v_{2}$};
    \node[circle,draw] (x3) at (1.8,0) {$v_{3}$};
    \node[circle,draw] (x4) at (1.8,-1.7) {$v_{4}$};
    \node[circle,draw] (x5) at (2.6,-1.2) {$v_{5}$};
    \node[circle,draw] (x6) at (3.4,0) {$v_{6}$};
    % Add edges with weights
    \draw[->] (x2) edge node[above] {$1$~~~} (x1);
    \draw[->] (x3) edge node[above] {$2$~~~} (x2);
    \draw[->] (x3) edge node[right] {$3$~} (x4);
    \draw[->] (x3) edge node[above] {~~$4$} (x5);
    \draw[->] (x6) edge node[above] {$5$~~} (x5);
\end{tikzpicture}
\end{subfigure}
\hfill
\begin{subfigure}[c]{0.28\textwidth}
\begin{tikzpicture}[node distance=2cm]
    % Define nodes with specified positions
    \node[circle,draw] (x1) at (0.2,0) {$v_{1}$};
    \node[circle,draw] (x2) at (1,1.2) {$v_{2}$};
    \node[circle,draw] (x3) at (1.8,0) {$v_{3}$};
    \node[circle,draw] (x4) at (1,-1.2) {$v_{4}$};
    \node[circle,draw] (x5) at (2.6,-1.2) {$v_{5}$};
    \node[circle,draw] (x6) at (3.4,0) {$v_{6}$};
    % Add edges with weights
    \draw[->] (x2) edge node[above] {$1$~~~} (x1);
    \draw[->] (x2) edge node[above] {~~~$2$} (x3);
    \draw[->] (x3) edge node[above] {$3$~~~} (x4);
    \draw[->] (x3) edge node[above] {~~~$4$} (x5);
    \draw[->] (x6) edge node[above] {$5$~~~} (x5);
    \end{tikzpicture}
\end{subfigure}
\hfill
\begin{subfigure}[c]{0.40\textwidth}
        \includegraphics[width=0.95\textwidth]{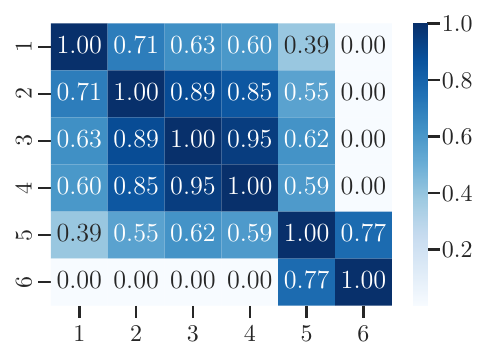}
\end{subfigure}
\hspace{-10pt}
\vspace{0pt}
    \caption{\textbf{Illustrating Theorem \ref{th:three_non_ident} for trees in the same MEC.} 
    Covariance matrix of observed \ours variables for two example forests belonging to the same MEC with the same weights assigned to the edges of the skeleton.}
    \label{fig:tree-id}
\end{figure}

\paragraph{Remark}
\label{rem:3node_counterexample}
In Figure \ref{fig:triangle-id}, we empirically demonstrate that Theorem \ref{th:three_non_ident} no longer holds if we drop the forest assumption. 
For data generated from an \ours and two graphs from the same $\mec$ with the same weights assigned to the skeleton edges, we observe that the estimated covariances differ. The two systems entail different observational distributions.

\begin{figure}[t]
    \centering
    \vspace{15pt}
    \hfill
    \begin{subfigure}[c]{0.20\textwidth}
    \begin{tikzpicture}[node distance=2cm]
        % Define nodes with specified positions
        \node[circle,draw] (x1) at (0,0) {$v_{1}$};
        \node[circle,draw] (x2) at (-0.8,-1.5) {$v_{2}$};
        \node[circle,draw] (x3) at (0.8,-1.5) {$v_{3}$};
        % Add edges with weights
        \draw[->] (x1) edge node[above] {$1$~~~~} (x2);
        \draw[->] (x2) edge node[below,yshift=-0.1cm] {$2$} (x3);
        \draw[->] (x1) edge node[above] {~~~$3$} (x3);
    
    \end{tikzpicture}
    \end{subfigure}
    \begin{subfigure}[c]{0.25\textwidth}
    \includegraphics[width=0.899\textwidth]{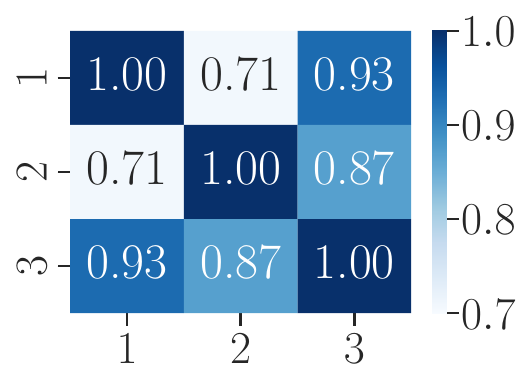}
    \end{subfigure}
    \hfill
    \begin{subfigure}[c]{0.20\textwidth}
    \begin{tikzpicture}[node distance=2cm]
        % Define nodes with specified positions
        \node[circle,draw] (x1) at (0,0) {$v_{1}$};
        \node[circle,draw] (x2) at (-0.8,-1.5) {$v_{2}$};
        \node[circle,draw] (x3) at (0.8,-1.5) {$v_{3}$};
        % Add edges with weights
        \draw[->] (x1) edge node[above] {$1$~~~~} (x2);
        \draw[->] (x3) edge node[below,yshift=-0.1cm] {$2$} (x2);
        \draw[->] (x1) edge node[above] {~~~~$3$} (x3);
    \end{tikzpicture}
    \end{subfigure}
    \begin{subfigure}[c]{0.25\textwidth}
    \includegraphics[width=0.899\textwidth]{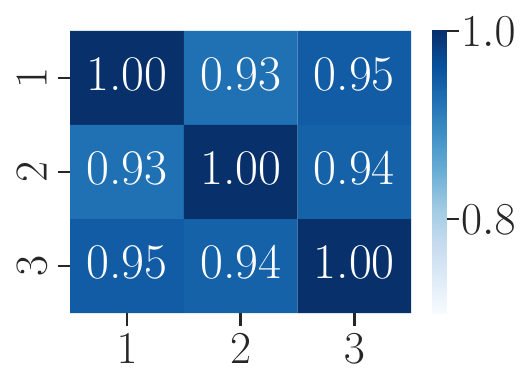}
    \end{subfigure}
    \hfill
    \caption{\textbf{Non-forest counterexample for Theorem \ref{th:three_non_ident}.} Covariance matrix of observed \ours variables for two non-forests belonging to the same MEC with the same weights assigned to the edges of the skeleton.}
    \label{fig:triangle-id}
\end{figure}

\section{Background on Related Work}

\subsection{Heuristics for Mitigating Variance Accumulation}
\label{sec:background-heuristics}

Here, we review existing heuristics for avoiding the exploding variance in structure learning benchmarking with linear SCMs as defined in Equation \eqref{eq:linear_scm}.
We also describe how these heuristics limit the causal dependencies that can be modeled in terms of the correlations among the SCM variables or their cause-explained variance, both of which do not occur in linear \ourss. Finally, in Figure \ref{fig:heuristic-sortabilities} we show that \textbf{the heuristics fail to induce data that is both not $\varop$-sortable and not \rtwo-sortable}.

\paragraph{Scaling weights by the inverse weight norm} \citet[][Section 5.2]{mooij2020joint}
sample the edge weights in linear SCMs as $w_{i,j} \sim \unif_{\pm}{[0.5, 1.5]}$. To achieve a comparable variance of each variable $x_j$ in the SCM, they propose re-scaling the sampled weights prior to the data-generating process as
\begin{align*}
    w_{i,j} \gets \frac{w_{i,j}}{\sqrt{1 + \sum_{i \in \paset{j}} w_{i,j}^2}} \, .
\end{align*}
If all parents of $x_j$ are \iid Gaussian with variance \num{1}, this adjustment ensures that the variance of $x_j$ is similar for all $x_j$.
However, this approximation does not take into account the covariances of the parents.
Moreover, since $\smash{\var{\noise{j}}}$ is unchanged, the scaling limits the strength of the causal effect that parents can have on $x_j$.
For example, 
when $x_1 = \noise{1}$ and $x_2 = w x_1 + \noise{2}$ with $\smash{\var{\noise{j}} = 1}$ as for \citet{mooij2020joint}, the adjusted weight is $\smash{w' = w/\sqrt{1 + w^2}} < 1$.
Thus, for any $w \neq 0$, we have
\begin{align*}
    \lvert \mathrm{Corr}[x_1, x_2] \rvert 
    = \frac{\lvert\cov{\noise{1}, \smash{w'}\noise{1} + \noise{2}}\rvert}{\sqrt{\var{\noise{1}}\var{\smash{w'}\noise{1} + \noise{2}}}}
    = \frac{\lvert\smash{w'}\rvert}{\sqrt{\smash{w'}^2 + 1}}
    % this step can be shown by starting from x/sqrt(x^2 + 1) < 1/sqrt(2) and then simplifying to obtain x < 1
    < \frac{1}{\sqrt{2}} 
    \approx 0.707 \, .
\end{align*}
% For $w=1$, $\lvert \mathrm{Corr}[x_1, x_2] = 1/\sqrt{2} \approx 0.577$.
This is the maximum correlation between neighbouring variables that any SCM can model under the proposed re-scaling when $\smash{\var{\noise{j}} = 1}$, since additional parents decrease the parent-child correlations.
By contrast, \ourss can model any level of correlation by sampling arbitrary values of $w_{i,j}$, while guaranteeing unit-variance observations $x_j$.
Intuitively, \ourss achieve this by standardizing $x_j$ after the exogenous noise $\noise{j}$ is added to the endogenous contributions of the parents $\pa{j}$, while weight scaling is done before $\noise{j}$ is added to $x_j$.

\paragraph{Scaling weights by the incoming variance}
\citet[][Section 5.1]{squires2022causal} sample the weights of linear SCMs as $w_{i,j} \sim \unif_{\pm}{[0.25, 1.0]}$. 
Given the initial edge weights, they propose adjusting the weights during the generative process by first estimating the total variance $\smash{\hat{\sigma}_j^2}$ that the parents of $x_j$ contribute to $x_j$ from samples drawn under an initial level of additive noise with $\var{\noise{j}} = 1$ and then re-scaling the weights as
\begin{align*}
    w_{i,j} \gets \frac{w_{i,j}}{\sqrt{2\hat{\sigma}_j^2}} \, .
\end{align*}
When using additive noise with $\var{\noise{j}} = 0.5$ to generate the actual samples,
this scaling results in $\var{x_j} = 1$ with a constant fraction of cause-explained variance $\cevf{x_i} = 0.5$.
In benchmarks, however, we may be interested in evaluating SCMs with arbitrary levels of cause-explained variance. \ourss allow this by construction. 
Contrary to \citet{squires2022causal}, \ourss scale the variables $x_j$ rather than the weights $w_{i,j}$ while leaving the exogenous noise $\noise{j}$ unchanged,
which enables modeling arbitrarily small or large levels of unexplained variation.

\begin{figure}
    \centering
    \includegraphics[width=0.5\linewidth]{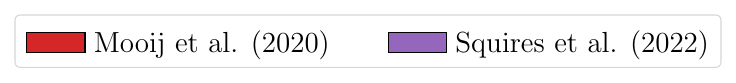}
        \vspace{-5pt}
    
    \begin{subfigure}[b]{0.45\textwidth}
     \centering
    \includegraphics[width=0.49\linewidth]{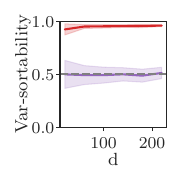}
    \includegraphics[width=0.49\linewidth]{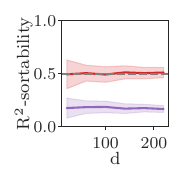}
    \caption{$\er{d, 2}$}
    \end{subfigure}
    \hspace{0.05\textwidth}
    \begin{subfigure}[b]{0.45\textwidth}
     \centering
     \includegraphics[width=0.49\linewidth]{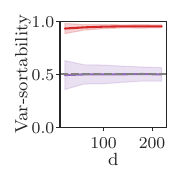}
    \includegraphics[width=0.49\linewidth]{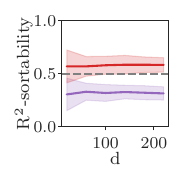}
    \caption{$\usf{d, 2}$}
    \end{subfigure}
    \caption{\textbf{Sortabilities for data generated according to heuristics that aim to remove artifacts}. Weight ranges were assumed as in the original papers: $w_{i,j} \sim \pm \unif_{[0.5, 1.5]}$ for \citet{mooij2020joint}, $w_{i,j} \sim \pm \unif_{[0.25, 1.0]}$
    for \citet{squires2022causal}. For every model, we evaluate \num{100} systems each using $\numsamples =$\num{1000} samples.
    Lines and shaded regions denote mean and standard deviation respectively.}
    \label{fig:heuristic-sortabilities}
    
\end{figure}

\subsection{Sortability Metrics}
\label{sec:sortabilities-def}
In this section, we describe the definition of a sortability metric as introduced by \cite{reisach2024scale}, which we use in Section \ref{sec:experiments}. 
For a function $\tau$, $\tau$-sortability assigns a scalar in $[0, 1]$ to the variables $\mathbf{x}$ and graph $\mathcal{G}$ (with weight matrix $W_\mathcal{G}$) as
\begin{equation*}
    \frac{\sum_{i=1}^{d} \sum_{\dgpath{s}{t} \in  W^i_{\mathcal{G}}} \text{incr}(\tau(\mathbf{x}, s), \tau(\mathbf{x}, t)) }{ \sum_{i=1}^{d} \sum_{\dgpath{s}{t} \in W^i_{\mathcal{G}}} 1} \quad \text{ where } \text{incr}(a, b) =
\begin{cases}
    1 & \text{if } a < b \\
    \frac{1}{2} & \text{if } a = b \\
    0 & \text{if } a > b
\end{cases}
\end{equation*}
and $ W^i_{\mathcal{G}}$ is the $i$-th power of the adjacency matrix $W_{\mathcal{G}}$ and $\dgpath{s}{t} \in W^i_{\mathcal{G}}$
if and only if at least one directed path from $v_s$ to $v_t$ of length $i$ exists in $\mathcal{G}$.
If $\tau(\mathbf{x}, t) = \var{x_t}$, we obtain $\varop$-sortability from \cite{reisach2021beware}. If 
$$\tau(\mathbf{x}, t) = R^2[x_t] = 1 - \frac{\var{x_t - \Expempty[x_t | \mathbf{x}_{\{1,...,d\}\backslash\{t\}}]}}{\var{x_t}} \, ,$$ 
we obtain \rtwo-sortability. 
Estimating $R^2[x_t]$ requires performing regression of $x_t$ onto $\mathbf{x}_{\{1,...,d\}\backslash\{t\}}$. 

\subsection{Structure Learning Algorithms}
\label{sec:csla-algos}

To complement the interpretation of the results in Section \ref{sec:experiments}, we provide some background on the structure learning methods we evaluate. 

\paragraph{\notears \citep{zheng2018dags}}
\notears uses continuous optimization to minimize the regularized mean-squared error (MSE) between the the variables modeled by a linear SCM and the observations, while enforcing a differentiable acyclicity constraint. 
The objective function of \notears is given by $F(\mathbf{W}) = ||\mathbf{X} - \mathbf{X}\mathbf{W}||^2_F / 2\numsamples + \lambda ||\mathbf{W}||_1 $, where $||\cdot||_F$ and $||\cdot||_1$ are a Frobenius and $\ell_1$ norm respectively. When the objective is minimized, weights below a fixed threshold are set to zero.

\paragraph{\avici \citep{lorch2022amortized}}
\avici is an amortized variational inference method that approximates the posterior distribution over causal structures given a dataset through a pretrained inference model.
The variational approximation of \avici uses a fully-factored product of Bernoulli distributions for every possible graph edge. The inference model is a neural network that predict the variational parameters of the Bernoulli distributions by minimizing the expected forward KL divergence between the true posterior and the approximation.
To train the inference model, \avici can be optimized on any training distribution of (synthetic) dataset-graph pairs.
\citet{lorch2022amortized} publish the pretrained parameters of inference models trained on standardized SCMs with linear and nonlinear mechanisms, which we evaluate in this work.

\paragraph{SortnRegress methods (\citealp{reisach2021beware, reisach2024scale})}
The \sortregress methods order the vertices by a chosen statistic and sparsely regress every node on all of its predecessors in the obtained order. 
They use Lasso regression with the Bayesian Information Criterion to learn the regression function for a given variable. \varsortregress uses estimated marginal variances as the sorting criterion. \rtwosortregress uses \rtwo coefficient of determination estimated after performing a regression of every variable onto all remaining variables. 
\randsortregress orders the vertices randomly.

\section{Experimental Setup}
\label{sec:csl-setup}

\subsection{Data}
\label{ssec:experiment-data-gen}

\paragraph{Causal mechanisms}
We consider systems with additive noise, where
\begin{equation*}
    f_i(\mathbf{x}, \noise{i}) = h_i(\mathbf{x}) + \noise{i},
\end{equation*}
for a chosen function $h_i$.
The \linear systems used in this experiments have causal mechanisms as defined in \cref{eq:linear_scm}.
To model nonlinear systems, we use smooth nonlinear functional mechanisms as used by \cite{lorch2022amortized}. 
Specifically, the function $h_i$ that models the relationship between $x_i$ and its parents is sampled from a Gaussian Process
\begin{equation*}
    h_i \sim \mathcal{GP}(0, k_i) \, ,
\end{equation*}
where $k$ is a squared exponential kernel $k_i(\mathbf{x}, \mathbf{x}') = c_i^2\exp\left(-||\mathbf{x} - \mathbf{x}'||^2_2 / 2l_i^2\right)$ with output and length scales $c_i$ and $l_i$ respectively. 
We can approximately express the function sample $h_i$ analytically using random Fourier features \citep{rahimi2007random} by sampling
\begin{equation*}
    h_i(\mathbf{x}) = c_i\sqrt{\tfrac{2}{M}}\sum_{j=1}^M\alpha^{(i)}\cos\left(\tfrac{\omega^{(i)}\cdot \mathbf{x}}{l_i} + \delta^{(i)}\right)
\end{equation*}
where $\alpha^{(i)} \sim \mathcal{N}(0, 1)$, $\omega^{(i)} \sim \mathcal{N}(0, \mathbf{I})$, and $\delta^{(i)} \sim \unif{[0, 2\pi]}$. 
In this work, we use $M=100$.

\paragraph{Generating a random model}
Following prior work (Section \ref{sec:background}), we sample random systems in any simulation performed in this work by first drawing a graph $\mathcal{G}$ from the specified random graph distribution. 
Given the graph $\mathcal{G}$, we sample function parameters of the structural mechanisms over $\mathcal{G}$. 
For linear systems, we sample $w_{i, j} \sim \unif_{\pm}{}[a, b]$, where $a, b$ are fixed, \iid for every graph edge. 
Similarly, for nonlinear systems, for every graph vertex, we draw the length scales $l_{i} \sim \unif[a_1, b_1]$ and output scales $c_{i} \sim \unif[a_2, b_2]$ with predefined $a_1, b_1, a_2, b_2$.

\paragraph{Sampling data from a model}
Given a graph $\mathcal{G}$, noise distribution $\distnoisejoint$, and a set of functions $\{f_1, ... f_\numvars\}$,
we sample $\numsamples$ datapoints from an \ref{eq:scm}
by traversing $\mathcal{G}$ in a topological ordering.
For every vertex $v_i$, we draw a noise sample $\noise{i} \sim \distnoise{i}^\numsamples$. 
The sample for $x_i$ is then deterministically computed by $f_i$ from the exogenous $\noise{i}$ and the parents of $x_i$.
To sample from a \ref{eq:standardized_scm}, we draw a dataset from an SCM and standardize it. 
To sample from an \ref{eq:iscm}, we use Algorithm \ref{alg:iscm}.

\subsection{Experiment Configurations}
\label{ssec:experiment-configuration}

\paragraph{Sortability} 
For Figures \ref{fig:r2_sortability_er2}, \ref{fig:r2_sortability_er4}, and \ref{fig:r2_sortability_vs_degree_er} we generate \erdosrenyi graphs \rebuttal{$\er{d, k}$ \citep{erdos1959}, with $d$ denoting the graph size and $k$ the expected node degree.}
For Figures \ref{fig:r2_sortability_sf2}, \ref{fig:r2_sortability_sf4}, and \ref{fig:r2_sortability_vs_degree_sf} we generate undirected scale-free graphs \rebuttal{$\usf{d, k}$ \citep{barabasi1999emergence}, where $d$ is the graph size and $k$ the number of outgoing edges generated for each vertex.}
Then, we orient the edges in the graph according to a random topological ordering. 
We do not sample directed scale-free graphs initially to avoid high sortability by in-degree, which may confound the results.

For all four figures, we generate \linear systems with weights sampled from three possible distributions $w_{i, j} \sim \unif_{\pm}{}[0.3, 1.8]$, $w_{i, j} \sim \unif_{\pm}{}[0.5, 2.0]$ or $w_{i, j} \sim \unif_{\pm}{}[1.3, 3.0]$ and noise sampled from $\noise{i} \sim \mathcal{N}(0, 1)$. 
For every model configuration, we sample \num{100} systems and $\numsamples =$\num{1000} data points each. 
To create Figures \ref{fig:r2-sortability} and \ref{fig:r2-sortability-appendix} we sampled graphs of sizes $\{20, 60, 100, 140, 180, 220 \}$. \rebuttal{To obtain Figure \ref{fig:r2-sortability-vs-degree}, we sampled graphs with $k \in \{4, 8, 12, 16, 20 \}$.}

\paragraph{Structure Learning (Section \ref{ssec:experimentas-structure-learning})}

For Figures \ref{fig:results-benchmark-main} and  \ref{fig:results-benchmark-appendix1}, we sample \linear systems with weights $w_{i, j} \sim \unif_{\pm}{}[0.5, 2.0]$. 
Following \citet{lorch2022amortized}, \nonlinear mechanisms have length scales $l_{i} \sim \unif[7.0, 10.0]$ and output scales $c_{i} \sim \unif[10.0, 20.0]$.
Both mechanisms are defined in Appendix \ref{ssec:experiment-data-gen}.
For Figures \ref{fig:results-benchmark-appendix2} and \ref{fig:results-benchmark-appendix3}, we generate \linear systems with weights $w_{i, j} \sim \unif_{\pm}{}[0.3, 0.8]$ and $w_{i, j} \sim \unif_{\pm}{}[1.3, 3.0]$.
For all four figures, we sample random $\er{20, 2}$ and $\er{100, 2}$ graphs with noise $\noise{i} \sim \mathcal{N}(0,1)$. For every model configuration, we sample \num{20} systems and $\numsamples = 1000$ data points each.

\paragraph{Noise Transfer}
For Figure \ref{fig:results-induced-noise-main} (top), we sample SCMs, standardized SCMs, and iSCMs with exactly the same underlying graph and weights sampled from $w_{i, j} \sim \unif_{\pm}{}[0.5, 2.0]$. The noise variables are drawn from $\noise{i} \sim \mathcal{N}(0, 1)$. 
Then, for every triple of SCM, standardized SCM, and \ours that shares a graph and weights, 
we create two more SCMs with the same marginal variances as the SCM, but with the noise variances of the implied models of the standardized SCM and \ours, respectively. 
Appendix \ref{ssec:adjusting-variances} provides a motivation and detailed explanation of this procedure. 
Figure \ref{fig:results-induced-noise-main} (top) shows the performance of \notears on the original SCMs and the two SCMs with transferred noise.

For Figure \ref{fig:results-induced-noise-main} (bottom), we sample multiple instances of standardized SCMs, and iSCMs with weights drawn from $w_{i, j} \sim \unif_{\pm}{}[0.5, 2.0]$ and noise from $\noise{i} \sim \mathcal{N}(0, 1)$. 
For every model instance, we approximate the density of the inverse of their implied noise variances using kernel density estimation. 
The figure shows the mean and standard deviation of the p.d.f. values over \num{100} systems.
For both figures, we use $\er{100, 2}$ graphs.

\subsection{Methods}
\label{sec:algo-setup}

\paragraph{\notears \citep{zheng2018dags}}
To run \notears, we use the original implementation provided by the authors of \cite{zheng2018dags} (Apache-2.0 license). Before benchmarking \notears, we run a hyperparameter search to calibrate the weight penalty ($\lambda$) and threshold on held-out instances of each data generation method. 
The hyperparameters can be found in Appendix \ref{sec:csl-hyperparams}.

\paragraph{\avici \citep{lorch2022amortized}}
To evaluate \avici, we use the code and model checkpoints provided by the authors of the method (MIT license). 
Specifically, we use the model trained on linear data to benchmark the method on \linear systems and the model trained on nonlinear data to benchmark on \nonlinear systems. 
We score an edge as predicted if the probability prediction by \avici is greater than \num{0.5}.
Since the parameters are pretrained, the method has otherwise no tuneable hyperparameters. 

\paragraph{Sortabilities and \sortregress methods (\citealp{reisach2021beware, reisach2024scale})}
To compute the sortability metrics and run the \sortregress baselines, we use the \texttt{CausalDisco} library (BSD-3-Clause license) created by the authors of the method. 
The algorithms require no tuneable hyperparameters. 

\paragraph{\golem \citep{ng2020role}} 
For \golemev, we tune $\lambda_1$ (sparsity penalty coefficient), $\lambda_2$ (acyclicity penalty coefficient) and the threshold for zeroing weights. 
For \golemnv, we tune the same hyperparameters as for \golemev. We do not initialize the model with the solution returned by \golemev, as done in the original paper, since we want to evaluate a method that does not assume equal noise variances at any point. Not initializing with the \golemev weights is consistent with the benchmarking approach of \citet{reisach2021beware}. We use the implementation of the original work \citep{ng2020role}. 

\paragraph{\pc Algorithm}
For linear data, we use a Gaussian conditional independence test. For nonlinear data, we use the Hilbert-Schmidt Independence Criterion (HSIC) gamma test. We treat the test significance level as a hyperparameter that we tune. We use the implementation by the Causal Discovery Toolbox \citep{kalainathan2020causal}.

\paragraph{\ges}
GES uses the linear Gaussian BIC score function and does not require hyperparameter tuning. We use the implementation by the Causal Discovery Toolbox \citep{kalainathan2020causal}.

\paragraph{\rebuttal{\cam}}\rebuttal{ \cam estimates a causal ordering using maximum likelihood and then performs sparse nonlinear regression using splines on the possible parents in this ordering.
We use the implementation from the \texttt{dodiscover} library (MIT license) and include the preliminary neighbor search option to make the algorithm scale to large graphs. We tune the cutoff value $\alpha$ for variable selection with hypothesis testing over regression coefficients, and the number and order of splines
to use for the feature function.
}

\paragraph{\rebuttal{\lingam}}\rebuttal{ 
\lingam uses independent component analysis, an algorithm for source separation, to find a causal ordering, which is identifiable in linear systems if the additive noise in an SCM is non-Gaussian. 
We use the implementation from the \texttt{cdt} (Causal Discovery Toolbox) library (MIT license).
}

\subsection{Hyperparameter Selection}
\label{sec:csl-hyperparams}

For all algorithms that require hyperparameter tuning, we perform the search on separate, held-out systems that follow the same configurations as the ones we present in our final experimental results.
We run the algorithms $20$ times per configuration and choose the median $\operatorname{F1}$ score as the criterion for selecting the best hyperparameters.

To run \notears, 
we need to specify the regularisation strength $\lambda$ and a weight threshold $\eta$ for thresholding the final weights for graph structure prediction. 
To select these hyperparameters, we run a parameter search with $\lambda \in \{0.0, 0.05, 0.1, 0.15, 0.2, 0.25, 0.3\}$ and three possible values of the weight threshold $\{0.1, 0.2, 0.3\}$. Table \ref{tab:notears-params-all} presents all final hyperparameter configurations for \notears.
For some hyperparameter configurations, \num{1} in \num{20} runs experienced numerical issues caused by the acyclicity constraint. 
However, this never occurs for the selected, optimal hyperparameters, neither when performing the hyperparameter search nor when running the reported experiments.

To run the \pc algorithm, 
one needs to choose a test significance level $\alpha$. During the hyperparamter search we consider $\alpha \in \{0.01, 0.001, 0.0001\}$. Table \ref{tab:pc-params-all} presents all final hyperparameter configurations for the \pc algorithm.

To run \golemev and \golemnv we need to tune sparsity penalty coefficient $\lambda_1$, acyclicity penalty coefficient $\lambda_2$ and the weight threshold $\eta$. We consider $\lambda_1 \in \{0.02, 0.002, 0.0002\}$, $\lambda_2 \in \{2.0, 5.0, 8.0\}$ and $\eta \in \{0.1, 0.2, 0.3\}$. \Cref{tab:golemev-params-all,tab:golemnv-params-all} present the best configurations.

\rebuttal{To run \cam we need to tune the cutoff value $\alpha \in \{0.05, 0.10, 0.15\}$ for
variable selection with hypothesis testing over regression coefficients and the number and order of splines
to use for the feature function for which we consider sets $\{5, 10\}$ and $\{2, 3\}$ respectively. \Cref{tab:cam-params-all} presents the best configurations.}

\begin{table}
    \caption{\textbf{\notears hyperparameters for all experiments.} Final settings for the regularization strength $\lambda$ and the weight threshold $\eta$ after hyperparameter tuning on the respective models and data-generating processes together with the F1 (median) validation scores achieved by \notears.}
    \vspace{10pt}
    \label{tab:notears-params-all}

\begin{subtable}[t]{\textwidth}
    \centering
    \caption{$\er{20, 2}$ DAGs, \linear mechanisms}
    \begin{tabular}{llcccc}
    \toprule
     Weight Distribution & Model &  $\lambda$ &  $\eta$ & F1 (median) \\
    \midrule
    $\unif_{\pm}{[0.3, 0.8]}$ &                    SCM &    0.05 &              0.20 &       0.97\\
    $\unif_{\pm}{[0.3, 0.8]}$ &       Standardized SCM &    0.15 &              0.10 &       0.59\\
    $\unif_{\pm}{[0.3, 0.8]}$ &                   iSCM &    0.15 &              0.10 &       0.57\\
    \midrule
    $\unif_{\pm}{[0.5, 2.0]}$ &                    SCM &    0.00 &              0.30 &       0.98\\
    $\unif_{\pm}{[0.5, 2.0]}$ &       Standardized SCM &    0.15 &              0.20 &       0.30 \\
    $\unif_{\pm}{[0.5, 2.0]}$ &                   iSCM &    0.15 &              0.10 &       0.50\\
    \midrule
    $\unif_{\pm}{[1.3, 3.0]}$ &                    SCM &    0.05 &              0.30 &       0.98\\
    $\unif_{\pm}{[1.3, 3.0]}$ &       Standardized SCM &    0.25 &              0.10 &       0.24\\
    $\unif_{\pm}{[1.3, 3.0]}$ &                   iSCM &    0.20 &              0.10 &       0.40\\
    \bottomrule
    \end{tabular}
    \vspace{10pt}
    \label{tab:notears-linear-small-graphs}
\end{subtable}

\begin{subtable}[t]{\textwidth}
    \centering
    \caption{$\er{100, 2}$ DAGs, \linear mechanisms}
    \begin{tabular}{llcccc}
    \toprule
     Weight Distribution & Model &  $\lambda$ &  $\eta$ & F1 (median) \\
    \midrule
    $\unif_{\pm}{[0.3, 0.8]}$ &                    SCM &    0.10 &              0.10 &       0.99\\
    $\unif_{\pm}{[0.3, 0.8]}$ &       Standardized SCM &    0.10 &              0.10 &       0.83\\
    $\unif_{\pm}{[0.3, 0.8]}$ &                   iSCM &    0.10 &              0.10 &       0.84\\
    \midrule
    $\unif_{\pm}{[0.5, 2.0]}$ &                    SCM &    0.05 &              0.30 &       0.94\\
    $\unif_{\pm}{[0.5, 2.0]}$ &       Standardized SCM &    0.15 &              0.10 &       0.47\\
    $\unif_{\pm}{[0.5, 2.0]}$ &                   iSCM &    0.15 &              0.10 &       0.76\\
    \midrule
    $\unif_{\pm}{[1.3, 3.0]}$ &                    SCM &    0.10 &              0.30 &       0.82\\
    $\unif_{\pm}{[1.3, 3.0]}$ &       Standardized SCM &    0.20 &              0.10 &       0.30\\
    $\unif_{\pm}{[1.3, 3.0]}$ &                   iSCM &    0.15 &              0.10 &       0.70\\
    \bottomrule
    \end{tabular}
    \vspace{10pt}
    \label{tab:notears-linear-big-graphs}
\end{subtable}

\begin{subtable}[t]{\textwidth}
    \centering
    \caption{$\er{20, 2}$ DAGs, \nonlinear mechanisms}
    \begin{tabular}{lccc}
    \toprule
     Model &  $\lambda$ &  $\eta$ & F1 (median) \\
    \midrule
                       SCM &    0.15 &              0.30 &       0.58 \\
          Standardized SCM &    0.15 &              0.10 &       0.33 \\
                      iSCM &    0.15 &              0.20 &       0.42 \\
    \bottomrule
    \end{tabular}
    \vspace{10pt}
    \label{tab:notears-nonlinear-small-graphs}
\end{subtable}

\begin{subtable}[t]{\textwidth}
    \centering
    \caption{$\er{ 100, 2}$ DAGs, \nonlinear mechanisms}
    \begin{tabular}{lccc}
    \toprule
     Model &  $\lambda$ &  $\eta$ & F1 (median) \\
    \midrule
                       SCM &    0.30 &              0.30 &       0.50 \\
         Standardized SCM &    0.15 &              0.10 &       0.43 \\
                      iSCM &    0.15 &              0.10 &       0.61 \\
    \bottomrule
    \end{tabular}
    \vspace{10pt}
    \label{tab:notears-nonlinear-big-graphs}
\end{subtable}

\begin{subtable}[t]{\textwidth}
    \centering
    \caption{Noise transfer experiment: $\er{100, 2}$ DAGs, \linear mechanisms $w_{ij} \sim \unif_{\pm}{[0.5, 2.0]}$ 
    % original SCM were sampled according to $w_{ij} \sim \unif_{\pm}{[0.5, 2.0]}$.
    }
    \begin{tabular}{llccc}
    \toprule
     Model &  $\lambda$ &  $\eta$ & F1 (median) \\
    \midrule
                        Original &    0.05 &              0.30 &       0.96 \\
           Noise from standardized SCM &    0.10 &              0.30 &       0.72\\
                       Noise from iSCM &    0.05 &              0.30 &       0.82\\
    
    \bottomrule
    \end{tabular}
    \label{tab:notears-linear-noise-transfer}
\end{subtable}
\end{table}

\begin{table}
    \caption{\textbf{\golemev hyperparameters for all experiments.} Final settings for the sparsity penalty coefficient $\lambda_1$, acyclicity penalty coefficient $\lambda_2$ and the weight threshold $\eta$ after hyperparameter tuning on the respective models and data-generating processes together with the F1 (median) validation scores achieved by \golemev.}
    \vspace{10pt}
    \label{tab:golemev-params-all}

\begin{subtable}[t]{\textwidth}
    \centering
    \caption{$\er{20, 2}$ DAGs, \linear mechanisms}
\begin{tabular}{llrrrrr}
\toprule
 Weight Distribution & Model &  $\lambda_1$ &  $\lambda_2$ &  $\eta$ &  F1 (median) \\
\midrule
$\unif_{\pm}{[0.5, 2.0]}$ &                    SCM &     0.002 &     5.00 &              0.30 &       1.00\\
$\unif_{\pm}{[0.5, 2.0]}$ &       Standardized SCM &     0.020 &     8.00 &              0.10 &       0.15\\
$\unif_{\pm}{[0.5, 2.0]}$ &                   iSCM &     0.020 &     2.00 &              0.10 &       0.36\\
\midrule
$\unif_{\pm}{[1.3, 3.0]}$ &                    SCM &     0.002 &     5.00 &              0.30 &       1.00\\
$\unif_{\pm}{[1.3, 3.0]}$ &       Standardized SCM &     0.020 &     8.00 &              0.10 &       0.12\\
$\unif_{\pm}{[1.3, 3.0]}$ &                   iSCM &     0.020 &     5.00 &              0.10 &       0.34\\
\midrule
$\unif_{\pm}{[0.3, 0.8]}$ &                    SCM &     0.020 &     2.00 &              0.10 &       1.00\\
$\unif_{\pm}{[0.3, 0.8]}$ &       Standardized SCM &     0.020 &     5.00 &              0.10 &       0.33\\
$\unif_{\pm}{[0.3, 0.8]}$ &                   iSCM &     0.020 &     5.00 &              0.10 &       0.36\\
\bottomrule
\end{tabular}

    \vspace{10pt}
    \label{tab:golemev-linear-small-graphs}
\end{subtable}

\begin{subtable}[t]{\textwidth}
    \centering
    \caption{$\er{100, 2}$ DAGs, \linear mechanisms}
    \begin{tabular}{llrrrrr}
\toprule
 Weight Distribution & Model &  $\lambda_1$ & $\lambda_2$ &  $\eta$ &  F1 (median)\\
\midrule
$\unif_{\pm}{[0.5, 2.0]}$ &                    SCM &    0.020 &    2.00 &             0.20 &      1.00\\
$\unif_{\pm}{[0.5, 2.0]}$ &       Standardized SCM &    0.020 &    8.00 &             0.10 &      0.13\\
$\unif_{\pm}{[0.5, 2.0]}$ &                   iSCM &    0.020 &    5.00 &             0.10 &      0.24\\
\midrule
$\unif_{\pm}{[1.3, 3.0]}$ &                    SCM &    0.020 &    8.00 &             0.30 &      0.90\\
$\unif_{\pm}{[1.3, 3.0]}$ &       Standardized SCM &    0.020 &    5.00 &             0.10 &      0.08\\
$\unif_{\pm}{[1.3, 3.0]}$ &                   iSCM &    0.020 &    5.00 &             0.10 &      0.19\\
\midrule
$\unif_{\pm}{[0.3, 0.8]}$ &                    SCM &    0.020 &    2.00 &             0.20 &      1.00\\
$\unif_{\pm}{[0.3, 0.8]}$ &       Standardized SCM &    0.020 &    5.00 &             0.10 &      0.30\\
$\unif_{\pm}{[0.3, 0.8]}$ &                   iSCM &    0.020 &    2.00 &             0.10 &      0.40\\
\bottomrule
\end{tabular}

    \vspace{10pt}
    \label{tab:golemev-linear-big-graphs}
\end{subtable}

\begin{subtable}[t]{\textwidth}
    \centering
    \caption{$\er{20, 2}$ DAGs, \nonlinear mechanisms}
\begin{tabular}{lrrrrr}
\toprule
Model &  $\lambda_1$ & $\lambda_2$ &  $\eta$ &  F1 (median)\\
\midrule
                   SCM &    0.020 &    8.00 &             0.30 &      0.39\\
      Standardized SCM &    0.002 &    8.00 &             0.10 &      0.20\\
                  iSCM &    0.020 &    2.00 &             0.10 &      0.25\\
\bottomrule
\end{tabular}

    \vspace{10pt}
    \label{tab:golemev-nonlinear-small-graphs}
\end{subtable}

\begin{subtable}[t]{\textwidth}
    \centering
    \caption{$\er{ 100, 2}$ DAGs, \nonlinear mechanisms}
\begin{tabular}{lrrrrr}
\toprule
Model &  $\lambda_1$ & $\lambda_2$ &  $\eta$ &  F1 (median)\\
\midrule
                   SCM &    0.020 &    8.00 &             0.10 &      0.27\\
      Standardized SCM &    0.020 &    8.00 &             0.10 &      0.14\\
                  iSCM &    0.020 &    5.00 &             0.10 &      0.14\\
\bottomrule
\end{tabular}

    \vspace{10pt}
    \label{tab:golemev-nonlinear-big-graphs}
\end{subtable}

\end{table}

\begin{table}
    \caption{\textbf{\golemnv hyperparameters for all experiments.} Final settings for the sparsity penalty coefficient $\lambda_1$, acyclicity penalty coefficient $\lambda_2$ and the weight threshold $\eta$ after hyperparameter tuning on the respective models and data-generating processes together with the F1 (median) validation scores achieved by \golemnv.}
    \vspace{10pt}
    \label{tab:golemnv-params-all}

\begin{subtable}[t]{\textwidth}
    \centering
    \caption{$\er{20, 2}$ DAGs, \linear mechanisms}
\begin{tabular}{llrrrrr}
\toprule
 Weight Distribution & Model &  $\lambda_1$ & $\lambda_2$ &  $\eta$ &  F1 (median)\\
\midrule
$\unif_{\pm}{[0.5, 2.0]}$ &                    SCM &     0.0002 &     2.00 &              0.20 &       0.16\\
$\unif_{\pm}{[0.5, 2.0]}$ &       Standardized SCM &     0.0200 &     2.00 &              0.10 &       0.38\\
$\unif_{\pm}{[0.5, 2.0]}$ &                   iSCM &     0.0200 &     2.00 &              0.20 &       0.45\\
\midrule

$\unif_{\pm}{[1.3, 3.0]}$ &                    SCM &     0.0002 &     2.00 &              0.10 &       0.20\\
$\unif_{\pm}{[1.3, 3.0]}$ &       Standardized SCM &     0.0002 &     5.00 &              0.10 &       0.37\\
$\unif_{\pm}{[1.3, 3.0]}$ &                   iSCM &     0.0200 &     2.00 &              0.20 &       0.37\\
\midrule

$\unif_{\pm}{[0.3, 0.8]}$ &                    SCM &     0.0020 &     5.00 &              0.20 &       0.13\\
$\unif_{\pm}{[0.3, 0.8]}$ &       Standardized SCM &     0.0200 &     2.00 &              0.20 &       0.55\\
$\unif_{\pm}{[0.3, 0.8]}$ &                   iSCM &     0.0200 &     2.00 &              0.10 &       0.58\\
\bottomrule
\end{tabular}

    \vspace{10pt}
    \label{tab:golemnv-linear-small-graphs}
\end{subtable}

\begin{subtable}[t]{\textwidth}
    \centering
    \caption{$\er{100, 2}$ DAGs, \linear mechanisms}
\begin{tabular}{llrrrrr}
\toprule
 Weight Distribution & Model &  $\lambda_1$ & $\lambda_2$ &  $\eta$ &  F1 (median)\\
\midrule
$\unif_{\pm}{[0.5, 2.0]}$ &                    SCM &    0.002 &    5.00 &             0.20 &      0.10\\
$\unif_{\pm}{[0.5, 2.0]}$ &       Standardized SCM &    0.020 &    2.00 &             0.10 &      0.32\\
$\unif_{\pm}{[0.5, 2.0]}$ &                   iSCM &    0.020 &    2.00 &             0.10 &      0.51\\
\midrule

$\unif_{\pm}{[1.3, 3.0]}$ &                    SCM &    0.002 &    2.00 &             0.10 &      0.21\\
$\unif_{\pm}{[1.3, 3.0]}$ &       Standardized SCM &    0.002 &    5.00 &             0.10 &      0.18\\
$\unif_{\pm}{[1.3, 3.0]}$ &                   iSCM &    0.020 &    2.00 &             0.10 &      0.46\\
\midrule
$\unif_{\pm}{[0.3, 0.8]}$ &                    SCM &    0.020 &    2.00 &             0.10 &      0.18\\
$\unif_{\pm}{[0.3, 0.8]}$ &       Standardized SCM &    0.020 &    2.00 &             0.20 &      0.65\\
$\unif_{\pm}{[0.3, 0.8]}$ &                   iSCM &    0.020 &    2.00 &             0.10 &      0.67\\
\bottomrule
\end{tabular}

    \vspace{10pt}
    \label{tab:golemnv-linear-big-graphs}
\end{subtable}

\begin{subtable}[t]{\textwidth}
    \centering
    \caption{$\er{20, 2}$ DAGs, \nonlinear mechanisms}
\begin{tabular}{lrrrrr}
\toprule
Model &  $\lambda_1$ & $\lambda_2$ &  $\eta$ &  F1 (median)\\
\midrule
                   SCM &    0.002 &    8.00 &             0.20 &      0.07\\
      Standardized SCM &    0.020 &    2.00 &             0.20 &      0.30\\
                  iSCM &    0.020 &    2.00 &             0.10 &      0.41\\
\bottomrule
\end{tabular}

    \vspace{10pt}
    \label{tab:golemnv-nonlinear-small-graphs}
\end{subtable}

\begin{subtable}[t]{\textwidth}
    \centering
    \caption{$\er{ 100, 2}$ DAGs, \nonlinear mechanisms}
    \begin{tabular}{lrrrrr}
\toprule
Model &  $\lambda_1$ & $\lambda_2$ &  $\eta$ &  F1 (median)\\
\midrule
                   SCM &    0.002 &    5.00 &             0.20 &      0.07\\
      Standardized SCM &    0.020 &    2.00 &             0.10 &      0.24\\
                  iSCM &    0.020 &    2.00 &             0.10 &      0.36\\
\bottomrule
\end{tabular}

    \vspace{10pt}
    \label{tab:golemnv-nonlinear-big-graphs}
\end{subtable}

\end{table}

\begin{table}
    \caption{\textbf{\pc hyperparameters for all experiments.} Final settings for the significance level $\alpha$ after hyperparameter tuning on the respective models and data-generating processes together with the F1 (median) validation scores achieved by \pc.}
    \vspace{10pt}
    \label{tab:pc-params-all}

\begin{subtable}[t]{\textwidth}
    \centering
    \caption{$\er{20, 2}$ DAGs, \linear mechanisms}
    \begin{tabular}{llcccc}
    \toprule
     Weight Distribution & Model &  $\alpha$ & F1 (median) \\
    \midrule
    $\unif_{\pm}{[0.3, 0.8]}$ &                    SCM &                 0.01 &       0.71\\
    $\unif_{\pm}{[0.3, 0.8]}$ &       Standardized SCM &                 0.01 &       0.70\\
    $\unif_{\pm}{[0.3, 0.8]}$ &                   iSCM &                 0.01 &       0.72\\
    \midrule
    $\unif_{\pm}{[0.5, 2.0]}$ &                    SCM &                 0.01 &       0.47\\
    $\unif_{\pm}{[0.5, 2.0]}$ &       Standardized SCM &                 0.01 &       0.46\\
    $\unif_{\pm}{[0.5, 2.0]}$ &                   iSCM &                 0.01 &       0.58\\
    \midrule
    $\unif_{\pm}{[1.3, 3.0]}$ &                    SCM &                 0.01 &       0.35\\
    $\unif_{\pm}{[1.3, 3.0]}$ &       Standardized SCM &                 0.01 &       0.38\\
    $\unif_{\pm}{[1.3, 3.0]}$ &                   iSCM &                 0.01 &       0.48\\
    \bottomrule
    \end{tabular}
    \vspace{10pt}
    \label{tab:pc-linear-small-graphs}
\end{subtable}

\begin{subtable}[t]{\textwidth}
    \centering
    \caption{$\er{100, 2}$ DAGs, \linear mechanisms}
    \begin{tabular}{llcccc}
    \toprule
     Weight Distribution & Model &   $\alpha$ & F1 (median) \\
    \midrule
    $\unif_{\pm}{[0.3, 0.8]}$ &                    SCM &                 0.01 &       0.82\\
    $\unif_{\pm}{[0.3, 0.8]}$ &       Standardized SCM &                 0.01 &       0.85\\
    $\unif_{\pm}{[0.3, 0.8]}$ &                   iSCM &                 0.01 &       0.86\\
    \midrule
    $\unif_{\pm}{[0.5, 2.0]}$ &                    SCM &                 0.01 &       0.62\\
    $\unif_{\pm}{[0.5, 2.0]}$ &       Standardized SCM &                 0.01 &       0.57\\
    $\unif_{\pm}{[0.5, 2.0]}$ &                   iSCM &                 0.01 &       0.79\\
    \midrule
    $\unif_{\pm}{[1.3, 3.0]}$ &                    SCM &                 0.01 &       0.42\\
    $\unif_{\pm}{[1.3, 3.0]}$ &       Standardized SCM &                 0.01 &       0.43\\
    $\unif_{\pm}{[1.3, 3.0]}$ &                   iSCM &                 0.01 &       0.71\\
    \bottomrule
    \end{tabular}
    \vspace{10pt}
    \label{tab:pc-linear-big-graphs}
\end{subtable}

\begin{subtable}[t]{\textwidth}
    \centering
    \caption{$\er{20, 2}$ DAGs, \nonlinear mechanisms}
    \begin{tabular}{lccc}
    \toprule
     Model &   $\alpha$ & F1 (median) \\
    \midrule
                       SCM &                0.01 &       0.53 \\
          Standardized SCM &                0.01 &       0.54 \\
                      iSCM &                0.01 &       0.65 \\
    \bottomrule
    \end{tabular}
    \vspace{10pt}
    \label{tab:pc-nonlinear-small-graphs}
\end{subtable}

\begin{subtable}[t]{\textwidth}
    \centering
    \caption{$\er{100, 2}$ DAGs, \nonlinear mechanisms}
    \begin{tabular}{lccc}
    \toprule
     Model &   $\alpha$ & F1 (median) \\
    \midrule
                       SCM &                 0.01 &       0.53 \\
          Standardized SCM &                 0.01 &       0.63 \\
                      iSCM &                 0.01 &       0.68 \\
    \bottomrule
    \end{tabular}
    \vspace{10pt}
    \label{tab:pc-nonlinear-big-graphs}
\end{subtable}
\end{table}

\begin{table}
    \caption{\textbf{\cam hyperparameters for all experiments.} Final settings for the cutoff value $\alpha$ for variable selection with hypothesis testing over regression coefficients, the number and order of splines to use for the feature function, together with the F1 (median) validation scores achieved by \cam.}
    \vspace{10pt}
    \label{tab:cam-params-all}

\begin{subtable}[t]{\textwidth}
    \centering
    \caption{$\er{20, 2}$ DAGs, \linear mechanisms}
    \begin{tabular}{llcccc}
    \toprule
Weight Distribution & Model & $\alpha$ & Number of Splines & Spline Order & F1 (median) \\
\midrule
$\unif_{\pm}{[0.3, 0.8]}$ & SCM & 0.05 & 5 & 3 & 0.49 \\
$\unif_{\pm}{[0.3, 0.8]}$ & Stand. SCM & 0.05 & 5 & 2 & 0.46 \\
$\unif_{\pm}{[0.3, 0.8]}$ & iSCM & 0.05 & 5 & 3 & 0.57 \\
\midrule
$\unif_{\pm}{[0.5, 2.0]}$ & SCM & 0.10 & 10 & 3 & 0.31 \\
$\unif_{\pm}{[0.5, 2.0]}$ & Stand. SCM & 0.10 & 10 & 2 & 0.23 \\
$\unif_{\pm}{[0.5, 2.0]}$ & iSCM & 0.10 & 5 & 2 & 0.53 \\
\midrule
$\unif_{\pm}{[1.3, 3.0]}$ & SCM & 0.05 & 10 & 2 & 0.24 \\
$\unif_{\pm}{[1.3, 3.0]}$ & Stand. SCM & 0.05 & 10 & 3 & 0.27 \\
$\unif_{\pm}{[1.3, 3.0]}$ & iSCM & 0.05 & 5 & 2 & 0.42 \\

\bottomrule

    \end{tabular}
    \vspace{10pt}
    \label{tab:cam-linear-small-graphs}
\end{subtable}

\begin{subtable}[t]{\textwidth}
    \centering
    \caption{$\er{100, 2}$ DAGs, \linear mechanisms}
    \begin{tabular}{llcccc}
    \toprule
Weight Distribution & Model & $\alpha$ & Number of Splines & Spline Order & F1 (median) \\
\midrule
$\unif_{\pm}{[0.3, 0.8]}$ & SCM & 0.05 & 10 & 3 & 0.54\\
$\unif_{\pm}{[0.3, 0.8]}$ & Stand. SCM & 0.05 & 10 & 2 & 0.57 \\
$\unif_{\pm}{[0.3, 0.8]}$ & iSCM & 0.05 & 5 & 3 & 0.61 \\
\midrule
$\unif_{\pm}{[0.5, 2.0]}$ & SCM & 0.05 & 10 & 2 & 0.39 \\
$\unif_{\pm}{[0.5, 2.0]}$ & Stand. SCM & 0.05 & 5 & 2 & 0.39 \\
$\unif_{\pm}{[0.5, 2.0]}$ & iSCM & 0.05 & 5 & 3 & 0.62 \\
\midrule
$\unif_{\pm}{[1.3, 3.0]}$ & SCM & 0.05 & 5 & 3 & 0.27 \\
$\unif_{\pm}{[1.3, 3.0]}$ & Stand. SCM & 0.05 & 10 & 3 & 0.26 \\
$\unif_{\pm}{[1.3, 3.0]}$ & iSCM & 0.05 & 5 & 3 & 0.61 \\
\bottomrule
    \end{tabular}
    \vspace{10pt}
    \label{tab:cam-linear-big-graphs}
\end{subtable}

\begin{subtable}[t]{\textwidth}
    \centering
    \caption{$\er{20, 2}$ DAGs, \nonlinear mechanisms}
    \begin{tabular}{lcccc}
\toprule
Model & $\alpha$ & Number of Splines & Spline Order & F1 (median) \\
\midrule
SCM & 0.05 & 10 & 2 & 0.50 \\
Standardized SCM & 0.05 & 5 & 2 & 0.52 \\
iSCM & 0.05 & 5 & 2 & 0.57 \\
\bottomrule
\end{tabular}

    \vspace{10pt}
    \label{tab:cam-nonlinear-small-graphs}
\end{subtable}

\begin{subtable}[t]{\textwidth}
    \centering
    \caption{$\er{100, 2}$ DAGs, \nonlinear mechanisms}
    \begin{tabular}{lcccc}
\toprule
Model & $\alpha$ & Number of Splines & Spline Order & F1 (median)  \\
\midrule
SCM & 0.05 & 10 & 3 & 0.50 \\
Standardized SCM & 0.05 & 10 & 2 & 0.51 \\
iSCM & 0.05 & 10 & 3 & 0.57 \\
\bottomrule
\end{tabular}

    \vspace{10pt}
    \label{tab:cam-nonlinear-big-graphs}
\end{subtable}
\end{table}

\subsection{Transferring Noise Variances While Keeping $\varop$-Sortability Unchanged} 
\label{ssec:adjusting-variances}

\cite{reisach2021beware} show that post-hoc standardization of SCM data strongly impairs the performance of \notears.
When comparing the performance of \notears between data sampled from \ourss and standardized SCMs, there are at least two factors that can affect the performance of \notears, low $\varop$-sortability and the violation of the equal noise variance assumption. 
Our experiments in Figure \ref{fig:results-induced-noise-main} of Section \ref{sec:experiments} aim at isolating the effect of the latter.
Specifically, we investigate whether \notears performs better on $\varop$-sortable datasets that have the noise scale patterns implied when assuming SCMs generated the data---when in fact the data was sampled from \ourss or standardized SCMs. 
To achieve this, we ensure that the $\varop$-sortability metrics of the data sampled from the models is the same, here close to \num{1}. 

Given two linear SCMs $S^a$ and $S^b$ with the same underlying graph $\mathcal{G}$, our goal is to construct a system $S^t$ with the same marginal variances as $S^a$ (condition 1) and the same noise variances as $S^b$ (condition 2). For this task to be well-defined, we assume that the noise variances of the root variables in $S^a$ and $S^b$ are the same.
The first step in constructing $S^t$ is to copy the noise variances from $S^b$, so that for every $i \in \numvarset$.
\begin{equation*}
    {\sigma_i^2}^{t} := {\sigma_i^2}^b \, .
\end{equation*}
This satisfies condition 2. Given this, we define $x_i^t$ as
\begin{equation*}
    x_i^t := \sqrt{\frac{\var{x_i^a} - {\sigma_i^2}^{b}}{\var{{\mathbf{w}^a_i}^T\pa{i}^t}}}{\mathbf{w}^a_i}^T\pa{i}^t + \noise{i}^t \, ,
\end{equation*}
where $\noise{i}^t$ has variance ${\sigma_i^2}^{t}$.
By construction, the condition of $S^t$ sharing the noise variances with $S^b$ and the marginal variances with $S^a$ is fulfilled for the root variables.
For all the remaining variables, it holds that
\begin{equation*}
\begin{aligned}
    \var{\tr{x_i}} &= \varop \left [ \sqrt{\frac{\var{x_i^a} - {\sigma_i^2}^{b}}{\var{{\mathbf{w}^a_i}^T\pa{i}^t}}}{\mathbf{w}^a_i}^T\pa{i}^t + \noise{i}^t \right ] \\
    &=\frac{\var{x_i^a} - {\sigma_i^2}^{b}}{\var{{\mathbf{w}^a_i}^T\pa{i}^t}}\var{{\mathbf{w}^a_i}^T\pa{i}^t} + {\sigma_i^2}^b \\
    &=\var{x_i^a} \, ,
\end{aligned}
\end{equation*}
which satisfies condition 1.
Since the systems $S^t$ and $S^a$ have the same marginal variances, they have the same $\varop$-sortability.
In the noise transfer experiment of Figure \ref{fig:results-induced-noise-main}, we transfer the noise variances from the implied models of \ourss and standardized SCMs. To obtain the noise variances in the implied models, we divide the original noise variances (equal to \num{1}) by the estimated marginal variances of the corresponding variable before standardization, which we estimate from $\numsamples=1000$ datapoints. For \ours, this corresponds to an empirical statistics of Equation \eqref{eq:implied_weights_noise_variances}.

\subsection{Compute Resources}\label{ssec:compute-resources}
Our experiments were run on an internal cluster. 
All experiments in this work were computed using CPUs with \num{3}GB of memory per CPU, with an exception of the \avici runs on graphs with \num{100} vertices, which used \num{12}GB per CPU. 
The data generation takes less than a few minutes on a single CPU, with the exception of the sortability results (Section \ref{ssec:experimentas-rtwo}). 
For the sortability results, it takes around \num{30} minutes to generate the datasets for a single graph specification across all weight supports and graph sizes.
This is due to a bigger number of configurations and repetitions than in the other experiments.
For a single graph specification and across all weight supports and graph sizes, it takes around \num{6} hours to compute the sortability statistics on a single CPU.
All benchmarked methods take no longer than a few minutes per small graph ($d=20$) and no longer than half an hour per big graph ($d=100$).
% Running one execution of \notears (\avici) takes approximately \num{2}min (\num{1}min) for $d=20$ and \num{30}min (\num{2}min) for $d=100$, respectively.
The \sortregress baselines run in less than \num{1}min per graph.

\section{Additional Experimental Results}
\label{sec:additional-results}

\subsection{Structure Learning}
\label{ssec:csl-performance}
Figure \ref{fig:results-benchmark-appendix1} summarizes the structural Hamming distance (SHD) between the predicted and true graphs for the same datasets and algorithms as in Figure \ref{fig:results-benchmark-main}.

In Figures \ref{fig:results-benchmark-appendix2} and \ref{fig:results-benchmark-appendix3}, we present the F1 scores and SHD attained by the structure learning algorithms on data of \linear \ourss, SCMs, and standardized SCMs, across different weight distribution supports and graph sizes.
We find that the difference in performance of \notears on data sampled from \ours and standardized SCMs is larger for larger weight magnitudes and for bigger graphs.
For smaller weights, the difference in the mean F1 score of \notears between the two standardization approaches is smaller, which is in line with our proposed explanation about the shifts of the implied noise variance distribution in Section \ref{ssec:experimentas-structure-learning}. 

In \cref{fig:results-benchmark-appendix2}, we also find that when weight magnitudes are below $1$, \rtwosortregress performs  similarly for both standardized SCMs and \ourss. We also observe this for \avici. 
Meanwhile, for larger weights with support extending above $1$, these algorithms achieve significantly higher F1 scores on standardized SCMs. 
This suggests that our condition of $\abs{w_{i,j}}>1$ for all edges $(v_i,v_j)$ in the statement of \cref{th:three_part_ident}, concerning the identifiability of  linear standardized SCMs, may have a more fundamental practical significance, rather than being merely an \artefact of the analysis. 

\rebuttal{
In Figure \ref{fig:results-benchmark-nongaussian}, we report results for when the additive noise in the ground-truth SCMs is non-Gaussian. 
In this setting, the causal graphs of SCMs are identifiable from observational data (see Section \ref{sec:background}).
Here, we also benchmark \lingam \citep{shimizu2006linear}, which is designed for this setting.
While \lingam performs very well as expected, it performs significantly worse on standardized SCMs, possibly because independent component analysis suffers in practice under the very low noise scales implied by post-hoc standardization.
This would be in line with our discussion in Section \ref{ssec:experimentas-structure-learning}.
}

\begin{figure}
    \vspacefiguretop
    \centering

  \hspace{20pt}\includegraphics[width=0.72\linewidth]{figures/legend/legend_all.pdf}
        \vspace{-5pt}
    
        \begin{tabular}{cc}
            \yaxislabel{0pt}{15pt}{$\er{20, 2}$}{-10pt} &
        \begin{subfigure}{0.94\linewidth}
        \includegraphics[width=0.5\linewidth, trim=0 6pt 7pt 0 , clip]{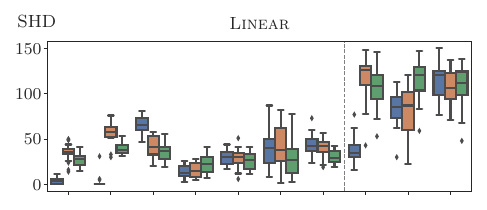}
            \hspace{5pt}\includegraphics[width=00.49\linewidth, trim=4pt 6pt 7pt 6pt , clip]{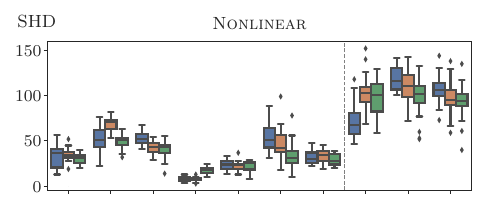}
       \end{subfigure}     
            \\
            \yaxislabel{0pt}{45pt}{$\er{100, 2}$}{-10pt} &
            \begin{subfigure}{0.94\linewidth}\includegraphics[width=0.5\linewidth, trim=4pt 0 7pt 5pt , clip]{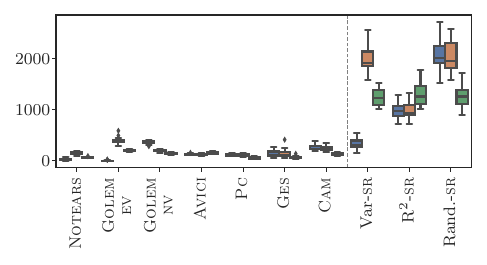}
            \hspace{5pt}\includegraphics[width=00.49\linewidth, trim=8pt 0 7pt 5pt , clip]{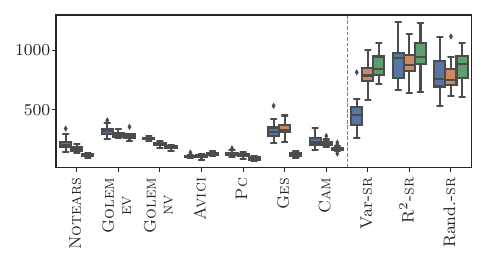}
            \end{subfigure}\\
        \end{tabular}
        \vspace{-6pt}
    % \end{subfigure}
    \caption{\textbf{SHD to the true causal graph for  \linear and \nonlinear mechanisms.} 
    Box plots show median and interquartile range (IQR). Whiskers extend to the largest value inside \num{1.5}$\times$IQR from the boxes.
    Left (right) column shows results for linear (nonlinear) causal mechanisms with additive noise $\noise{i} \sim \mathcal{N}(0, 1)$. \linear mechanisms have weights $w_{i, j} \sim \unif_{\pm}{}[0.5, 2.0]$.}
    \label{fig:results-benchmark-appendix1}
\end{figure}

\begin{figure}
    \vspacefiguretop
    \centering

  \hspace{20pt}\includegraphics[width=0.72\linewidth]{figures/legend/legend_all.pdf}
        \vspace{-5pt}
    
        \begin{tabular}{cc}
            \yaxislabel{0pt}{15pt}{$\er{20, 2}$}{-10pt} &
        \begin{subfigure}{0.96\linewidth}
        \includegraphics[width=0.52\linewidth, trim=0 6pt 7pt 0 , clip]{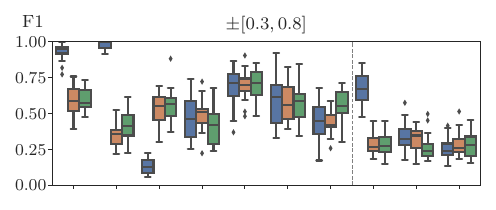}
            \hspace{5pt}\includegraphics[width=00.462\linewidth, trim=25pt 6pt 7pt 0 , clip]{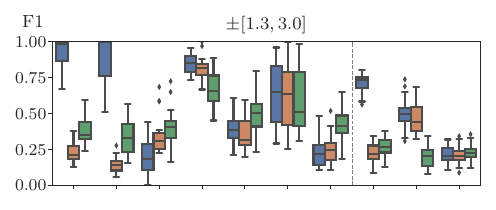}
       \end{subfigure}     
            \\
            \yaxislabel{0pt}{45pt}{$\er{100, 2}$}{-10pt} &
            \begin{subfigure}{0.96\linewidth}\includegraphics[width=0.52\linewidth, trim=0 0 7pt 6pt , clip]{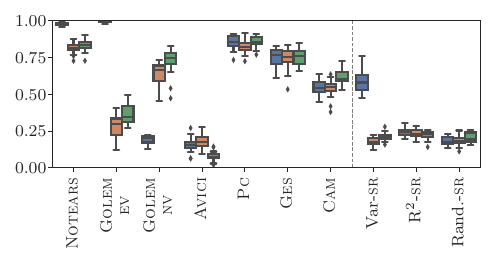}
            \hspace{5pt}\includegraphics[width=00.462\linewidth, trim=25pt 0 7pt 6pt , clip]{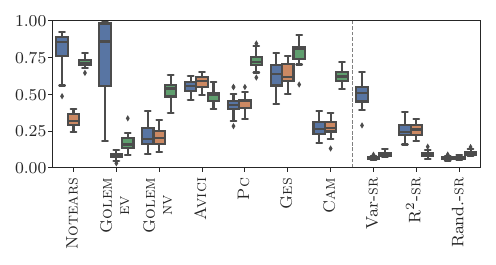}
                    \vspace{-12pt}
    % \end{subfigure}
    \caption{\textbf{F1 scores}}
    \label{fig:results-benchmark-appendix2}
            \end{subfigure}\\
        \end{tabular}
    \vspace*{15pt}

    \begin{tabular}{cc}
            \yaxislabel{0pt}{15pt}{$\er{20, 2}$}{-10pt} &
        \begin{subfigure}{0.94\linewidth}
        \includegraphics[width=0.5\linewidth, trim=0 6pt 7pt 0 , clip]{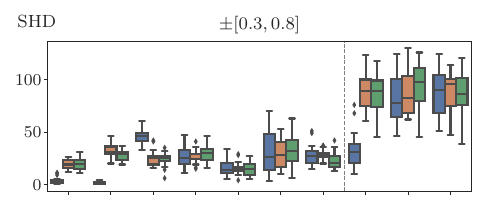}
            \hspace{5pt}\includegraphics[width=00.49\linewidth, trim=4pt 6pt 7pt 6pt , clip]{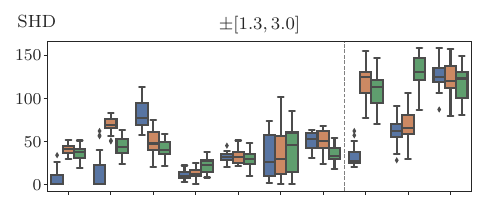}
       \end{subfigure}     
            \\
            \yaxislabel{0pt}{45pt}{$\er{100, 2}$}{-10pt} &
            \begin{subfigure}{0.94\linewidth}\includegraphics[width=0.5\linewidth, trim=4pt 0 7pt 5pt , clip]{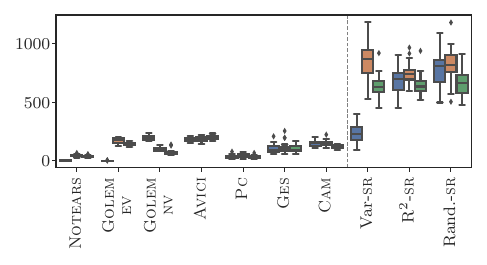}
            \hspace{5pt}\includegraphics[width=00.49\linewidth, trim=8pt 0 7pt 5pt , clip]{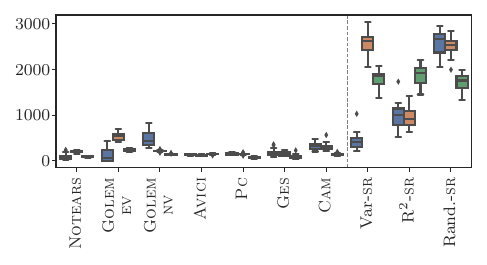}
            \caption{\textbf{SHD to the true causal graph}
    }\label{fig:results-benchmark-appendix3}
    \vspace{0pt}
            \end{subfigure}\\
        \end{tabular}

    \vspace*{10pt}

    \caption{\textbf{Structure learning results for different \linear weight ranges.} 
    Results for \linear causal mechanisms with additive noise $\noise{i} \sim \mathcal{N}(0, 1)$ and weights sampled uniformly from support indicated above each column. 
    Box plots show median and interquartile range (IQR). Whiskers extend to the largest value inside \num{1.5}$\times$IQR from the boxes.
    For every model, we sample \num{20} systems and $\numsamples =$\num{1000} data points each.
    }\label{fig:results-benchmark-appendix-full}
\end{figure}

\subsection{\rtwo-Sortability}
\label{ssec:app-r2-sortability}

Figure \ref{fig:r2-sortability-appendix} reports the \rtwo-sortability statistics across varying graph sizes and weight distributions but for the denser graphs ER($d$, $4$) and SF($d$, $4$).
We again observe \rtwo-sortability very close to \num{0.5} for datasets sampled from \ours and high degrees of \rtwo-sortability for data drawn from standardized SCMs. 
\rebuttal{Moreover, in Figure \ref{fig:r2-sortability-vs-degree}, we show the \rtwo-sortability for varying expected node degrees in the graph. Data sampled from \ourss remains close to not \rtwo-sortable for denser graphs drawn from the graph families considered here.}
We omit standard SCMs from the plots as the datasets of SCMs and their standardized versions have the same \rtwo-sortability, since the \rtwo coefficient is scale invariant.

\begin{figure}
    \centering
    \vspace{-10pt}
        % Legend
    \includegraphics[width=\linewidth]{figures/legend/legend_short.pdf}
    
    \begin{subfigure}{0.49\textwidth}
    \centering
    \includegraphics[width=\linewidth]{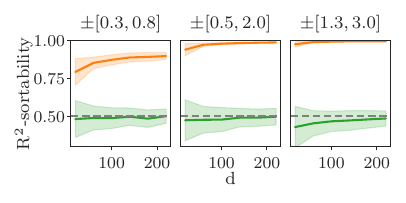}
    \caption{$\er{\numvars, 4}$}
    \label{fig:r2_sortability_er4}

    \end{subfigure}
    \begin{subfigure}{0.49\textwidth}
    \centering
    \includegraphics[width=\linewidth]{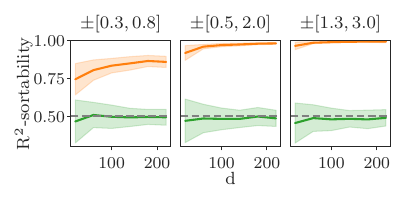}
    \caption{$\usf{\numvars, 4}$}
    \label{fig:r2_sortability_sf4}
    \end{subfigure}
    \caption{\textbf{\rtwo-sortability for different graph sizes.}
    Linear standardized SCMs and \ourss with $\noise{i} \sim \mathcal{N}(0, 1)$ and weights drawn from uniform distributions with supports given above each plot. 
    For every model, we sample \num{100} systems and $\numsamples =$\num{1000} data points each.
    Lines and shaded regions denote mean and standard deviation of \rtwo-sortability across runs.
    Datasets that satisfy \rtwo-sortability$\,=0.5$ (dashed) are not \rtwo-sortable. \looseness-1}
    \label{fig:r2-sortability-appendix}
\end{figure}

\begin{figure}
    \vspacefiguretop
    \centering
    \vspace{-5pt}
    \includegraphics[width=\linewidth]{figures/legend/legend_short.pdf}
    \vspace{-15pt}

    \begin{subfigure}{0.49\textwidth}
    \centering
    \hspace{-13pt}\includegraphics[width=\linewidth]{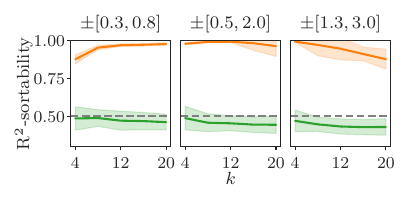}
    
    \vspace{-5pt}
    \caption{$\er{100, k}$}
    \label{fig:r2_sortability_vs_degree_er}

    \end{subfigure}
    \hfill
    \begin{subfigure}{0.49\textwidth}
    \centering
    \hspace{-13pt}\includegraphics[width=\linewidth]{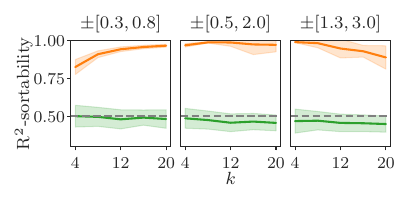}
    
    \vspace{-5pt}
    \caption{$\usf{100, k}$}
    \label{fig:r2_sortability_vs_degree_sf}
    \end{subfigure}
    \caption{\rebuttal{\textbf{\rtwo-sortability for different (expected) node degrees.}
    Linear standardized SCMs and \ourss with $\noise{i} \sim \mathcal{N}(0, 1)$ and weights drawn from uniform distributions with supports given above each plot. 
    For every model, we evaluate \num{100} systems and $\numsamples =$\num{1000} samples each.
    Lines and shaded regions denote mean and standard deviation.
    Datasets that satisfy \rtwo-sortability $=0.5$ (dashed) are not \rtwo-sortable.}
    \vspacefigurecaptionbottom
    }
    \label{fig:r2-sortability-vs-degree}
\end{figure}

\begin{figure}
    \vspacefiguretop
    \centering

  \hspace{20pt}\includegraphics[width=0.72\linewidth]{figures/legend/legend_all.pdf}
        \vspace{-5pt}
    \centering
        \begin{tabular}{c}
        %% below is for adding the oy label back
        % \begin{tabular}{cc}
            % \yaxislabel{0pt}{38pt}{$\er{20, 2}$}{-10pt} &
        \begin{subfigure}{0.94\linewidth}
        \centering
        \includegraphics[width=0.65\linewidth, trim=0 6pt 7pt 0 , clip]{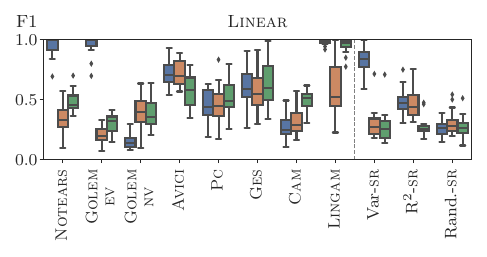}
       \end{subfigure}     
        \end{tabular}
        \vspace{-0pt}
    % \end{subfigure}
    \caption{
    \rebuttal{
    \textbf{Structure learning results for {\em non-Gaussian} noise distributions.} 
    Causal mechanisms have additive noise $\noise{i} \sim \operatorname{Unif}\smash{[-\sqrt{3}, \sqrt{3}]}$, which induces $\var{\noise{i}} = 1$, and \linear mechanisms with weights $w_{i, j} \sim \unif_{\pm}{}[0.5, 2.0]$.
    Graphs are sampled from $\er{20, 2}$. To obtain the results, we use the same hyperparameters as the ones we used to obtain the top-left panel of \cref{fig:results-benchmark-main}.
    Box plots show median and interquartile range (IQR). Whiskers extend to the largest value inside \num{1.5}$\times$IQR from the boxes.
    }}
    \label{fig:results-benchmark-nongaussian}
\end{figure}

\subsection{Implied Noise Scales}\label{sec:results-similar-implied-noise-scales}

Figure \ref{fig:inverse-noise-scales-dist} shows the inverse implied noise scales of standardized SCMs and \ourss for linear models with smaller weights magnitudes than in Figure \ref{fig:results-induced-noise-main} of the main text.
In this setting with smaller weights, the distributions of the implied noise scales of standardized SCMs and \ourss show significantly greater overlap than in Figure \ref{fig:results-induced-noise-main}.
Since the weights are smaller, the effect of the exploding marginal variances and thus collapsing implied noise scales is weaker in the SCMs.

In Figure \ref{fig:results-benchmark-appendix-full} (left), we evaluate the algorithms considered in Section \ref{sec:experiments} on these systems with smaller weights.
We see that, in this setting, \notears performs very similarly on standardized SCMs and \ourss, with \notears slightly outperforming on \ourss for bigger graphs, since we do not remove the growing variance problem completely even for weights of small magnitude. 
This is inline with our reasoning in Section \ref{ssec:experimentas-structure-learning}.

\begin{figure}
    \centering
    \includegraphics[width=\linewidth]{figures/legend/legend_short.pdf}    
    \includegraphics[width=0.3\linewidth,trim=0pt 0 0 0]{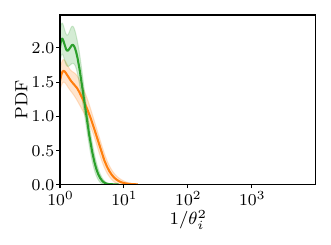}

    \caption{\textbf{Distribution over inverse implied noise scales in the implied SCMs} for $\er{100, 2}$ graphs with smaller weights $w_{i,j} \sim \pm \unif_{[0.3, 0.8]}$, estimated with kernel density estimation. Lines and shading denote mean and standard deviation respectively.}
    \label{fig:inverse-noise-scales-dist}
\end{figure}

\subsection{Covariance Matrices for Figure \ref{fig:figure1}}
Figure \ref{fig:full-covariance-chain} visualizes the full mean absolute covariance (correlation) matrices of the systems presented in Figure \ref{fig:figure1}.
The matrix shows that the pattern of increasing mean absolute covariance in standardized SCMs is not only a feature of neighboring nodes, but it also occurs for vertex pairs further apart, though less strongly. 
This is not the case for \ourss, where any two pairs of equally spaced vertices have equal covariances in expectation over the weight sampling distribution.

\begin{figure}
    \centering
    \includegraphics[width=\linewidth]{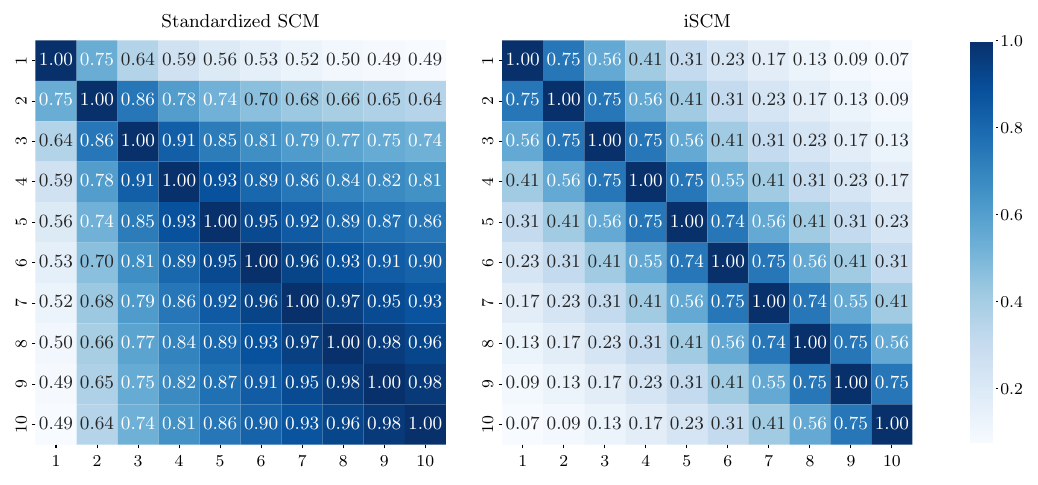}
    \caption{\textbf{Mean absolute covariance matrices for models in Figure \ref{fig:figure1}.} Linear standardized SCMs (left) and \ourss (right) with \num{10}-variable chain DAGs from $x_1$ to $x_{10}$ and weights $w_{i, j} \sim \unif_{\pm}{}[0.5, 2.0]$ and additive noise from $\mathcal{N}(0,1)$. Mean covariances are estimated from $\numsamples =$ \num{100000} datapoints and averaged over \num{100000} models. Since both models have unit marginal variances, covariance equals correlation.}
    \label{fig:full-covariance-chain}
\end{figure}

%%%%%%%%%%%%%%%%%%%%%%%%%%%%%%%%%%%%%%%%%%%%%%%%%%%%%%%%%%%%

\end{document}